\documentclass[11pt]{article}

\usepackage[preprint]{acl}

\usepackage{times}
\usepackage{latexsym}
\usepackage{amsthm}
\usepackage{amsmath, amssymb}
\usepackage{url}
\usepackage{graphicx}
\usepackage{subcaption}

\newtheorem{lemma}{Lemma}
\newtheorem{definition}{Definition}
\newtheorem{corollary}{Corollary}
\newtheorem{assumption}{Assumption}
\newtheorem{proposition}{Proposition}
\usepackage{thmtools}
\usepackage{thm-restate}
\usepackage{booktabs}
\usepackage[most]{tcolorbox}
\usepackage{array}
\usepackage{tabularx}
\usepackage{placeins}
\usepackage[T1]{fontenc}

\usepackage[utf8]{inputenc}

\usepackage{microtype}

\usepackage{inconsolata}

\title{Robust Reward Modeling for Large Language Models via Causal Decomposition}

\author{
  Yunsheng Lu$^{1}$\thanks{These authors contributed equally to this work.},
  Zijiang Yang$^{2}$\footnotemark[1],
  Licheng Pan$^{1}$,
  Zhixuan Chu$^{1}$\thanks{Corresponding author.}\\
  $^1$College of Computer Science and Technology, Zhejiang University \\
  $^2$Department of Statistics, University of Chicago \\
  \texttt{yunsglu@gmail.com}, \texttt{yangzj@uchicago@uchicago.edu}, \\
  \texttt{\{12521034, zhixuanchu\}@zju.edu.cn} \\
  \footnotetext[2]{Corresponding author.}
}

\begin{document}
\maketitle

\begin{abstract}
Reward models are central to aligning large language models, yet they often overfit to spurious cues such as response length and overly agreeable tone. Most prior work weakens these cues directly by penalizing or controlling specific artifacts, but it does not explicitly encourage the model to ground preferences in the prompt’s intent. We learn a decoder that maps a candidate answer to the latent intent embedding of the input. The reconstruction error is used as a signal to regularize the reward model training. We provide theoretical evidence that this signal emphasizes prompt-dependent information while suppressing prompt-independent shortcuts. Across math, helpfulness, and safety benchmarks, the decoder selects shorter and less sycophantic candidates with 87.7\% percent accuracy. Incorporating this signal into RM training in Gemma-2-2B-it and Gemma-2-9B-it increases RewardBench accuracy from 83.22\% to 86.83\%. For Best-of-N selection, our framework increases length-controlled win rates while producing shorter outputs, and remains robust to lengthening and mild off-topic drift in controlled rewrite tests.
\end{abstract}

\section{Introduction}
Reinforcement Learning from Human Feedback (RLHF) has become a widely adopted framework for aligning large language models (LLMs) with human preferences \citep{christiano2023deepreinforcementlearninghuman}. A central component of this framework is the reward model, which is typically trained on pairwise human preference data to approximate evaluative judgments of model outputs and guide reinforcement learning towards outputs better aligned with human expectations\citep{Ouyang}. 

However, recent work has revealed that reward models are susceptible to reward hacking, where models exploit imperfections in the learned reward function rather than genuinely aligning with human intent \citep{amodei2016concreteproblemsaisafety}. Reward hacking can arise from unintentional and prompt-irrelevant human preferences \citep{beyondrewardhacking}. For example, a preference for longer or sycophantic responses induces length bias\citep{stiennon2022learningsummarizehumanfeedback} and sycophancy bias\citep{sycophancyperez2022discoveringlanguagemodelbehaviors}. 

Early work focused on identifying specific spurious attributes and mitigating their impact on reward models. Length bias was addressed by decomposing the reward and suppressing the length-based bias signal during optimization~\citep{lengthbias-shen2023looselipssinkships}. Later, causal methods~\citep{pearl2009causality,10.1145/3444944} were introduced to handle general unintentional artifacts. Some approaches reduce reward hacking by eliminating the causal edge from spurious artifacts to reward models; for instance, RRM attenuates this effect via counter-artifact data augmentation \citep{RRM}. In contrast, methods like CROME strengthen the causal edge from context-related intentions by generating augmented training samples \citep{Crome}. However, these methods only rely on data augmentation rather than explicitly quantifying prompt intentions in responses. We instead estimate this signal and use it to strengthen the causal edge from prompt intention to the reward model.

Estimating how much a given response faithfully reflects the prompt intention is difficult. The intention is a latent and unobservable variable. To capture such hidden factors, representation learning methods are often employed, such as sparse autoencoders (SAEs)\citep{makhzani2014ksparseautoencoders} and variational autoencoders (VAEs)\citep{kingma2022autoencodingvariationalbayes}. It also requires disentangling meaningful alignment from incidental correlations and irrelevant attributes. Moreover, leveraging the prompt intention signal requires an effective mechanism to integrate it into reward model training.
\begin{figure*}[!t]
    \centering
    \includegraphics[width=13cm,height=5cm]{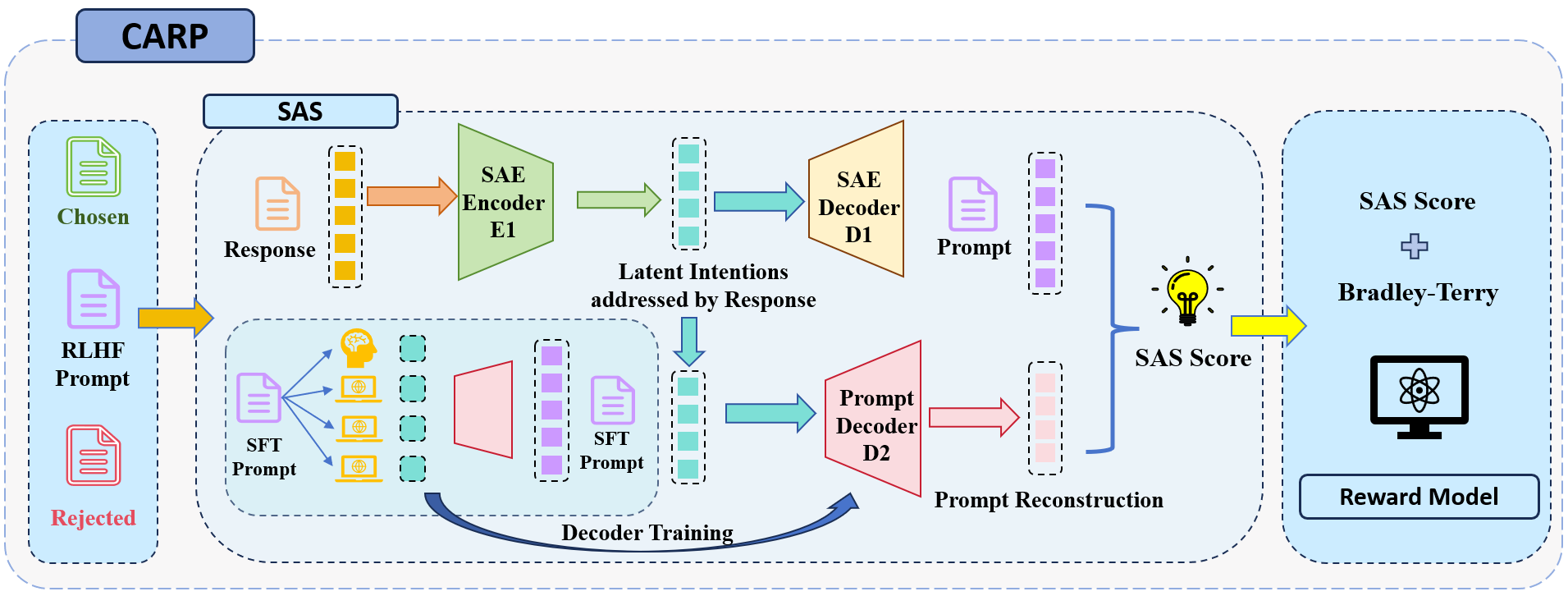}
    \caption{\textbf{CARP.} A prompt decoder is trained on multiple-response-to-one-prompt SFT data to suppress spurious signals. The resulting Semantic Alignment Score (SAS) is used as an additional signal in reward model training, incorporated into the loss function to strengthen the causal link between prompt intent and reward labels. This encourages the reward model to capture human preferences that are genuinely aligned with the prompt’s intent.
}
    \label{fig:CARP-framework}
\end{figure*}

To resolve these challenges, we frame reward model training within a causal graph to separate prompt-related intentions from context-free artifacts, develop a framework that quantifies how well a response realizes the latent prompt intention, and utilize it in training reward models. The pipeline is illustrated in Figure~\ref{fig:CARP-framework}. To summarize, the contributions of this paper are three-fold:\footnote{The code will be publically available upon publication.}:
\begin{itemize}
    \item We point out that existing alignment studies lack frameworks to quantify a response’s realization of prompt intention, particularly through causal manners.
    \item To address this, we construct a causal graph for reward model training, develop a causal alignment framework through response to prompt prediction (\textbf{CARP}) to quantify the extent of a response’s alignment with prompt intention through Semantic Alignment Score (\textbf{SAS}), and reinforce the causal effect of prompt intention in reward model training.
    \item We theoretically prove that SAS isolates prompt intention while compressing spurious artifacts. On RewardBench~\citep{malik2025rewardbench2advancingreward}, our SAS-regularized reward model improves accuracy by 3.6\% over the vanilla RM on the 9B model. 
    \item Downstream evaluations in Table~\ref{tab:BON-results} show that our model performs strongly as a ranking model in Best-of-N policy selection. Moreover, Table~\ref{tab:rewardbench2-2b-3rewrites-full} demonstrates that our model consistently prefers on-topic responses.
\end{itemize}

\section{Related Work}
\paragraph{Reward Hacking} The problem of reward hacking has become increasingly prominent with the growing adoption of RLHF \citep{amodei2016concreteproblemsaisafety,casper2023openproblemsfundamentallimitations,kaufmannDBLP:conf/pkdd/KaufmannBBHK23}. Models are likely to achieve high rewards without fulfilling the intended objectives\citep{pan2022effectsrewardmisspecificationmapping,weng2024rewardhack}. For example, reward models are easily hacked by length\cite{singhal2024longwaygoinvestigating}, sycophancy\cite{sycophancyperez2022discoveringlanguagemodelbehaviors}, concept\citep{zhou2024explorespuriouscorrelationsconcept}, and demography\citep{Salinas_2023}. Recent works employ model merging (WARP\citep{rame2024warpbenefitsweightaveraged} and WARM\citep{rame2024warmbenefitsweightaveraged}), and hacking reward decomposition\citep{chen2024odindisentangledrewardmitigates} to mitigate hacking in online RLHF.

\paragraph{Causal Solutions to Reward Hacking} On one hand, some researchers weaken the causal edge from spurious attributes. RATE employs a “rewrite-twice” strategy to correct the imperfections of counterfactuals\citep{RATE}. RRM trains robust reward models by augmenting the training distribution with counter-artifact examples\citep{RRM}. Causal-Debias explicitly represents spurious attributes and trains invariant predictors by minimizing the dependence between learned representations and such attributes\citep{Causal-Debias}. On the other hand, others enhance the causal relationship among intentional causal attributes. CROME applies causal data augmentation by intervening on causally relevant attributes to generate training samples, strengthening their influence on the reward model\citep{Crome}. 

\section{SAS-regularized Reward Model Training}
\label{SAS-regularized-Reward-Model-Training}

\subsection{Prompt-aware Causal Abstraction}
\begin{figure}[t]
    \centering
    \begin{subfigure}{0.45\textwidth}
        \centering
        \includegraphics[width=3cm,height=3cm]{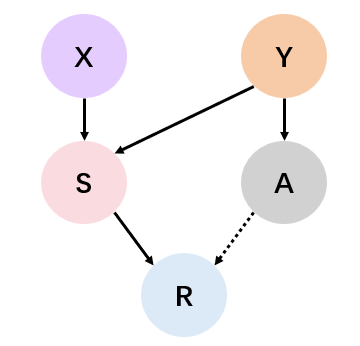}
        \caption{Traditional causal graph of reward model. X is the prompt. Y is the response. S is the contextual signal that depends on X and Y. A is the context-free artifact that only depends on Y. R is the reward model.}
        \label{fig:traditional-causal-graph}
    \end{subfigure}
    \hfill
    \begin{subfigure}{0.45\textwidth} 
        \centering
        \includegraphics[width=3cm,height=3cm]{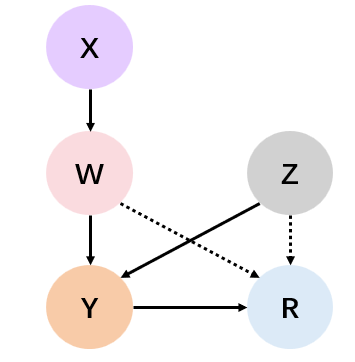}
        \caption{CARP causal graph of reward model. X is the prompt. W is the human intention embedded within the prompt. Z is the context-free artifact that is independent from W and X. Y is the response. R is the reward model. Our work aims to build and strengthen the edge from W to R.}
        \label{fig:my-causal-graph}
    \end{subfigure}

    \caption{Causal graphs of Reward model.}
    \label{fig:two-causal-graphs}
\end{figure}

Traditional methods typically build a causal graph as (Figure~\ref{fig:traditional-causal-graph}), constructing $S$ and $C$ as effects of $X$ and $Y$, focusing on mitigating the causal effect from $C$ to $R$ (\citep{RRM}). In contrast, we adopt an innovative modeling approach and formulate a DAG $\mathcal{G}$ to model the causal relationships (Figure~\ref{fig:my-causal-graph}). 

In $\mathcal{G}$, $X$ is the prompt, $Y$ is the response. $W\in \mathbb{R}^{d_w}$ is the latent human intention embedded within the prompt, which we assume to be the sufficient statistic that captures all human intentions from the prompt to generate the response. $Z\in\mathbb{R}^{d_z}$ is the latent artifact which we assume to be the sufficient statistic that captures all context-free causal factors that are necessary for generating a response, aside from $W$. We assume that $Z$ is independent from $W$ and $X$. $R\in\mathbb{R}$ is the reward model.

Unlike traditional methods, our objective is to assign higher rewards to responses that are more aligned with the prompt’s intention. Therefore, in our modeling, we employ anti-causal engineering to construct representations of latent $W$ and $Z$, while establishing and strengthening the causal edge from $W$ to $R$ via data augmentation. This encourages the reward model to preferentially capture responses aligned with the prompt’s intention, thereby mitigating reward hacking.

\paragraph{Setup}
Suppose that we have a dataset of $N$ prompts with $M$ responses each. For the $i^{th}$ prompt and its $j^{th}$ response:
\begin{itemize}
    \item \textbf{Prompt embedding}: $x_i \in \mathbb{R}^{d_x}$ \quad \textbullet ~ \textbf{Prompt intention}: $w_i = w(x_i) \in \mathbb{R}^{d_w}$
    \item \textbf{Artifacts}: $z_{i,j} \in \mathbb{R}^{d_z}$ 
    \item \textbf{Response embedding}: $y_{i,j} = f(w_i) + g(z_{i,j}) \in \mathbb{R}^{d}$ (Assume decomposed additivity)
    \item \textbf{Response SAE}: $\text{Encoder}(y_{i,j})= u_{ij}=\text{TopK}(Py_{i,j})$, where $P \in \mathbb{R}^{k \times d}$
    \item \textbf{Prompt Decoder}:
    $\text{Decoder}(u_{i,j})=Lu_{i,j}+b$,where $L \in \mathbb{R}^{d_{x} \times k}$
\end{itemize}

\subsection{Semantic Alignment Score (SAS)}~\label{section: SAS}
Our key intuition is that \emph{a decoder should be able to reconstruct the embedding of the prompt from the response representation if a response faithfully addresses the intent of its prompt.} Moreover, when multiple responses correspond to the same prompt, their shared components are more likely to capture the underlying intent, while spurious artifacts, such as verbosity or sycophancy, vary idiosyncratically and cancel out in expectation. We theoretically justified our ideas in Theorem~\ref{thm:1-High-Probability-Suppression-of-Artifacts} and Theorem~\ref{thm:prediction_error_precise_second_moment}. In practice, we train a prompt decoder that maps sparse response representations to their corresponding dense prompt embeddings. The training procedure consists of three stages: dataset preparation, representation extraction, and supervised decoder fitting.

\paragraph{Data Construction} We build a hybrid 20K prompt–response pairs from two SFT corpora: Smoltalk~\citep{smollm} for reasoning and code tasks and AlpacaFarm~\citep{alpacafarm} for daily dialogues, and augment each prompt with three completions from DeepSeek-V3.1-Base~\citep{deepseekai2025deepseekv3technicalreport}, LLaMA3-72B~\citep{grattafiori2024llama3herdmodels} and Qwen3-235B-A22B~\citep{yang2025qwen3technicalreport}. Thus, each prompt has four responses, balancing semantic overlap and stylistic diversity to support learning invariant causal patterns. In particular, Any LLaMA model used is governed by the Meta Llama 3 Community License and its Acceptable Use Policy, and we do not redistribute model weights or any proprietary outputs beyond what is permitted by the corresponding licenses.

\paragraph{Representation Extraction} For each response, we extract a sparse semantic representation using the a sparse autoencoder (SAE) pretrained on LLaMA-3-8B\footnote{We used https://huggingface.co/EleutherAI/sae-llama-3-8b-32x} with $\text{TopK}=192$ activation selection. The ablation study and experiments related to SAE are provided in Appendix~\ref{Complete Experimental Results}. This sparse vector serves as the input to the prompt decoder. The target output for the decoder is the last-token prompt embedding extracted from the 14th hidden layer of LLaMA-3-8B, which we treat as a stable and informative representation of the prompt’s semantics. 

\paragraph{Prompt Decoder Training} Now the training proceeds by minimizing the Mean Squared Error (MSE) between predicted and target embeddings:

\[
\mathcal{L}_{\text{pd}} =  \arg\min_{L,b} \frac{1}{NM} \sum_{i=1}^N \sum_{j=1}^M \|L u_{i,j} + b - x_i\|_2^2,
\]

where $N$ is the number of prompts and $M$ is the number of responses per prompt. Given a response $u$ and a prompt $x$,We define the corresponding \textbf{Semantic Alignment Score (SAS)} as the reconstruction error, so a \textbf{\textit{lower SAS value indicates better alignment}}:
\[
    \text{SAS}(u,x)=\|\hat{L} u+\hat{b}-x\|_2^2
\]

\subsection{Theoretical Analysis of SAS}\label{subsec:thm}
We show that, with high probability, the output of our prompt decoder depends primarily on $w$ and $x$, and is approximately independent from $z$. This implies that \textbf{SAS evaluates how well a response aligns with the prompt’s intent, compressing signals from artifacts.}  For large $N$ and $M$, Theorem~\ref{thm:1-High-Probability-Suppression-of-Artifacts} states that the decoder parameters approximate the ideal ones that are independent from artifacts $z$. Meanwhile, Theorem~\ref{thm:prediction_error_precise_second_moment} asserts that given a new sample response, the prompt decoder prediction is nearly independent from $z$. Theoretical support is provided below with formal proofs in Appendix~\ref{sec:appendix}.

\begin{definition}[Ideal Top-K Indices]
\label{def:ideal_topk}
The ideal case is that the decoder output \textbf{only contains $w$} and is \textbf{independent from $\mathbf{z}$}. For a given prompt intention $w_i$ and its corresponding signal $s_i = P f(w_i) \in \mathbb{R}^k$, the \textbf{ideal Top-K indices} are defined as:
\begin{equation}
J_{w_i} = \{j_1, j_2, \ldots, j_K\} \subset \{1, 2, \ldots, k\}
\end{equation}
where $j_1, j_2, \ldots, j_K$ are the indices corresponding to the $K$ largest absolute values in $s_i=Pf(w_i)$. That is:
\begin{equation}
|s_{i,j_1}| \geq |s_{i,j_2}| \geq \cdots \geq |s_{i,j_K}| \geq \max_{t \notin J_{w_i}} |s_{i,t}|
\end{equation}
Denote $I_{J_w}$ as the coordinate selection matrix corresponding to $J_w$, $I_{J_{\text{real}}}$ as the real coordinate selection matrix when choosing Top-K indices from $Py_{ij}$. Thus, we have:
\begin{align*}
    TopK(Py_{ij})&=I_{J_{\text{real}}}Py_{ij}\\
    TopK_{ideal}(Py_{ij})&=I_{J_w}Py_{ij}
\end{align*}
\end{definition}

\begin{definition}[Flip Event]
\label{def:flip_event}
Given a prompt $i$ with ideal signal $s_i = P f(w_i)$ and perturbation $\eta_{i,j} = g(z_{i,j})$, a \textbf{flip event} occurs when $\text{TopK}(P(f(w_i) + \eta_{i,j})) \neq J_{w_i}$.
\begin{equation}
p_{\text{flip}} = \Pr(\text{TopK}(P(f(w_i) + \eta_{i,j})) \neq J_{w_i})
\end{equation}
\end{definition}

\begin{definition}[Ideal Population Matrix]
The following matrices only depends on $w$ while independent form $z$.
\begin{align*}
\Sigma_{xu}^{(0)}&=\mathbb{E}\big[x(I_{J_w}s)^T\big], \,
\Sigma_{uu}^{(0)}=\mathbb{E}\big[(I_{J_w}s)(I_{J_w}s)^T\big]\\
L^{(0)} \;&=\; \Sigma_{xu}^{(0)} \bigl(\Sigma_{uu}^{(0)}\bigr)^{-1},\, b^{(0)}=\mathbb{E}[x]-L^{(0)}\mathbb{E}[I_{J_w}s]
\end{align*}  
\end{definition}
\begin{restatable}[High-Probability Artifacts Suppression in Decoder]{theorem}{HighProbabilitySuppressionofArtifacts}\label{thm:1-High-Probability-Suppression-of-Artifacts}
Under assumptions (1)--(5) stated below, if 
$NM \geq C \frac{\sigma^2}{\varepsilon^2}(d + k + \log(1/\eta))$, then with probability at least $1-\eta,~\exists C_1,C_2>0$, such that:
\begin{align*}
\|\widehat{L} - L^{(0)}\|_{\text{op}} &\leq C_1(\varepsilon + p_{\text{flip}}),\\
||\hat{b}-b^{(0)}||_2&\leq C_2(\varepsilon + p_{\text{flip}})
\end{align*}
\end{restatable}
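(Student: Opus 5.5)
The plan is to treat $(\widehat L,\hat b)$ as the ordinary least-squares solution of $x$ on the observed features $u=\mathrm{TopK}(Py)=I_{J_{\text{real}}}Py$. We then compare it with the ideal regression of $x$ on $u^{(0)}:=I_{J_w}s$, where $s=Pf(w)$. Centering removes the intercept, so the closed form is
\[
\widehat L=\widehat\Sigma_{xu}\widehat\Sigma_{uu}^{-1},\qquad \hat b=\bar x-\widehat L\bar u .
\]
Here the hatted quantities are empirical (centered) moments over the $NM$ pairs. The population analogue with the \emph{observed} features is $\Sigma_{xu}\Sigma_{uu}^{-1}$. We split the error by the triangle inequality:
\[
\|\widehat L-L^{(0)}\|_{\text{op}}\le \underbrace{\|\widehat L-\Sigma_{xu}\Sigma_{uu}^{-1}\|_{\text{op}}}_{\text{sampling}}+\underbrace{\|\Sigma_{xu}\Sigma_{uu}^{-1}-L^{(0)}\|_{\text{op}}}_{\text{population bias}} .
\]
We then show the first term is $O(\varepsilon)$ and the second is $O(p_{\text{flip}})$. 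The bound for $\hat b$ follows from the same two pieces together with concentration of $\bar x,\bar u$.

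\textbf{Sampling term.} I would take assumptions (1)--(5) to provide three things: sub-Gaussian $x$ and $u$ with proxy $\sigma^2$, a bounded $\|P\|$ and bounded $f,g$, and a lower bound $\lambda_{\min}(\Sigma_{uu}^{(0)})\ge\lambda>0$ on the ideal covariance. A standard matrix concentration argument (an $\epsilon$-net over the sphere plus Bernstein's inequality) on the $(d_x+k)$-dimensional joint vector then gives, with probability $1-\eta$,
\[
\|\widehat\Sigma_{xu}-\Sigma_{xu}\|_{\text{op}},\ \|\widehat\Sigma_{uu}-\Sigma_{uu}\|_{\text{op}}\le\varepsilon
\]
once $NM\ge C\sigma^2\varepsilon^{-2}(d+k+\log(1/\eta))$. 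This is exactly the stated sample condition. The $M$ responses per prompt share $x_i$ but have independent $z_{i,j}$; I would either assume the pairs are independent or concentrate first over the $N$ prompts and then within them. A perturbation identity for the inverse,
\[
\widehat\Sigma_{uu}^{-1}-\Sigma_{uu}^{-1}=\widehat\Sigma_{uu}^{-1}(\Sigma_{uu}-\widehat\Sigma_{uu})\Sigma_{uu}^{-1},
\]
together with $\lambda_{\min}(\widehat\Sigma_{uu})\ge\lambda/2$ for $\varepsilon\le\lambda/2$, then yields a sampling error of at most $C'\varepsilon$.

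\textbf{Population bias (main obstacle).} Write $u=I_{J_{\text{real}}}(s+P\eta)$ and decompose
\[
u-u^{(0)}=\mathbf 1\{\text{flip}\}\bigl(I_{J_{\text{real}}}-I_{J_w}\bigr)(s+P\eta)+I_{J_w}P\eta .
\]
The flip part has expected norm at most $p_{\text{flip}}\cdot B$ by boundedness, or $\sqrt{p_{\text{flip}}}$-type via Cauchy--Schwarz with sub-Gaussian tails. Here I would use an assumption giving a bounded second moment, so that $p_{\text{flip}}$ enters linearly.

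The hard part is the non-flip noise term $I_{J_w}P\eta$. Its cross-covariance with $x$ vanishes, because $z\perp(W,X)$ and (assuming) $\mathbb E g(z)=0$. However, it inflates $\Sigma_{uu}$ by $I_{J_w}P\,\mathrm{Cov}(g(z))P^TI_{J_w}$, which would bias $L$ toward attenuation. I expect assumptions (1)--(5) to include either small artifact variance (absorbed into $\varepsilon$) or a definition of $L^{(0)}$ that tolerates this. I would first try to absorb this term into $\varepsilon$ via an assumption $\|\mathrm{Cov}(g(z))\|\le\varepsilon$. If that is unavailable, I would redefine the comparison target.

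With these pieces, $\|\Sigma_{xu}-\Sigma^{(0)}_{xu}\|$ and $\|\Sigma_{uu}-\Sigma^{(0)}_{uu}\|$ are each $O(p_{\text{flip}}+\varepsilon)$. The same inverse-perturbation identity, using $\lambda$ and a bound on $\|\Sigma_{xu}^{(0)}\|$, then gives a population bias of $O(p_{\text{flip}}+\varepsilon)$.

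\textbf{Intercept.} We have
\[
\hat b-b^{(0)}=(\bar x-\mathbb Ex)-(\widehat L-L^{(0)})\bar u-L^{(0)}(\bar u-\mathbb Eu^{(0)}).
\]
Each of these terms is bounded by concentration ($\varepsilon$), the previous bound, and the mean-shift estimate $\|\mathbb Eu-\mathbb Eu^{(0)}\|=O(p_{\text{flip}})$. This gives the constant $C_2$.
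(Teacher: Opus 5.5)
Your plan follows the paper's route: split $\widehat L-L^{(0)}$ through $L^*=\Sigma_{xu}\Sigma_{uu}^{-1}$, control the sampling part by matrix concentration and the identity $\widehat\Sigma_{uu}^{-1}-\Sigma_{uu}^{-1}=\widehat\Sigma_{uu}^{-1}(\Sigma_{uu}-\widehat\Sigma_{uu})\Sigma_{uu}^{-1}$, and control the population part by perturbing $\Sigma^{(0)}_{xu}$ and $\Sigma^{(0)}_{uu}$.

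\textbf{The obstacle you flag is genuine, and the statement fails under (1)--(5).} The paper does not resolve this obstacle. Its Population Covariance Decomposition lemma handles $\Sigma_{xu}$ through $\mathbb{E}[u\mid w]$, where the zero-mean noise does drop out. It then asserts $\|\Sigma_{uu}-\Sigma^{(0)}_{uu}\|_{\mathrm{op}}\le C_u p_{\mathrm{flip}}$ ``similarly''. That assertion ignores exactly the no-flip term $\mathbb{E}[I_{J_w}P\eta\eta^\top P^\top I_{J_w}]$ you identified, which is of order $\sigma^2$ and not controlled by $p_{\mathrm{flip}}$.

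Here is a counterexample. Take $k=K=1$, $P=1$, $x=w=f(w)\sim N(0,\tau^2)$, and $g(z)=z\sim N(0,\sigma^2)$ independent of $w$. All five assumptions hold (the margin condition is vacuous), $p_{\mathrm{flip}}=0$, and $L^{(0)}=1$. Yet $\widehat L\to\tau^2/(\tau^2+\sigma^2)$ as $N\to\infty$, by errors-in-variables attenuation. The same attenuation appears on the selected coordinates for any $k>K$. So no $C_1$ independent of $\varepsilon$ can work, and your fallback is mandatory, not optional.

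Do not tie $\mathrm{Cov}(g(z))$ to $\varepsilon$, since that couples a population quantity to the sample size. Instead, do one of the following:
\begin{itemize}
\item Carry the term explicitly. By the sub-Gaussian assumption, $\|I_{J_w}P\,\mathrm{Cov}(g(z))P^\top I_{J_w}\|_{\mathrm{op}}\le K\sigma^2$, which gives a bound $C_1(\varepsilon+p_{\mathrm{flip}}+K\sigma^2)$.
\item Build that term into $\Sigma^{(0)}_{uu}$. This leaves $\Sigma^{(0)}_{xu}$ unchanged, and the resulting target still does not depend on the realized artifact.
\end{itemize}

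\textbf{The sampling bound does not follow from the stated condition on $NM$.} The $M$ responses of prompt $i$ share $x_i$ and $w_i$. So $\widehat\Sigma_{xu}=\frac1N\sum_i x_i\bar u_i^\top$ carries the between-prompt fluctuation $\frac1N\sum_i\bigl(x_i\mu_u(x_i)^\top-\Sigma_{xu}\bigr)$, which is of order $N^{-1/2}$ however large $M$ is; the same holds for $\widehat\Sigma_{uu}$. Concentrating over prompts first gives, up to constants and logarithms, $\sqrt{k/(NM)}+\sqrt{1/N}$. That is exactly where the paper's concentration lemma stops before simply declaring the result $\le\varepsilon$. You therefore need $N$ itself of order $\varepsilon^{-2}$, with constants coming from the moments of $x$ and $u$ rather than the artifact scale $\sigma$. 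Treating the $NM$ pairs as independent would contradict the data model.

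\textbf{Linearity in $p_{\mathrm{flip}}$ needs more than a bounded second moment.} Cauchy--Schwarz then gives only $p_{\mathrm{flip}}^{1/2}$, and the sub-Gaussian tails give $p_{\mathrm{flip}}$ only up to a $\log(1/p_{\mathrm{flip}})$ factor. Exact linearity needs either almost-sure boundedness or a moment bound conditional on a flip; (1)--(5) supply neither. The paper attempts the conditional bound, but it divides by $\Pr(|\Delta_{ijr}|\ge\delta/2)\le 1$ and concludes the quotient is at most the numerator, which is backwards.

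\textbf{The comparison target must match your centering.} Your centered closed form is the right one for a fitted intercept. Its population limit, however, is the centered regression $\mathrm{Cov}(x,I_{J_w}s)\,\mathrm{Cov}(I_{J_w}s)^{-1}$. The paper's $L^{(0)}$ uses uncentered moments, while $b^{(0)}$ uses the centered intercept formula. So your comparison closes only after reading $L^{(0)}$ as the centered version.
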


\begin{restatable}[Artifacts Suppression in Prediction]{theorem}{SuppressionOfArtifactsInPrediction}
\label{thm:prediction_error_precise_second_moment}
Under Assumptions (1)--(5) stated in Appendix~\ref{sec:appendix}, given a new sample $y=f(w)+g(z),~u_{\text{new}}=\text{TopK(Py)}$, then for any confidence parameter $\eta\in(0,1)$, with probability at least $1-\eta$ the following holds:
\begin{equation}\label{eq:precise-bound-second-moment}
\begin{split}
&\bigl\|\widehat{L}u_{\mathrm{new}}+\widehat{b} - (L^{(0)}I_{J_w}P f(w) + b^{(0)})\bigr\|_2 \\
&\le \widetilde C\Bigl((\varepsilon+p_{\mathrm{flip}})\,\|P\|_{\mathrm{op}}\frac{M_f}{\sqrt{\eta}}
\;+\; \sigma\sqrt{k + \log(1/\eta)}\Bigr),
\end{split}
\end{equation}
where $\sigma$ is the sub-Gaussian scale according to assumption~\ref{asm:subGaus} in Appendix~\ref{sec:appendix}, and $\widetilde C>0$ is a constant depending only on the constants appearing in Assumptions (1)--(5) and on operator norms of $L^{(0)}$ and $P_{J_w}$.
\end{restatable}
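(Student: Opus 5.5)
The plan is to reduce Theorem~\ref{thm:prediction_error_precise_second_moment} to Theorem~\ref{thm:1-High-Probability-Suppression-of-Artifacts} through a triangle-inequality decomposition around the ideal prediction. Write $u_{\mathrm{new}} = I_{J_{\mathrm{real}}}P(f(w)+g(z))$ and set $u^{(0)} = I_{J_w}Pf(w)$. We then split
\begin{align*}
\widehat L u_{\mathrm{new}}+\widehat b - (L^{(0)}u^{(0)}+b^{(0)})
&= (\widehat L - L^{(0)})u_{\mathrm{new}} + (\widehat b - b^{(0)}) \\
&\quad + L^{(0)}(I_{J_{\mathrm{real}}}-I_{J_w})Py + L^{(0)}I_{J_w}Pg(z).
\end{align*}
We bound the four terms separately. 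A union bound then allots probability $\eta/3$ to each random event: the estimation event of Theorem~\ref{thm:1-High-Probability-Suppression-of-Artifacts}, a Markov event, and a sub-Gaussian tail event. Splitting the confidence budget this way only changes constants, and these are absorbed into $\widetilde C$.

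For the first two terms, Theorem~\ref{thm:1-High-Probability-Suppression-of-Artifacts} gives $\|\widehat L-L^{(0)}\|_{\mathrm{op}}\le C_1(\varepsilon+p_{\mathrm{flip}})$ and $\|\widehat b-b^{(0)}\|_2\le C_2(\varepsilon+p_{\mathrm{flip}})$. It remains to control $\|u_{\mathrm{new}}\|_2\le \|P\|_{\mathrm{op}}(\|f(w)\|_2+\|g(z)\|_2)$. Assuming, as in the appendix assumptions, a second-moment bound $\mathbb E\|f(w)\|_2^2\le M_f^2$, Markov's inequality (via Chebyshev) gives $\|f(w)\|_2\le M_f/\sqrt{\eta}$ with probability $1-\eta$. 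This is exactly the source of the $\|P\|_{\mathrm{op}}M_f/\sqrt{\eta}$ factor. The $g(z)$ contribution to $\|u_{\mathrm{new}}\|_2$ is of order $\sigma\sqrt{k+\log(1/\eta)}$, so multiplied by the bounded quantity $(\varepsilon+p_{\mathrm{flip}})$ it is dominated by the second term of the bound. The additive term $C_2(\varepsilon+p_{\mathrm{flip}})$ is absorbed by assuming $M_f\|P\|_{\mathrm{op}}$ is bounded below, or by folding it into $\widetilde C$.

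The last term is the pure artifact noise. The vector $I_{J_w}Pg(z)$ lives in a $K\le k$ dimensional coordinate subspace. By the sub-Gaussian assumption on $g(z)$, the vector $P_{J_w}g(z)$ is sub-Gaussian with scale proportional to $\sigma\|P_{J_w}\|_{\mathrm{op}}$. The standard norm concentration for a sub-Gaussian vector in $\mathbb R^k$ (an $\epsilon$-net argument) then yields $\|P_{J_w}g(z)\|_2\lesssim \sigma\sqrt{k+\log(1/\eta)}$. Multiplying by $\|L^{(0)}\|_{\mathrm{op}}$ explains why $\widetilde C$ depends on $\|L^{(0)}\|_{\mathrm{op}}$ and $\|P_{J_w}\|_{\mathrm{op}}$.

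The main obstacle is the third, flip term $L^{(0)}(I_{J_{\mathrm{real}}}-I_{J_w})Py$, which is nonzero only on the flip event. I would first try to handle it on the complement of that event, where the term vanishes. On the flip event itself, I would use a margin argument: a coordinate swap happens only when the perturbation $|(Pg(z))_t|$ exceeds half the gap between the $K$-th and $(K{+}1)$-th entries of $|s|$. In that case the swapped coordinates carry magnitude at most of the order of $\|Pg(z)\|_\infty$ plus the gap, so the term is bounded by a multiple of $\sigma\sqrt{k+\log(1/\eta)}$ on the same sub-Gaussian event. If no margin assumption is available, the fallback is the crude bound $2\|L^{(0)}\|_{\mathrm{op}}\|P\|_{\mathrm{op}}\|y\|_2$ combined with Markov's inequality on the indicator of the flip event, which contributes $p_{\mathrm{flip}}\|P\|_{\mathrm{op}}M_f/\sqrt\eta$. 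This is again of the form in \eqref{eq:precise-bound-second-moment}. Collecting the terms gives the stated inequality.
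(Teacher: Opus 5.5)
Your skeleton is the paper's. You use the triangle inequality around $L^{(0)}I_{J_w}Pf(w)+b^{(0)}$, Theorem~\ref{thm:1-High-Probability-Suppression-of-Artifacts} for $\widehat L-L^{(0)}$ and $\widehat b-b^{(0)}$, Chebyshev for $\|f(w)\|_2\le M_f/\sqrt\eta$, and sub-Gaussian control of $Pg(z)$. Your split of $u_{\mathrm{new}}-I_{J_w}Pf(w)$ into $(I_{J_{\mathrm{real}}}-I_{J_w})Py+I_{J_w}Pg(z)$ is equivalent to the paper's $(I_{J_{\mathrm{real}}}-I_{J_w})Pf(w)+I_{J_{\mathrm{real}}}Pg(z)$.

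The gap is the flip term, and neither of your arguments for it gives the stated bound.

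\textbf{The margin argument fails.} On a flip, $(I_{J_{\mathrm{real}}}-I_{J_w})Py$ places $v_t$ (swapped in) and $-v_j$ (swapped out) in \emph{different} coordinates, so nothing cancels. The margin condition, as used in Lemma~\ref{lem:step1-no_flip}, only bounds the \emph{difference} $|s_j|-|s_t|\le 2\|Pg(z)\|_\infty$. The magnitudes themselves are the size of the $K$-th largest $|s_j|$. That size is governed by $\|s\|_2$ and is unrelated to $\sigma$ or $\delta$. For example, take $K=1$ and $s=(a,a-\delta,0,\dots)$, which satisfies the margin condition. A flip then gives $\|(I_{J_{\mathrm{real}}}-I_{J_w})Py\|_2\approx\sqrt2\,a$. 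So on the flip event the term can be of order $\|L^{(0)}\|_{\mathrm{op}}\|s\|_2$. That quantity is controlled only through $\|P\|_{\mathrm{op}}M_f/\sqrt\eta$, not by $\sigma\sqrt{k+\log(1/\eta)}$.

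\textbf{The fallback does not produce a multiplicative $p_{\mathrm{flip}}$.} Markov's inequality on an indicator cannot create that factor inside a $1-\eta$ statement. If $\eta\gtrsim p_{\mathrm{flip}}$, you can discard the flip event and the term is $0$. If $\eta<p_{\mathrm{flip}}$, then with probability exceeding $\eta$ the term has its full size.

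What moment arguments actually give uses $\{\text{flip}\}\subseteq\{\|Pg(z)\|_\infty\ge\delta/2\}$. This event depends only on $z$, hence is independent of $w$, and has probability $\bar p\le 2ke^{-\delta^2/8\sigma^2}$. Markov on $\mathbf 1_{\text{flip}}\|s\|_2$ yields $\bar p\,\|P\|_{\mathrm{op}}M_f/\eta$. Chebyshev yields $\sqrt{\bar p/\eta}\,\|P\|_{\mathrm{op}}M_f$. Neither is $p_{\mathrm{flip}}\|P\|_{\mathrm{op}}M_f/\sqrt\eta$. A rigorous version must either weaken the first term in one of these ways, or hold with probability $1-\eta-p_{\mathrm{flip}}$, on which event the flip term simply vanishes.

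The paper's own proof has the same soft spot. It asserts $\|\delta\|_2\le C_2\sigma\sqrt{k+\log(1/\eta)}+C_3p_{\mathrm{flip}}\|s\|_2$ with probability $1-\eta$ ``by taking union bounds,'' which is an expectation-style heuristic. Your fallback reproduces that step rather than repairing it.

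\textbf{A smaller issue, also shared with the paper.} Assumption~\ref{asm:subGaus} makes only each coordinate $p_r^\top g(z)$ sub-Gaussian. Two of your steps need more than that:
\begin{itemize}
\item treating $P_{J_w}g(z)$ as a sub-Gaussian vector;
\item bounding $\|u_{\mathrm{new}}\|_2$ through $\|P\|_{\mathrm{op}}\|g(z)\|_2$, which is uncontrolled, rather than through $\|Pg(z)\|_2$.
\end{itemize}
Either add a joint sub-Gaussian assumption or use the weaker union-bound estimate $\|Pg(z)\|_2\le\sigma\sqrt{2k\log(2k/\eta)}$.
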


\subsection{SAS-Regularized Dynamics in Reward model training}\label{3.4}

We extend the Bradley–Terry framework with SAS regularization. Let $r_c, r_r$ be the reward scores of the chosen and rejected responses, $s_c, s_r$ their SAS scores, $\sigma$ the sigmoid function and $k$ the tuning parameter. The loss of vanilla RLHF and SAS-based RLHF are as follows:
\begin{align}
&\mathcal{L}_{\text{vanila}} = -\sum_i\log \sigma(y_{ic} - y_{ir}),\\
&\mathcal{L}_{\text{SAS}} = -\sum_i\log \sigma\big((y_{ic} - y_{ir}) + k \cdot (s_{ic} - s_{ir})\big)\label{sas_loss}\\
&\hat{r}_n(x,y)=\arg\max_r[-L_{vanilla}],\\
&\hat{r}_{nSAS}(x,y)=\arg\max_r\big[-L_{SAS}]
\end{align}

\paragraph{Effect on Parameter Updates}
Here we analyze the effect of SAS-regularized training process through gradients in the parameter updates, with detailed derivations provided in Appendix~\ref{subsec:gradient}.

Since we have
\begin{align*}
    \frac{\partial L}{\partial \theta}&=\sum_i [\sigma(y_{ic}-y_{ir})-1][\frac{\partial y_{ic}}{\partial\theta}-\frac{\partial y_{ir}}{\partial\theta}]\\
\frac{\partial L_{SAS}}{\partial \theta}&=\sum_i [\sigma(y_{ic}-y_{ir}+k(s_{ic}-s_{ir}))-1]\\
&\quad \cdot[\frac{\partial y_{ic}}{\partial\theta}-\frac{\partial y_{ir}}{\partial\theta}]
\end{align*}
SAS modulates gradients: when aligned with preferences, it magnifies updates toward prompt intention; when in conflict, it mitigates them, thus modifying the update steps and reducing artifact influence.

\paragraph{Causal Nature of SAS} 

According to Proposition~\ref{prop:r-rSAS} in Appendix~\ref{sub:ATE}, we have
$
\hat{r}_{nSAS}(x,y)=\hat{r}_n(x,y)-k\cdot s(x,y)
$.

We evaluate the causal effect of SAS by deriving the ATE on the difference between on-intention and off-intention responses, where the treatment corresponds to incorporating SAS rather than the presence of intention itself.:

\begin{align*}
ATE&=\mathbb{E}[\hat{r}(x,y_{c})-\hat{r}(x,y_{r})|SAS]\\
&\quad -\mathbb{E}[\hat{r}(x,y_{c})-\hat{r}(x,y_{r})|vanilla]\\
&=\mathbb{E}[\hat{r}_{nSAS}(x,y_{c})-\hat{r}_{nSAS}(x,y_{r})]\\
&\quad -\mathbb{E}[\hat{r}_n(x,y_{c})-\hat{r}_n(x,y_{r})]\\
&= k\mathbb{E}[-s(x,y_{c})+s(x,y_{r})]
\end{align*}

Although the value of $\mathbb{E}[s(x,y_{c})]$ and $E[s(x,y_{r})]$ depends on the training data distribution, in most of the cases, the chosen responses better capture prompt intention than rejected ones, making $\mathbb{E}[-s(x,y_{c})+s(x,y_{r})]\geq 0$. Therefore, although SAS can be regarded as a penalty term for the reward, it induces a positive shift in the reward difference between on-intention and off-intention responses compared to vanilla. Consequently, incorporating SAS effectively strengthens the causal effect of prompt intention signal on the reward model. In Section~\ref{subsec:thm}, we show that in high probability, the decoder output is approximately independent from artifact $z$, so do SAS. \textbf{Thus, the causal effect introduced by SAS is independent from $z$, thereby removing $z$ as a confounder.}

\paragraph{Hyperparameters}
To facilitate stable training, implement a curriculum learning approach so that $k_{\text{eff}} = k \cdot I(\text{Epoch} \geq 1)$.
Meanwhile, particularly in safety alignment settings, SAS exhibits counterintuitive patterns where safe responses (e.g., refusal to answer harmful queries) may appear ”off-topic” compared to potentially dangerous but directly responsive answers. Let $\delta_{\text{sas}} = s_c - s_r$ denote the SAS-score difference between the chosen and rejected responses, and we apply thresholding by defining $\delta_{\text{sas}}^{\text{thres}} = \delta_{\text{sas}}\mathbf{1}(\delta_{\text{sas}}\leq\tau)$. When $\delta_{\text{sas}} > \tau$, the SAS regularization is turned off (i.e. $\delta_{\text{sas}}^{\text{thres}} = 0$), so the objective reduces to standard Bradley--Terry preference learning. 

\section{Experiments}
In this section, we first systematically evaluate the overall performance of the prompt decoder trained using the scheme described in Section~\ref{section: SAS}. We then visualize the distribution of the computed SAS scores on the RLHF training set, and finally present the downstream reward model training results.

\subsection{Prompt Decoder Results}\label{Prompt Decoder Results}

\paragraph{Evaluation Dataset.}
We construct a 300-sample evaluation set by sampling 100 preference pairs from each of the following sources:
(i) 100 pairs of helpfulness preference from the HH-RLHF-Helpful-standard~\citep{dong2024rlhfworkflowrewardmodeling}.
(ii) 100 pairs from the Reward-Bench-2~\citep{malik2025rewardbench2advancingreward} math category.
(iii) 100 pairs from the Reward-Bench-2 safety category. To evaluate the sensitivity of the prompt decoder to stylistic artifacts, we create perturbed versions of the chosen responses using the GPT-4o-mini model. The rewriting prompt is designed to preserve the factual content while introducing stylistic variations; detailed rewrite instructions are demonstrated in the Appendix~\ref{Complete Experimental Results}. 
 
To validate Theorem~\ref{thm:1-High-Probability-Suppression-of-Artifacts}, we first train a decoder on a dataset of 20K prompts \emph{without augmentation}. As shown in Figure~\ref{fig:prompt-decoder-accuracy-overall-rewrite}, the decoder already achieves solid performance: selects the human preferred response over its stylistic rewrite in roughly 80\% cases, where selection means having a lower SAS score, indicating that the decoder has successfully learned to filter out superficial stylistic variations. 

To further verify the effectiveness of the one-to-many training paradigm, we compare three settings: 
(i) 20K without augmentation, 
(ii) 20K with four responses per prompt (augmented), and 
(iii) 80K unaugmented prompts, which matches the augmented setting in total number of responses. 
We evaluate each decoder along two axes: 
(1) distinguishing chosen from rewritten responses, and 
(2) distinguishing chosen from rejected responses. 
The results are presented in Figure~\ref{fig:prompt-decoder-accuracy-overall-rewrite} and Figure~\ref{fig:prompt-decoder-accuracy-overall-reject}. 

\begin{figure}[ht]
    \centering
    \begin{subfigure}{0.45\textwidth} 
        \centering
        \includegraphics[width=6cm,height=4cm]{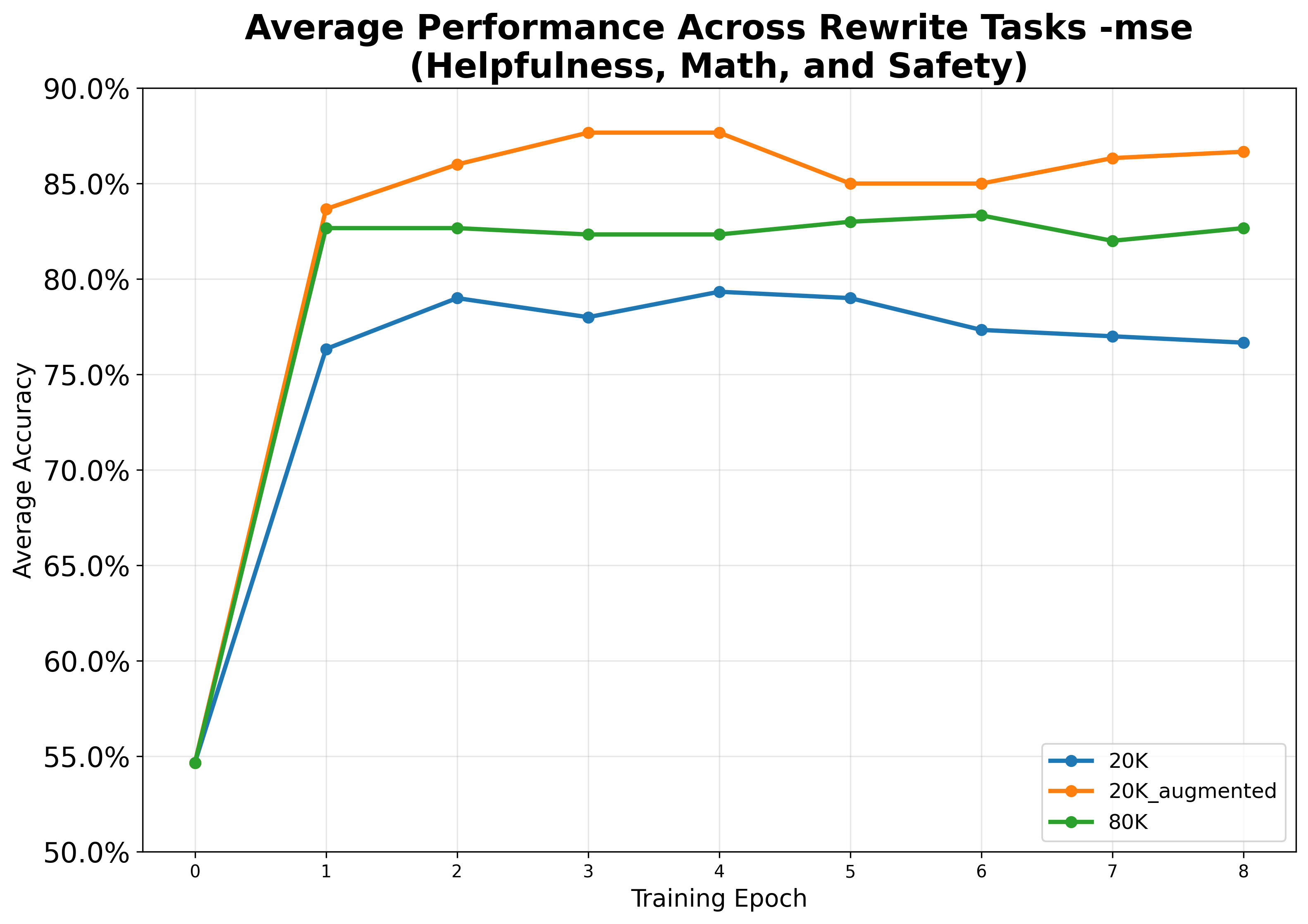}
        \caption{Average accuracy of the prompt decoder on the chosen-vs-rewrite task across helpful, math, and safety domains. 
    Augmented training (20K\_augmented) yields the best performance, surpassing both unaugmented 20K and 80K data.}
        \label{fig:prompt-decoder-accuracy-overall-rewrite}
    \end{subfigure}
    \hfill
    \begin{subfigure}{0.45\textwidth}
        \centering
        \includegraphics[width=6cm,height=4cm]{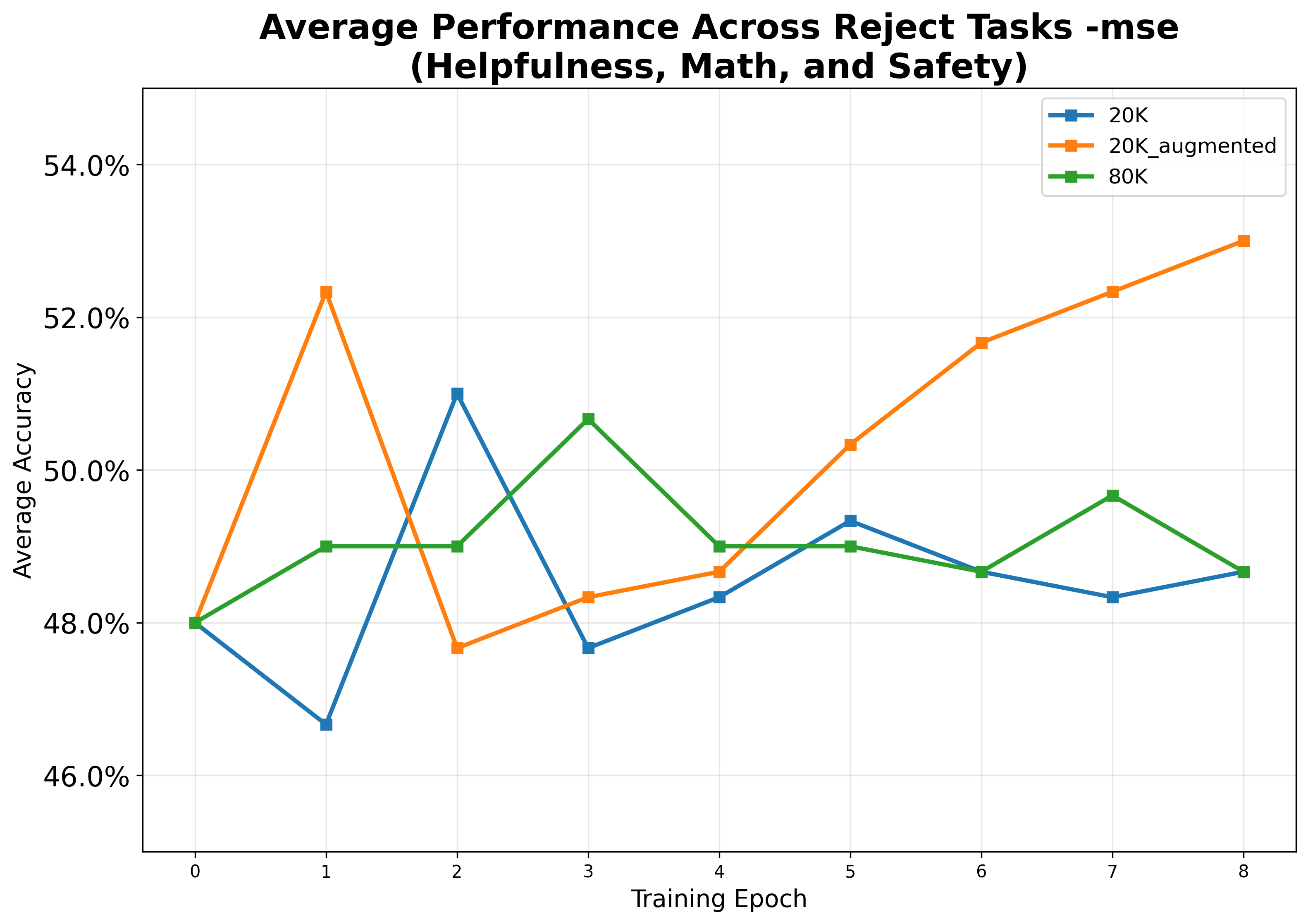}
        \caption{Average accuracy of the prompt decoder on the chosen-vs-reject task. Performance remains near random guess (50\%) across all training regimes, indicating that SAS captures a signal orthogonal to human preference labels.}
        \label{fig:prompt-decoder-accuracy-overall-reject}
    
    \end{subfigure}
    \caption{Average Accuracy Curve of Prompt Decoder}
    \label{fig:avg-two-pd-overall-results}
\end{figure}

All prompt decoders were trained with a batch size of 128 and a learning rate of $1\mathrm{e}{-5}$ for 8 epochs on a single NVIDIA RTX 4090 GPU for less than an hour. Each decoder matches the size of the encoder used in the corresponding sparse autoencoder (SAE) mentioned in~\ref{section: SAS}. Across all epochs, the augmented 20K dataset achieves highest accuracy $87.7\%$ and outperforms both the 20K and 80K baselines on the chosen-vs-rewrite task, indicating that response augmentation offers stronger supervision than simply increasing data volume. In particular, the decoder consistently fails to distinguish chosen from rejected responses, with accuracy near 50\% regardless of the size of the data set. This highlights that SAS is a complementary alignment signal rather than leaking human preference supervision, and thus further \emph{filtering out unintentional signal introduced by human labellers}.

\begin{table*}[t]
\begin{center}
\begin{tabular}{l|cccc|cccc}
\toprule
\textbf{Prompt Decoder} 
& \multicolumn{4}{c|}{\textbf{Chosen vs Rewrite} (↑)} 
& \multicolumn{4}{c}{\textbf{Chosen vs Reject} (→50\%)} \\
& Helpful & Math & Safety & Overall & Helpful & Math & Safety & Overall \\
\midrule
20K & 73.0 & 94.0 & 71.0 & 79.3 & 53.0 & 51.0 & 41.0 & 48.3 \\
80K & 75.0 & \textbf{98.0} & 77.0 & 83.3 & 56.0 & 47.0 & 43.0 & 48.7 \\
20K\_augmented & \textbf{86.0} & 93.0 & \textbf{84.0} & \textbf{87.7} & 53.0 & 47.0 & 46.0 & 48.7 \\
\bottomrule
\end{tabular}
\vspace{0.5em}
\caption{
Accuracy (\%) of prompt decoders on the \textbf{Chosen vs Rewrite} and \textbf{Chosen vs Reject} tasks,
evaluated at the best epoch for each model across helpful, math, and safety domains. In the rewritten dataset, each instance is randomly assigned a rewrite objective: 50\% are rewritten to be longer, and 50\% are rewritten to be more sycophantic.
}
\label{tab:decoder-accuracy}
\end{center}
\end{table*}

\subsection{Reward Model Results}\label{Reward Model Results}
\paragraph{Training and Evaluation Datasets.} For training, we adopt a Bradley–Terry scheme~\citep{BT} with a randomly sampled 70K subset from their 700K RLHF dataset~\citep{dong2024rlhfworkflowrewardmodeling}\footnote{\url{https://huggingface.co/datasets/RLHFlow/pair_preference_model_dataset}}, which contains approximately 700K pairwise preference examples. For evaluation, we adopt RewardBench~\citep{lambert2024rewardbenchevaluatingrewardmodels}, which provides curated test sets across four evaluation dimensions—\textit{chat}, \textit{chat-hard}, \textit{safety}, and \textit{reasoning}. All datasets involved in this study, including our rewritten variants, will be made publicly available.

\paragraph{SAS for Reward Model.}
To compute SAS for the 70K training pairs, we use the prompt decoder trained on the 20K augmented dataset at Epoch~3, which achieves the highest accuracy on the chosen-vs-rewrite task (Table~\ref{tab:decoder-accuracy}) and best reconstructs prompt embeddings from responses.

Once selected, the decoder remains frozen throughout reward model training. For each training pair \((x, y_c, y_r)\), we compute SAS scores by encoding the chosen and rejected responses into sparse vectors via the SAE, and decoding them back into the prompt embedding space. We visualize the distribution of SAS scores across the training set in Figure~\ref{fig:sas-distribution-chosen-reject}. 
While the chosen responses tend to have slightly lower SAS values than the rejected ones, the overall distributions are closely aligned. This observation motivates the use of a larger tuning parameter \(k\) in the SAS-regularized loss (Equation~\ref{sas_loss}) to amplify the effect of this fine-grained alignment signal during training.

\begin{figure}[t]
\begin{center}
\includegraphics[width=1.0\linewidth]{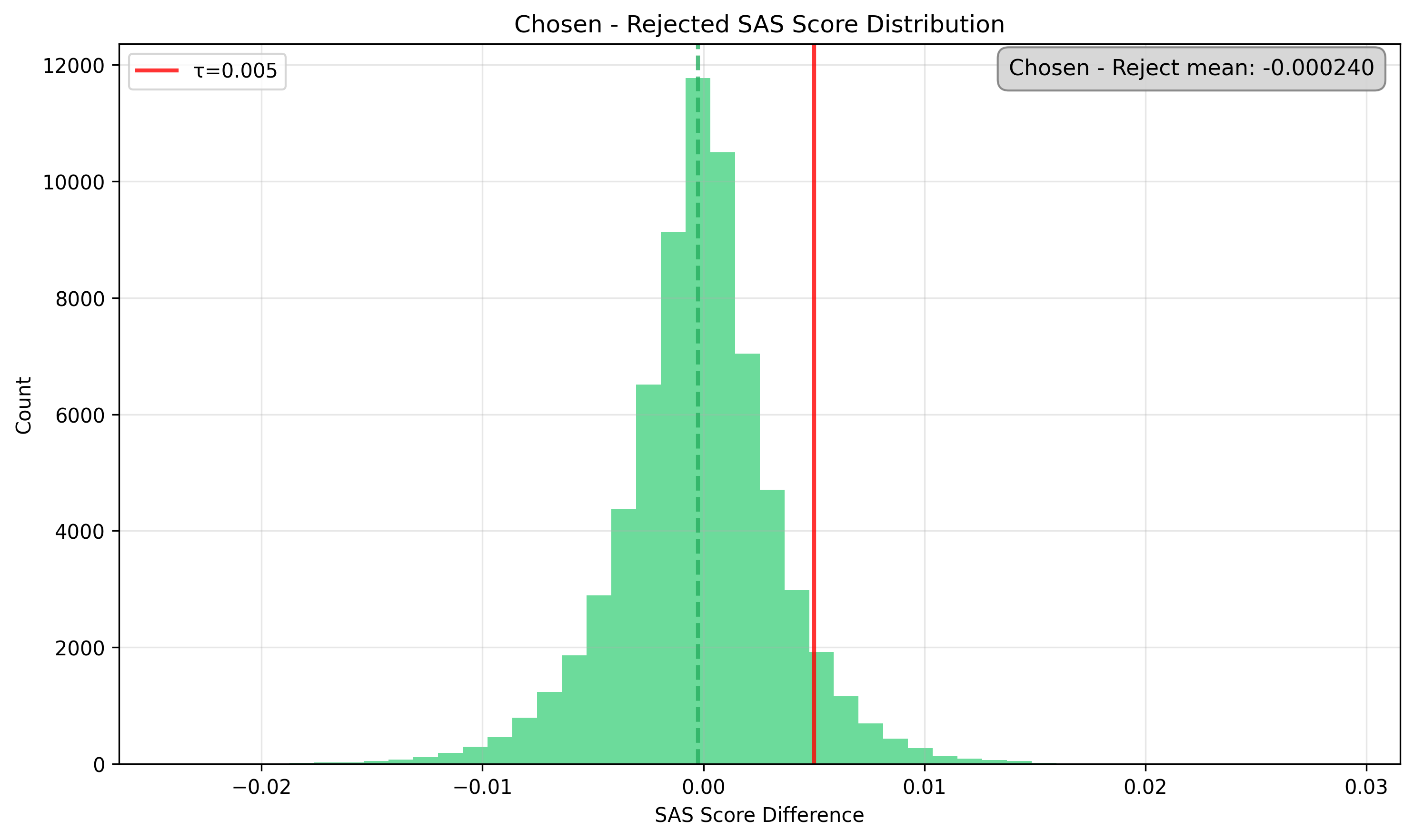}
\end{center}
\caption{
Distribution of the difference of Semantic Alignment Scores (SAS) between chosen and rejected responses on the 70K training pairs. 
}
\label{fig:sas-distribution-chosen-reject}
\end{figure}

\paragraph{RM Training.} We fine-tune reward models based on Gemma-2-2B-it~\citep{gemmateam2024gemma2improvingopen} and Gemma-2-9B-it, using the SAS-regularized Bradley--Terry objective. We use Gemma models in accordance with the Gemma Terms of Use and the Gemma Prohibited Use Policy, and we do not redistribute the original Gemma weights. Each model is trained for 2 epochs with a batch size of 256 and a learning rate of \(2\mathrm{e}{-6}\), optimized using AdamW with cosine learning rate decay on an 8\(\times\)NVIDIA H200 GPU cluster. Under this setup, training takes approximately 10 hours for the 9B model and 6 hours for the 2B model. We select \(k = 3.2\mathrm{e}{4}\) for the 2B model and \(k = 6.4\mathrm{e}{4}\) for the 9B model with safety threshold $\tau = 0.005$, filtering out approximately 7\% of extreme cases from the training data. For baseline models, we obtain the vanilla reward model simply by setting $k=0$, and RRM\cite{RRM} is reproduced on our dataset as another one. WARM is excluded because it requires training multiple reward models beyond a single-model budget, and ODIN because it targets only length bias rather than general artifact suppression.

Results are shown in Table~\ref{tab:rewardbench-2b-9b}. On RewardBench, the overall accuracy of the 9B model improves from 83.22\% to \textbf{86.83\%}. For both 2B and 9B models, the \textit{Chat-Hard} category sees a consistent gain of over \textbf{4\%}. Details of the training setup, hyperparameter tuning, and evaluation results are provided in the Appendix~\ref{Complete Experimental Results}. Meanwhile, we selected RRM as the primary comparison. However, it's important to note that RRM is built on a pairwise ranking model that naturally handles tie labels, whereas CARP uses a standard Bradley-Terry framework where ties are undefined. Adapting RRM's augmentation, which explicitly generates "Neutral" (tie) triplets, to our BT-based training required a non-trivial modification. We handle ties by applying an L2 loss to the reward difference, while keeping the other hyperparameters at the original article’s recommended optimal settings~\cite{RRM}.

\begin{table*}[h]
\begin{center}
\begin{minipage}{\linewidth}
\centering
\textbf{(a) Gemma-2B-it (\(k = 3.2\mathrm{e}{4}\))}\\[0.3em]
\begin{tabular}{l|cccccc}
\toprule
Model & Chat & Chat-Hard & Safety & Reasoning & Avg. & Weighted Avg. \\
\midrule
Vanilla RM & \textbf{97.77}  & 54.82 & \textbf{83.24} & 66.18 & 75.50 & 72.46 \\
RRM & 96.37  & 57.24 & 68.92 & \textbf{79.52} & 75.51 & \textbf{75.51} \\
CARP (Ours) & 96.93 & \textbf{58.99} & 79.05 & 71.56 & \textbf{76.63} & 74.54\\
\bottomrule
\end{tabular}
\end{minipage}

\vspace{1em}

\begin{minipage}{\linewidth}
\centering
\textbf{(b) Gemma-9B-it (\(k = 6.4\mathrm{e}{4}\))}\\[0.3em]
\begin{tabular}{l|cccccc}
\toprule
Model & Chat & Chat-Hard & Safety & Reasoning & Avg. & Weighted Avg. \\
\midrule
Vanilla RM & 96.37 & 63.37 & \textbf{89.73} & 82.88 & 83.09 & 83.22 \\
RRM & \textbf{97.21} & 67.98 & 80.95 & \textbf{91.40} & 84.38 & 85.93 \\
CARP (Ours) & 94.69 & \textbf{68.86} & 88.24 & 89.87 & \textbf{85.42} & \textbf{86.83} \\
\bottomrule
\end{tabular}
\end{minipage}

\vspace{0.8em}

\caption{
\textbf{RewardBench accuracy (\%) of reward models across four evaluation categories.} 
CARP (Ours) denotes the SAS-regularized reward model with best-performing \(k\) value.
Each subtable corresponds to a different model scale. 
The weighted average reflects the overall proportion of correctly ranked preference pairs across all subsets. 
}
\label{tab:rewardbench-2b-9b}
\end{center}
\end{table*}

\paragraph{Best-of-N Policy Results}
We evaluate different reward models by applying them as ranking functions within a Best-of-N policy. For each prompt, N=8 candidate responses are sampled from a fixed base policy, and the response with the highest reward score is selected. We report results on AlpacaEval-2~\citep{alpacafarm} in Table~\ref{tab:BON-results}.
\begin{table}[t]
\begin{center}
\begin{tabular}{l|c|c|c}
\toprule
\textbf{Reward Model} 
& \multicolumn{1}{c}{LC\%} 
& \multicolumn{1}{c}{WR\%}
& \multicolumn{1}{c}{Length\%}\\
\midrule
Vanilla RM & 48.31 & 40.99 & 1667 \\
CARP(Ours)  & \textbf{49.94} & 40.99 & \textbf{1611}\\
\bottomrule
\end{tabular}
\vspace{0.5em}
\caption{
Comparison among different reward models on Best-of-N policy. WR stands for win-rate against GPT
4. LC stands for length-controlled win-rate. Length is the average number of characters in the
generated responses. CARP shows quality improvements over RM with shorter responses.
}
\label{tab:BON-results}
\end{center}
\end{table}

\paragraph{Spurious Correlation Analysis.} 
We compute the Pearson correlation between SAS and the number of tokens in the generated responses $(n=100)$. The result in Figure~\ref{fig:sas-pearson} shows an extremely weak and statistically insignificant correlation ($r=0.0095,~p=0.7634$). This indicates that SAS is essentially independent of response length, suggesting that SAS suppresses prompt-irrelevant shortcuts rather than introducing new biases.
\begin{figure}[t]
\begin{center}
\includegraphics[width=1.0\linewidth]{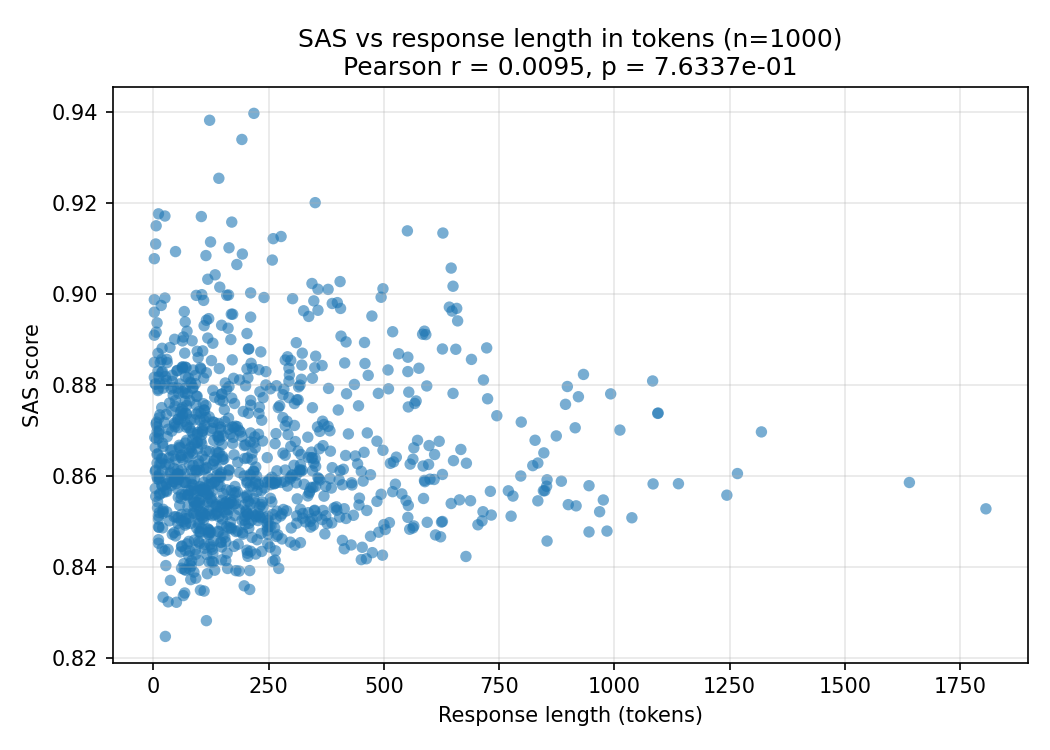}
\end{center}
\caption{
Pearson correlation between SAS and response length.
}
\label{fig:sas-pearson}
\end{figure}

To further assess the robustness of our SAS-regularized reward models to spurious alignment signals, we conduct a subtle experiment on the same 300 preference pairs subsets sampled from RewardBench2 when we evaluate the prompt decoder~\ref{Prompt Decoder Results}. For each chosen response, we construct three rewrites designed to isolate specific confounding factors:

\begin{itemize}
    \item \textbf{Rewrite 1 (Lengthened)}: We apply a \texttt{RATE}-style rewriting prompt to make the chosen response significantly longer, while preserving its factual content, stance, and topicality~\citep{RATE}.
    \item \textbf{Rewrite 2 (Shortened)}: Starting from Rewrite 1, we apply another \texttt{RATE}-style prompt to reduce its length, again without altering the original intent or content.
    \item \textbf{Rewrite 3 (Lengthened, Off-topic)}: We generate a longer version of the chosen response that includes slight topical drift—maintaining politeness and fluency, but deviating from the core question or user intent.
\end{itemize}

By comparing the reward scores assigned to \texttt{Rewrite1 vs Rewrite2}, we test whether the reward model exhibits \emph{length bias}—i.e., whether longer responses are consistently favored despite content parity. Meanwhile, comparing \texttt{Rewrite1 vs Rewrite3} probes the model's ability to penalize off-topic responses, even when they are longer or more stylistically polished.

This design ensures that any performance difference arises from the model's sensitivity to spurious features such as verbosity or topic coherence. Our results in Table~\ref{tab:rewardbench2-2b-3rewrites-full} show that SAS-regularized models remains indifferent to length bias while being more sensitive to topical alignment.

We observe similar trends in the 2B setting~\ref{tab:rewardbench2-9b-3rewrites-full}, where CARP amplifies the distinction between on-topic and off-topic responses while remaining robust to verbosity.

\begin{table}[t]
\begin{center}
\begin{tabular}{l|c|c}
\toprule
\textbf{Model (9B)} 
& \multicolumn{1}{c}{\textbf{R1 vs R2}(↑)} 
& \multicolumn{1}{c}{\textbf{R1 vs R3}(↑) }\\
\midrule
Vanilla RM & 24.6 & 50.0 \\
CARP(Ours)  & \textbf{48.0} & \textbf{64.2}\\
\bottomrule
\end{tabular}
\vspace{0.5em}
\caption{
Accuracy (\%) of reward models on the \textbf{Rewrite1 vs Rewrite2} and \textbf{Rewrite1 vs Rewrite3} tasks,
evaluated at the best epoch for each model in the \textbf{helpful} domain. 
}
\label{tab:rewardbench2-2b-3rewrites-full}
\end{center}
\end{table}

\section{Conclusion and Future Discussion}
 Reward hacking arises from unintentional, prompt-unrelated biases in preference data. Prior work has sought to address this issue by reinforcing the causal link between prompt intent and reward model predictions, but has lacked a principled framework to quantify the extent to which a response aligns with the prompt. We propose \textbf{CARP}, a framework that introduces the \textbf{Semantic Alignment Score (SAS)} to measure how well a response reflects latent prompt intentions. We theoretically show that SAS depends only on prompt-relevant information and suppresses context-independent artifacts with high probability. Experimental results on Figure~\ref{fig:avg-two-pd-overall-results} and Figure~\ref{fig:prompt-decoder-accuracy-seperate} show that SAS captures prompt intent independently from human preference labels. Incorporating SAS into reward model training further improves performance over Vanilla RM and RRM. Our framework thus enables reward models to be more directly guided by prompt semantics, reducing reliance on spurious artifacts and mitigating reward hacking. Table~\ref{tab:rewardbench-2b-9b} shows that CARP improves reward model behavior in a subtle and orthogonal manner. Table~\ref{tab:BON-results} shows that using CARP improved reward model in Best-of-N policy improves length-controlled win-rate. In addition, Table~\ref{tab:rewardbench2-2b-3rewrites-full} shows that CARP helps mitigate length bias while being more sensitive to topical alignment.
 
 Rather than replacing existing methods, CARP offers a principled mechanism for injecting prompt intent supervision into reward training, opening the door to unified pipelines. In the future, we can be attempt to combine CARP with other robust reward model training methods to mitigate reward hacking.

\section{Limitations}
CARP shows good results and we plan to
distribute the implementation code with detailed
instructions for reproducibility. However, several
areas remain open for future exploration.

First, CARP may underperform the vanilla reward model on safety tasks. In safety-critical scenarios, the literal prompt intention is often harmful, and correct behavior requires deliberately deviating from that intent (e.g., refusals or redirections). Since CARP amplifies the prompt intention signal, this can conflict with safety objectives and reduce performance. We partially mitigate this issue by introducing a safety threshold $\tau$ that disables SAS regularization in extreme cases. We encourage the community to further explore and resolve this potential issue.

Second, CARP is trained and evaluated primarily on single-turn prompt–response datasets. While this setting aligns with standard RLHF pipelines and RewardBench-style evaluations, it does not capture evolving user intentions or cross-turn dependencies in multi-turn dialogues. Extending CARP to multi-turn conversational settings remains an interesting direction for future work.

Thirdly, SAS may reward responses that are topically related to the prompt but factually incorrect. SAS is designed as a complementary regularization signal rather than a primary training objective. The factuality of the reward model is mainly determined by human preference supervision. Although CARP improves performance on reasoning-related categories, SAS may benefit from integration with etrieval-augmented approaches (RAG) and other approaches to enrich prompt representations for knowledge-intensive tasks, where correctness depends on external information.

\bibliography{custom}
\newpage

\appendix
\section{Theoretical Derivation for Artifacts Compression}\label{sec:appendix}
\begin{assumption}[Conditional Zero-Mean of Artifacts]\label{asm:conditional_zeromean}
\[
\mathbb{E}[g(z_{i,j}) \mid w_i] = 0.
\]
Since $z$ is independent from $w$, the conditional expectation is a constant, which can be generalized to non-zero case easily.
\end{assumption}
\begin{assumption}[Sub-Gaussian Distribution]\label{asm:subGaus}
\leavevmode
\begin{enumerate}
\item
There exist a constant $\sigma>0$ such that for every coordinate of $p_r$
and every $\lambda\in\mathbb R$,
\[
\mathbb{E}\big[\exp(\lambda \cdot p_r^Tg(z_{i,j}))\big] \leq \exp(\sigma^2 \lambda^2/2).
\]
\item
There exist constants $\sigma_x,\sigma_y>0$ such that for every unit vectors 
$a\in\mathbb R^{d_x}$, $b\in\mathbb R^{d_y}$, and every $\lambda\in\mathbb R$,
\begin{align*}
&\mathbb{E}\!\left[\exp\!\Big(\lambda\, a^\top(x_i-\mu_x)\Big)\right]
\le \exp\!\left(\tfrac{\lambda^2\sigma_x^2}{2}\right), \\
&\mathbb{E}\!\left[\exp\!\Big(\lambda\, b^\top(u_{ij}-\mu_u(x_i)\Big)\right]
\le \exp\!\left(\tfrac{\lambda^2\sigma_u^2}{2}\right),\\
&\mu_u(x_i) := \mathbb{E}[u_{i,j}\mid x_i].
\end{align*}
\end{enumerate}
\end{assumption}
\begin{assumption}[Top-K Margin Condition]\label{asm:topKmargin}
For $s_i = P f(w_i)$ and ideal Top-K indices $J_{w_i}$, there exists $\delta > 0$ such that:
\[
\min_{j \in J_{w_i}} \min_{t \notin J_{w_i}} (|s_{i,j}| - |s_{i,t}|) \geq \delta.
\]
\end{assumption}
\begin{assumption}[Positive Definite Covariance]\label{asm:eigenvalue_pos}
\[
\lambda_{\min}(\Sigma_{uu}) \geq \lambda_0 > 0,
\]
where $\Sigma_{uu} = \mathbb{E}[u u^T]$.
\end{assumption}

\begin{assumption}[Bounded Expectation]\label{asm:bounded}
\begin{align*}
&\exists M_x,M_f,M_u>0, ~s.t.\\
&\mathbb{E}[||x_i||_2^2]\leq M_x^2,~\mathbb{E}[||f(w_i)||_2^2]\leq M_f^2,\\
&\mathbb{E}[||u_{ij}||_2^2]\leq M_u^2
\end{align*}
\end{assumption}

\begin{lemma}
\label{lem:step1-no_flip}
If $\|P\eta_{i,j}\|_\infty < \delta/2$, then no flip occurs. Moreover, the flip probability satisfies:
\[
p_{\text{flip}} \leq \Pr(\|P\eta_{i,j}\|_\infty \geq \delta/2) \leq 2k \exp\left(-\frac{\delta^2}{8\sigma^2}\right)
\]
\end{lemma}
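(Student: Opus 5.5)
The argument has two parts: a deterministic perturbation step for the Top-K selection, followed by a union bound over the $k$ coordinates combined with the sub-Gaussian tail from Assumption~\ref{asm:subGaus}.

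\emph{Deterministic step.} Write $P(f(w_i)+\eta_{i,j}) = s_i + P\eta_{i,j}$ and put $e := P\eta_{i,j}$. Assume $\|e\|_\infty<\delta/2$. By the reverse triangle inequality, every coordinate satisfies $\bigl||s_{i,t}+e_t| - |s_{i,t}|\bigr| \le |e_t| < \delta/2$. Take any $j\in J_{w_i}$ and any $t\notin J_{w_i}$. The margin condition (Assumption~\ref{asm:topKmargin}) then gives
\[
|s_{i,j}+e_j| > |s_{i,j}|-\delta/2 \ge |s_{i,t}|+\delta/2 > |s_{i,t}+e_t| .
\]
So every index of $J_{w_i}$ strictly dominates every index outside it in the perturbed vector. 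The $K$ largest absolute entries of $s_i+e$ are therefore exactly $J_{w_i}$, with no ties at the boundary, and $\mathrm{TopK}(P(f(w_i)+\eta_{i,j}))=J_{w_i}$. Taking the contrapositive, the flip event is contained in $\{\|P\eta_{i,j}\|_\infty\ge\delta/2\}$. This yields the first inequality $p_{\text{flip}}\le\Pr(\|P\eta_{i,j}\|_\infty\ge\delta/2)$, whether we condition on $w_i$ or average over it, since the event inclusion holds pointwise.

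\emph{Probabilistic step.} Let $p_r^\top$ denote the rows of $P$, for $r=1,\dots,k$. A union bound gives
\[
\Pr(\|P\eta_{i,j}\|_\infty\ge\delta/2)\le\sum_{r=1}^k \Pr\bigl(|p_r^\top g(z_{i,j})|\ge\delta/2\bigr).
\]
Assumption~\ref{asm:subGaus}(1) supplies the MGF bound $\mathbb{E}[\exp(\lambda p_r^\top g(z_{i,j}))]\le\exp(\sigma^2\lambda^2/2)$. Apply the Chernoff bound: for $\lambda>0$, $\Pr(p_r^\top g\ge t)\le \exp(-\lambda t+\sigma^2\lambda^2/2)$. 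The optimal choice is $\lambda=t/\sigma^2$, which gives $\exp(-t^2/(2\sigma^2))$. The same bound holds for the lower tail by taking $\lambda<0$. Hence $\Pr(|p_r^\top g|\ge t)\le 2\exp(-t^2/(2\sigma^2))$. Setting $t=\delta/2$ gives $2\exp(-\delta^2/(8\sigma^2))$ per coordinate. Summing over the $k$ coordinates gives the stated bound $2k\exp(-\delta^2/(8\sigma^2))$.

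\emph{Expected obstacle.} The computations are routine; the only delicate point is interpretive. Assumption~\ref{asm:subGaus}(1) must be read as holding for each row $p_r$ of $P$, and it must hold conditionally on $w_i$ or at least marginally. This is consistent because $z$ is independent of $w$ and Assumption~\ref{asm:conditional_zeromean} makes $g(z)$ centered. One should also check that "flip" in Definition~\ref{def:flip_event} compares index sets, which is how the deterministic step treats it.
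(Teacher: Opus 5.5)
Your proof is correct and follows essentially the same route as the paper. It uses the triangle-inequality argument on $|(s_i+\Delta)_j|-|(s_i+\Delta)_t|$ under the margin condition, then a union bound over the $k$ rows of $P$ with the sub-Gaussian tail at $t=\delta/2$. Your version is slightly tighter: you keep the inequality strict, whereas the paper's chain ends in $\ge 0$ and does not by itself rule out ties at the boundary; and you derive the two-sided tail from the MGF bound via Chernoff instead of citing it.
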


\begin{proof}[\textbf{Proof of Lemma~\ref{lem:step1-no_flip}}]
Let $s_i = Pf(w_i)$ and $\Delta_{i,j} = P\eta_{i,j}$. For any $j \in J_{w_i}$ and $t \notin J_{w_i}$:
\begin{align*}
&|(s_i + \Delta_{i,j})_j| - |(s_i + \Delta_{i,j})_t| \\
&\quad \geq |s_{i,j}| - |\Delta_{i,j,j}| - |s_{i,t}| - |\Delta_{i,j,t}|\\
&\quad \geq |s_{i,j}| - |s_{i,t}| - 2\|\Delta_{i,j}\|_\infty \\
&\quad \geq \delta - 2(\delta/2) = 0
\end{align*}
Thus the Top-K selection remains unchanged.

By union bound:
\begin{align*}
\Pr(\|P\eta_{i,j}\|_\infty \geq \delta/2) &= \Pr\left(\max_{r=1,\ldots,k} |p_r^T \eta_{i,j}| \geq \delta/2\right) \\
&\leq \sum_{r=1}^k \Pr(|p_r^T \eta_{i,j}| \geq \delta/2).
\end{align*}

Since $p_r^T \eta_{i,j}$ is sub-Gaussian with parameter $\sigma^2$, by tail bounds:
\begin{align*}
\Pr(|p_r^T \eta_{i,j}| \geq \delta/2) &\leq 2\exp\left(-\frac{(\delta/2)^2}{2\sigma^2}\right) \\
&= 2\exp\left(-\frac{\delta^2}{8\sigma^2}\right)
\end{align*}

Therefore: $p_{\text{flip}} \leq 2k \exp(-\delta^2/8\sigma^2)$.
\end{proof}

\begin{restatable}[Population Covariance Decomposition]{lemma}{PopulationCovarianceDecomposition}\label{lem:population-cov-decom}
Denote $I_{J_w}$ as the coordinate selection matrix corresponding to $J_w$, and $I_{\text{flip}}$ as the real coordinate selection matrix when flipping.
\begin{align*}
&\Sigma_{xu}^{(0)}=\mathbb{E}\big[x(I_{J_w}s)^T\big], \, \Sigma_{xu}=\mathbb{E}\big[xu^T\big]\\
&\Sigma_{uu}^{(0)}=\mathbb{E}\big[(I_{J_w}s)(I_{J_w}s)^T\big],\,
\Sigma_{uu}=\mathbb{E}\big[uu^T\big]
\end{align*}

The population cross-covariance can be decomposed as:
\[
\Sigma_{xu} = \Sigma_{xu}^{(0)} + \Delta_{xu},\quad \Sigma_{uu} = \Sigma_{uu}^{(0)} + \Delta_{uu}
\]
with $\|\Delta_{xu}\|_{\text{op}} \leq C_x p_{\text{flip}},~\|\Delta_{uu}\|_{\text{op}} \leq C_u p_{\text{flip}}$.
\end{restatable}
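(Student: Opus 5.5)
The plan is to split on the flip event of Definition~\ref{def:flip_event} and to bound only the contribution of the flip set. Write $u = I_{J_{\text{real}}}P y$ with $Py = s + \Delta$, where $s = Pf(w)$ and $\Delta = Pg(z)$. Let $F = \{J_{\text{real}} \neq J_w\}$, so that $\Pr(F) = p_{\text{flip}}$. On $F^c$ we have $I_{J_{\text{real}}} = I_{J_w}$, which gives
\[
u = I_{J_w}s + I_{J_w}\Delta .
\]
Hence
\[
u = I_{J_w}s + I_{J_w}\Delta + \mathbf 1_F\,(I_{\text{flip}} - I_{J_w})(s+\Delta).
\]
With this, $\Sigma_{xu} = \mathbb{E}[xu^\top]$ splits into three parts. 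The first is the ideal term $\Sigma^{(0)}_{xu}$. The second is the artifact term $\mathbb{E}[x\Delta^\top I_{J_w}^\top]$. The third is the flip remainder $\mathbb{E}[\mathbf 1_F\, x (s+\Delta)^\top (I_{\text{flip}}-I_{J_w})^\top]$. The same split applies to $\Sigma_{uu}$.

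The artifact term vanishes. Since $J_w$ and $x$ are functions of $x$ (through $w$) alone, $z$ is independent of $(W,X)$, and Assumption~\ref{asm:conditional_zeromean} gives $\mathbb{E}[g(z)\mid w]=0$, conditioning on $x$ yields $\mathbb{E}[x\Delta^\top I_{J_w}^\top] = \mathbb{E}[x\,\mathbb{E}[\Delta\mid x]^\top I_{J_w}^\top]=0$.

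For $\Sigma_{uu}$, the cross terms $\mathbb{E}[I_{J_w}s\,\Delta^\top I_{J_w}^\top]$ vanish in the same way. This leaves $\mathbb{E}[I_{J_w}\Delta\Delta^\top I_{J_w}^\top]$, whose operator norm is $O(\sigma^2)$ by Assumption~\ref{asm:subGaus}. That term is not proportional to $p_{\text{flip}}$. I will therefore argue one of two ways. The first option is to absorb it by reading $\Sigma^{(0)}_{uu}$ as including the $z$-independent noise covariance $\mathbb{E}[I_{J_w}\Delta\Delta^\top I_{J_w}^\top]$, which depends only on the law of $z$ and the fixed index set $J_w$, not on the realized $z$. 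The second option is to note that this term is a constant of the noise model and fold it into the definition. I expect this to be a point where the statement implicitly requires such a convention, and I will flag it explicitly.

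The main step, and the main obstacle, is bounding the flip remainder by $C\,p_{\text{flip}}$ rather than $C\sqrt{p_{\text{flip}}}$. Cauchy--Schwarz on its own gives
\[
\|\mathbb{E}[\mathbf 1_F\, x v^\top]\|_{\text{op}} \le \sqrt{\Pr(F)}\,\bigl(\mathbb{E}\|x\|^2\|v\|^2\bigr)^{1/2},
\]
which is only $\sqrt{p_{\text{flip}}}$. To get a linear rate, I will first condition on $w$ (equivalently on $x$): given $w$, the vector $x$ is fixed and only $\Delta$ is random. The remaining issue is that on $F$ the quantity $\|\Delta\|_\infty \ge \delta/2$ by Lemma~\ref{lem:step1-no_flip}, so $\Delta$ is large exactly where the indicator is active. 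I will handle this with a tail integral,
\[
\mathbb{E}\bigl[\mathbf 1_F\|\Delta\|\bigr]\le \int_{\delta/2}^\infty \Pr(\|\Delta\|_\infty\ge t)\,dt + \tfrac{\delta}{2}\Pr(F).
\]
Sub-Gaussianity gives a tail of order $k\sigma e^{-\delta^2/8\sigma^2}$. This is of the same order as the bound on $p_{\text{flip}}$ from Lemma~\ref{lem:step1-no_flip}, up to constants depending on $\delta$ and $\sigma$. The terms $\mathbb{E}[\mathbf 1_F\|s\|]$ are controlled by $\|P\|_{\text{op}}\|f(w)\|$ times the conditional flip probability. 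I will then integrate over $w$ using Assumption~\ref{asm:bounded}, with $M_x$ and $M_f$ providing the bound $\|x\|\|s\| \lesssim$ const. This needs a uniform bound $\Pr(F\mid w)\le p_{\text{flip}}$-type control, which Lemma~\ref{lem:step1-no_flip} supplies uniformly in $w$ via the margin $\delta$ of Assumption~\ref{asm:topKmargin}.

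Collecting the pieces gives $\|\Delta_{xu}\|_{\text{op}}\le C_x p_{\text{flip}}$ and $\|\Delta_{uu}\|_{\text{op}}\le C_u p_{\text{flip}}$. The constants depend on $M_x$, $M_f$, $\|P\|_{\text{op}}$, $\sigma$, $\delta$ and $k$, where the $k$-dependence enters through the union bound.
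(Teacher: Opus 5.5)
Your decomposition $u=I_{J_w}s+I_{J_w}\Delta+\mathbf{1}_F(I_{\text{flip}}-I_{J_w})(s+\Delta)$ is correct, and so is the vanishing of the artifact term. This is the paper's route: its identity $\mathbb{E}[u\mid w]=I_{J_w}s+P(\text{flip}\mid w)(I_{J_{\text{flip}}}-I_{J_w})(s+\tilde\Delta)$ is, up to treating $I_{J_{\text{flip}}}$ as fixed, the conditional mean of your expression. Your flag on $\Sigma_{uu}$ is also right, and the paper's ``Similarly'' silently skips it. The term $\mathbb{E}[I_{J_w}\Delta\Delta^\top I_{J_w}^\top]$ is not $O(p_{\text{flip}})$: noise with $\|P\eta\|_\infty<\delta/2$ almost surely gives $p_{\text{flip}}=0$ but $\Delta_{uu}\neq 0$. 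So $\Sigma_{uu}^{(0)}$ must be redefined as you propose.

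The genuine gap is the linear rate, which you correctly identify as the crux but do not obtain.

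\emph{Noise term.} Your tail integral discards the event $F$. The quantity $\int_{\delta/2}^\infty\Pr(\|\Delta\|_\infty\ge t)\,dt\lesssim (k\sigma^2/\delta)e^{-\delta^2/(8\sigma^2)}$ is therefore comparable to the \emph{upper bound} $\bar p:=2ke^{-\delta^2/(8\sigma^2)}$ of Lemma~\ref{lem:step1-no_flip}. It is not comparable to $p_{\text{flip}}=\Pr(F)$, which may be far smaller.

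\emph{The $s$-term.} The same substitution occurs here. Lemma~\ref{lem:step1-no_flip} gives $\Pr(F\mid w)\le\bar p$ uniformly, not $\Pr(F\mid w)\lesssim p_{\text{flip}}$. Also, Assumption~\ref{asm:bounded} bounds only second moments, so $\|x\|\|s\|$ is not bounded pointwise. What you can use is $\sup_w\Pr(F\mid w)\,\mathbb{E}[\|x\|\|s\|]\le\bar p\,M_x\|P\|_{\mathrm{op}}M_f$.

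So your argument proves $\|\Delta_{xu}\|_{\mathrm{op}}\le C\bar p$, and the final ``$\le C_x p_{\text{flip}}$'' does not follow.

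Nor can it be rescued. Take $k=2$, $K=1$, $P=I$, $s=(a,0)$, $x\equiv x_0$, no noise on the first coordinate, and let the second noise coordinate equal $B$ with probability $q$ and $-qB/(1-q)$ otherwise. This coordinate is centered and $\sigma$-sub-Gaussian with $\sigma\asymp B/\sqrt{\log(1/q)}$. Once $B>a$, flips occur exactly on the spike, so $p_{\text{flip}}=q$. Meanwhile $\Delta_{xu}=q\,x_0(-a,B)$, whose norm is at least $q\|x_0\|B\asymp\|x_0\|\,\sigma\,p_{\text{flip}}\sqrt{\log(1/p_{\text{flip}})}$. With $\sigma,\delta,M_x,M_f,\|P\|_{\mathrm{op}},k$ fixed and $q\to0$, the ratio $\|\Delta_{xu}\|_{\mathrm{op}}/p_{\text{flip}}$ is unbounded.

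The paper hides the same problem in its Step~1. There the bound $\|\mathbb{E}[\Delta\mid w,\text{flip}]\|_2^2\le kC_1\sigma^2$ is obtained in two flawed moves. First, conditioning on the flip is replaced by conditioning on a single coordinate event. Second, the result is divided by $\Pr(|\Delta_{ijr}|\ge\delta/2)\le 1$ in the wrong direction.

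What you should state is the lemma with $p_{\text{flip}}$ replaced by $\bar p$, and with $\Sigma_{uu}^{(0)}$ redefined. Your argument does prove that version, and the flip contribution stays exponentially small in $\delta^2/\sigma^2$, which is what Theorems~\ref{thm:1-High-Probability-Suppression-of-Artifacts} and~\ref{thm:prediction_error_precise_second_moment} are meant to convey.
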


\begin{proof}[\textbf{Proof of Lemma~\ref{lem:population-cov-decom}}]
Recall the notation for a fixed prompt index $i$:
$
y_{i,j} = f(w_i) + g(z_{i,j}),\,
v_{i,j} = P y_{i,j},\,
u_{i,j} = \mathrm{TopK}(v_{i,j})$.

And write, for brevity,
$
s_i := P f(w_i),\, \eta_{i,j} := g(z_{i,j}),\,
\Delta_{i,j} := P\eta_{i,j},
$
so that $v_{i,j} = s_i + \Delta_{i,j}$ and $u_{i,j} = \mathrm{TopK}(s_i + \Delta_{i,j})$.

\paragraph{Step 1}
Prove that there exist constants $C>0$, such that 
$||\mathbb{E}\big[\Delta_{ij}\mid w_i, \text{flip}\big]||_2^2 \leq C$.
Fix a coordinate $r \in \{1,\dots,k\}$. Denote $\Delta_{ijr} := (P\eta_{i,j})_r = p_r^\top \eta_{i,j}$. By the sub-Gaussian assumption,
\[
\mathbb{P}(|\Delta_{ijr}| \ge t) \le 2 \exp\Big(-\frac{t^2}{2\sigma^2}\Big), \quad \forall t > 0.
\]

The flip event implies that the Top-K selection has been altered, which by the margin assumption requires
\[
\text{flip} \implies |\Delta_{ijr}| \ge \delta/2 \quad \text{for some } r.
\]

Hence, for each coordinate,
\[
\mathbb{E}[\Delta_{ijr}^2 \mid w_i, \text{flip}] \le \mathbb{E}[\Delta_{ijr}^2 \mid |\Delta_{ijr}| \ge \delta/2].
\]

By the definition of conditional expectation,
\[
\mathbb{E}[\Delta_{ijr}^2 \mid |\Delta_{ijr}| \ge \delta/2] = \frac{\mathbb{E}[\Delta_{ijr}^2 \mathbf{1}_{|\Delta_{ijr}|\ge \delta/2}]}{\mathbb{P}(|\Delta_{ijr}|\ge \delta/2)}.
\]

Using the sub-Gaussian tail bound, the numerator can be bounded by integrating the tail:
\begin{align*}
&\mathbb{E}[\Delta_{ijr}^2 \mathbf{1}_{|\Delta_{ijr}|\ge \frac{\delta}{2}}] = \int_0^\infty \mathbb{P}(\Delta_{ijr}^2 \mathbf{1}_{|\Delta_{ijr}|\ge \frac{\delta}{2}} \ge t) \, dt\\
&= \int_0^{\frac{\delta^2}{4}} 1 \, dt + \int_{\frac{\delta^2}{4}}^{\infty} 2 \exp\Big(-\frac{t}{2\sigma^2}\Big) dt \\
&\le C_1 \sigma^2,
\end{align*}

where $C_1>0$ is a constant depending only on $\delta$ and $\sigma$. The denominator $\mathbb{P}(|\Delta_{ijr}|\ge \delta/2) \le 1$, so
\[
\mathbb{E}[\Delta_{ijr}^2 \mid |\Delta_{ijr}| \ge \delta/2] \le C_1 \sigma^2.
\]

Finally, summing over all $k$ coordinates,
\begin{align*}
&||\mathbb{E}\big[\Delta_{ij}\mid w_i, \text{flip}\big]||_2^2\leq
\mathbb{E}[\|\Delta_{i,j}\|_2^2 \mid w_i, \text{flip}] \\
&= \sum_{r=1}^k \mathbb{E}[\Delta_{ijr}^2 \mid w_i, \text{flip}] \le k C_1 \sigma^2,
\end{align*}

which proves the conclusion with $C := kC_1\sigma^2$.

\paragraph{Step 2}
Prove the lemma.

Using the law of total expectation:
\begin{align*}
&\mathbb{E}[\Delta_{ij} \mid w_i] = \mathbb{E}[\Delta_{ij} \mid w_i, \text{no-flip}] P(\text{no-flip} \mid w_i) \\
&\qquad +\mathbb{E}[\Delta_{ij} \mid w_i, \text{flip}] P(\text{flip} \mid w_i) \\
&=\mathbb{E}[\Delta_{ij} \mid w_i, \text{no-flip}](1-P(\text{flip} \mid w_i))\\
&\qquad +\mathbb{E}[\Delta_{ij} \mid w_i, \text{flip}] P(\text{flip} \mid w_i) \\
&= 0\\
&\mathbb{E}[\Delta_{ij} \mid w_i, \text{no-flip}]=-\frac{P(\text{flip} \mid w_i)}{1-P(\text{flip} \mid w_i)}\\
&\qquad\qquad\qquad\qquad\qquad \cdot \mathbb{E}[\Delta_{ij} \mid w_i, \text{flip}]
\end{align*}

Here we denote $\tilde{\Delta}=\mathbb{E}[\Delta_{ij} \mid w_i, \text{flip}]$, according to \textbf{Step 1}, $||\tilde{\Delta}||_2^2\leq C$,

\begin{align*}
&\mathbb{E}\big[u_{ij}\mid w_i\big]\\
=&\mathbb{E}\big[u_{ij}\mid w_i, \text{no-flip}\big]\cdot P(\text{no-flip}|w_i)\\
&+\mathbb{E}\big[u_{ij}\mid w_i, \text{flip}\big]\cdot P(\text{flip}|w_i)\\
=&(1-P(\text{flip}|w_i))I_{J_{w_i}}(s_i+\mathbb{E}[\Delta_{ij}|w_i,\text{no-flip}])\\
&+P(\text{flip}|w_i)I_{J_{\text{flip}}}(s_i+\mathbb{E}[\Delta_{ij}|w_i,\text{flip}])\\
=& I_{J_{w_i}}s_i+P(\text{flip}|w_i)(I_{J_{\text{flip}}}-I_{J_{w_i}})(s_i+\tilde{\Delta})
\end{align*}
\begin{align*}
\Sigma_{xu} &= \mathbb{E}[x_iu_{ij}^T]=\mathbb{E}[x_i\mathbb{E}[u_{ij}|w_i]^T]\\
\Delta_{xu}&=\Sigma_{xu}-\Sigma_{xu}^{(0)}\\
&=\mathbb{E}[x_iP(\text{flip}|w_i)(s_i+\tilde{\Delta})^T(I_{J_{\text{flip}}}-I_{J_{w_i}})^T]
\end{align*}
\begin{align*}
&\quad||\Delta_{xu}||_{op}\\
&\leq\sqrt{||\mathbb{E}[x_i]||_2^2}\\
&\quad\cdot\sqrt{||\mathbb{E}[P(\text{flip}|w_i)(I_{J_{\text{flip}}}-I_{J_{w_i}})(s_i+\tilde{\Delta})]||^2_2}\\
&\leq 2\sqrt{||\mathbb{E}[x_i]||_2^2}\sqrt{\mathbb{E}[P(\text{flip}|w_i)]^2}\\
&\quad \cdot\sqrt{||\mathbb{E}[Pf(w_i)]||_2^2+||\mathbb{E}[\tilde{\Delta}]||_2^2}\\
& \leq  2M_xp_{\text{flip}}\sqrt{||P||_{op}^2M_f+C}=C_xp_{\text{flip}}
\end{align*}
Similarly, $\Delta_{uu}=\Sigma_{uu}-\Sigma_{uu}^{(0)}\leq C_up_{\text{flip}}$
\end{proof}

\begin{lemma}[High-probability Concentration of Empirical Matrices]
\label{lem:empirical_matrix_concentration}
Denote the empirical matrix as follows:
\begin{align*}
&\widehat{\Sigma}_{xu} = \frac{1}{NM} \sum_{i=1}^N \sum_{j=1}^M x_i u_{i,j}^T \in \mathbb{R}^{d \times k},\\
&\widehat{\Sigma}_{uu} = \frac{1}{NM} \sum_{i=1}^N \sum_{j=1}^M u_{i,j} u_{i,j}^T \in \mathbb{R}^{k \times k}
\end{align*}
If 
$NM \geq C \frac{\sigma^2}{\varepsilon^2}(d + k + \log(1/\eta))$, 
for some constant $C > 0, ~\sigma^2>0$, then with probability at least $1-\eta$:
\begin{equation}
\|\widehat{\Sigma}_{xu} - \Sigma_{xu}\|_{\mathrm{op}} \leq \varepsilon \quad \text{and} \quad \|\widehat{\Sigma}_{uu} - \Sigma_{uu}\|_{\mathrm{op}} \leq \varepsilon
\end{equation}
\end{lemma}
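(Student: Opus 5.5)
We will prove the two bounds in Lemma~\ref{lem:empirical_matrix_concentration} with a standard covering-net argument for sub-Gaussian random matrices, treating $\widehat{\Sigma}_{xu}$ and $\widehat{\Sigma}_{uu}$ separately and then taking a union bound with confidence $\eta/2$ each.

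\textbf{Reduction to scalar sums.} Write $\|\widehat{\Sigma}_{xu}-\Sigma_{xu}\|_{\mathrm{op}}=\sup_{a\in S^{d_x-1},\,b\in S^{k-1}} a^\top(\widehat{\Sigma}_{xu}-\Sigma_{xu})b$. Fix $1/4$-nets $\mathcal N_a$ of $S^{d_x-1}$ and $\mathcal N_b$ of $S^{k-1}$, of sizes at most $9^{d_x}$ and $9^{k}$. The usual approximation argument gives
\[
\|\widehat{\Sigma}_{xu}-\Sigma_{xu}\|_{\mathrm{op}}\le 2\max_{a\in\mathcal N_a,\,b\in\mathcal N_b} a^\top(\widehat{\Sigma}_{xu}-\Sigma_{xu})b .
\]
For fixed $(a,b)$, the quantity $a^\top(\widehat{\Sigma}_{xu}-\Sigma_{xu})b$ is an average of the centered products $(a^\top x_i)(b^\top u_{i,j})-\mathbb{E}[\cdot]$.

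\textbf{Sub-exponential tails and Bernstein.} By Assumption~\ref{asm:subGaus}(2), $a^\top(x_i-\mu_x)$ and $b^\top(u_{i,j}-\mu_u(x_i))$ are sub-Gaussian. The means $\mu_x$ and $\mu_u$ are bounded via Assumption~\ref{asm:bounded}, so the products are sub-exponential with $\psi_1$-norm at most a constant times $\sigma^2$, where we absorb $\sigma_x,\sigma_u,M_x,M_u$ into a single $\sigma$. Bernstein's inequality then yields, for each fixed pair, a tail bound $2\exp(-c\,n\min(t^2/\sigma^4,t/\sigma^2))$, where $n$ is the effective number of independent summands. A union bound over the $9^{d_x+k}$ net points with $t=\varepsilon/2$ gives failure probability at most $\eta/2$ once
\[
n\gtrsim \frac{\sigma^2}{\varepsilon^2}\bigl(d+k+\log(1/\eta)\bigr),
\]
in the regime $\varepsilon\lesssim\sigma^2$, after rescaling $\sigma$. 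The same argument with $a,b\in S^{k-1}$ handles $\widehat{\Sigma}_{uu}$. The only change is that both factors are now projections of $u$, and the dimension term becomes $2k\le d+k$.

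\textbf{Main obstacle: dependence within a prompt.} The $NM$ samples are not independent, because the $M$ responses to prompt $i$ share $x_i$ and $w_i$. We will handle this by conditioning on $\{x_i\}$. Given the prompts, the $u_{i,j}$ are independent across all $(i,j)$, since the $z_{i,j}$ are independent of $W$ and $X$. We then split the deviation into two parts:
\[
\widehat{\Sigma}_{xu}-\Sigma_{xu}=\underbrace{\tfrac{1}{NM}\textstyle\sum_{i,j} x_i\bigl(u_{i,j}-\mu_u(x_i)\bigr)^\top}_{(\mathrm{I})}+\underbrace{\tfrac1N\textstyle\sum_i x_i\mu_u(x_i)^\top-\Sigma_{xu}}_{(\mathrm{II})}.
\]
Term (I) is a sum of $NM$ conditionally independent sub-exponential terms and gets the $NM$ rate above. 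Term (II) only has $N$ independent summands. Honestly, it concentrates at rate $N^{-1/2}$, so we would either read the sample-size condition as effectively controlling $N$ for this part, or argue that $\mu_u(x_i)$ depends on $x_i$ only through the near-deterministic map $I_{J_w}Pf(w)$, up to $O(p_{\mathrm{flip}})$ by Lemma~\ref{lem:population-cov-decom}. In the latter case its fluctuation can be absorbed into the constant $C$.

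We expect this step, reconciling the stated $NM$ sample complexity with the within-prompt dependence, to be the delicate point. The first thing we would try is the conditioning decomposition above, stating clearly which constant absorbs term (II).
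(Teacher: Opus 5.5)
There is a genuine gap, and it is the one you flagged. Your second way out does not work, and no argument can close the gap under the stated hypothesis, because the lemma is false with only an $NM$ condition.

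Term (II), $\frac1N\sum_i x_i\mu_u(x_i)^\top-\Sigma_{xu}$, does not involve $M$ at all. Fix $N$ (say $N=1$) and let $M\to\infty$. The condition $NM\ge C\frac{\sigma^2}{\varepsilon^2}(d+k+\log(1/\eta))$ is eventually met and term (I) tends to zero. Meanwhile $\widehat\Sigma_{xu}\to\frac1N\sum_i x_i\mu_u(x_i)^\top$, whose distance from $\mathbb E[x\mu_u(x)^\top]$ is a nondegenerate random quantity independent of $M$, unless $x\mu_u(x)^\top$ is almost surely constant.

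It does not help that $\mu_u(x_i)\approx I_{J_{w_i}}Pf(w_i)$ is, up to $O(p_{\mathrm{flip}})$, a deterministic function of $x_i$. The fluctuation of (II) comes from the randomness of the prompts themselves. It is therefore of order $N^{-1/2}$ times the spread of $x\mu_u(x)^\top$, $p_{\mathrm{flip}}$ plays no role in it, and it cannot be absorbed into $C$. The same obstruction hits $\widehat\Sigma_{uu}$ through $\frac1N\sum_i\mathbb E[u_{i,j}u_{i,j}^\top\mid x_i]-\Sigma_{uu}$.

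Only your first option is viable, and it changes the statement. You need an explicit condition such as $N\gtrsim\frac{\sigma_B^2}{\varepsilon^2}(d+k+\log(1/\eta))$ for term (II), with $\sigma_B$ the scale of $x\mu_u(x)^\top$; the $NM$ condition controls only term (I). This is also where the paper's own proof lands. It uses the same split: its $A_i$ is your (I) and its $B_i$ is your (II). It bounds them with truncation plus matrix Bernstein rather than your net plus scalar Bernstein. But its final bound contains $M_x^2M_u^2/N$ under the square root and $(L_A+L_B)\log((d+k)/\eta)/N$, which it simply declares to be at most $\varepsilon$ (``when $M,N$ large enough''). Write the $N$-requirement into the hypothesis rather than hiding it in $C$.

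A second repair is also needed. Your sub-exponential claim treats $\mu_u(x_i)$ like the constant $\mu_x$, but $\mu_u(x_i)$ is random, and Assumption~\ref{asm:bounded} only gives $\mathbb E\|\mu_u(x_i)\|_2^2\le M_u^2$. So Bernstein needs an extra sub-Gaussian or boundedness assumption on $\mu_u(x)$ (for instance on $f(w)$). This applies to the products in (II) and to $\widehat\Sigma_{uu}$, where $b^\top u$ is not centered at a constant. The paper's claim that $\|B_i\|_{\mathrm{op}}$ is sub-exponential rests on the same unstated assumption.
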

\begin{proof}[\textbf{Proof of Lemma~\ref{lem:empirical_matrix_concentration}}] 
Fix a prompt index $i$. Define
\begin{align*}
&\overline{u}_i := \frac{1}{M}\sum_{j=1}^M u_{i,j}, 
\, 
\mu_u(x_i) := \mathbb{E}[u_{i,j}\mid x_i].
\,\\
&\widehat{\Sigma}_{xu} = \frac{1}{NM}\sum_{i=1}^N\sum_{j=1}^M x_i u_{i,j}^\top,
\,
\Sigma_{xu} = \mathbb{E}[x_i u_{i,j}^\top].
\end{align*}

\paragraph{Step 1 (Block decomposition).}  
Rewrite
\begin{align*}
&\widehat{\Sigma}_{xu} - \Sigma_{xu} = \frac{1}{N}\sum_{i=1}^N Y_i,
\\ 
&Y_i := \frac{1}{M}\sum_{j=1}^M \Big( x_i u_{i,j}^\top - \mathbb{E}[x_i u_{i,j}^\top]\Big).  
\end{align*}

Conditioning on $x_i$, we decompose
\[
Y_i 
= \underbrace{x_i\big(\overline{u}_i - \mu_u(x_i)\big)^\top}_{A_i}
+ \underbrace{\Big(x_i \mu_u(x_i)^\top - \mathbb{E}[x_i u^\top]\Big)}_{B_i}
\]

Here, $A_i$ is the average of $M$ independent responses, $\mathbb{E}[A_i\mid x_i]=0$. $B_i$ depends only on $x_i$, and $\mathbb{E}[B_i]=0$. Thus, 
\begin{align*}
\mathbb{E}[Y_i]&=\mathbb{E}[\mathbb{E}[Y_i|x_i]]=\mathbb{E}[A_i]+\mathbb{E}[B_i]\\
&=\mathbb{E}[\mathbb{E}[A_i|x_i]]+\mathbb{E}[B_i]=0 
\end{align*}

$\{Y_i\}_{i=1}^N$ remain independent mean-zero random matrices.

\paragraph{Step 2 (Concentration of $A_i$).}  
Condition on $x_i$. Then
\[
A_i = x_i(\overline{u}_i - \mu_u(x_i))^\top.
\]
By Assumption~\ref{asm:subGaus}, each $u_{i,j}-\mu_u(x_i)$ is conditionally $\sigma_u$-sub-Gaussian.  
By vector Bernstein (or $\varepsilon$-net argument), for any $\delta>0$,
\[
\Pr\Big( \|\overline{u}_i - \mu_u(x_i)\|_2 \ge C_{01} \sigma_u \sqrt{\tfrac{k + \log(1/\delta)}{M}} \,\Big|\, x_i \Big) \le \delta.
\]

By union bound over $i=1,\dots,N$, with probability at least $1-\eta/4$,
\[
\|\overline{u}_i - \mu_u(x_i)\|_2 \le C_{01}\sigma_u \sqrt{\tfrac{k+\log(N/\eta)}{M}}, 
\quad \forall i.
\]

Thus
\begin{equation}\label{eq:Ai_bound}
\|A_i\|_{\mathrm{op}} \le \|x_i\|_2 \cdot C_{01}\sigma_u \sqrt{\tfrac{k+\log(N/\eta)}{M}}.
\end{equation}

\paragraph{Step 3 (Bounding $B_i$).}  
We have
\[
B_i = x_i \mu_u(x_i)^\top - \mathbb{E}[x_i u^\top].
\]
Clearly $\mathbb{E}[B_i]=0$. Using Cauchy–Schwarz inequality,
\[
\|B_i\|_{\mathrm{op}} \le \|x_i\|_2 \|\mu_u(x_i)\|_2 + \|\mathbb{E}[x_i u^\top]\|_{\mathrm{op}}.
\]

From $\mathbb{E}\|x_i\|_2^2 \le M_x^2$ and $\mathbb{E}\|\mu_u(x_i)\|_2^2 \le M_u^2$, we get
\[
\mathbb{E}\|B_i\|_{\mathrm{op}}^2 \lesssim M_x^2 M_u^2.
\]

Thus the variance contribution of $B_i$ is of constant order (not scaled by $1/M$).

\paragraph{Step 4 (Truncation and Bernstein).}  
To apply matrix Bernstein, we need a uniform almost-sure bound on $\|Y_i\|_{\mathrm{op}}$.  
Define the truncated version
\[
B_i^{(\tau)} := B_i \cdot \mathbf{1}\{\|B_i\|_{\mathrm{op}} \le \tau\},
\quad 
Y_i^{(\tau)} := A_i + B_i^{(\tau)}.
\]

Since $\|B_i\|_{\mathrm{op}}$ is sub-exponential (as quadratic form of sub-Gaussians), for any $\eta>0$ we can choose
\[
\tau \asymp M_x M_u \log(N/\eta)
\]
so that with probability at least $1-\eta/4$ simultaneously for all $i$,
\[
\|B_i\|_{\mathrm{op}} \le \tau.
\]

On this event, we have
\[
\|Y_i\|_{\mathrm{op}} \le \|A_i\|_{\mathrm{op}} + \tau \le L_A + L_B,
\]
where
\begin{align*}
&L_A := C_{01} \max_i \|x_i\|_2 \cdot \sigma_u \sqrt{\tfrac{k+\log(N/\eta)}{M}}, 
\\ 
&L_B := C_{02} M_x M_u \log(N/\eta).
\end{align*}

Thus we may apply Matrix Bernstein\cite{Tropp_2011}. Define the variance proxy
\[
\sigma_Y^2 = \max\Big\{ \Big\|\sum_{i=1}^N \mathbb{E}[Y_i Y_i^\top]\Big\|_{\mathrm{op}},\,
\Big\|\sum_{i=1}^N \mathbb{E}[Y_i^\top Y_i]\Big\|_{\mathrm{op}} \Big\}.
\]

We estimate
\[
\mathbb{E}[A_i A_i^\top] = O\!\Big(\frac{\|x_i\|_2^2 \sigma_u^2 k}{M}\Big),
\quad 
\mathbb{E}[B_i B_i^\top] = O(M_x^2 M_u^2),
\]
so overall
\[
\sigma_Y^2 \lesssim N\Big(\frac{M_x^2\sigma_u^2 k}{M} + M_x^2 M_u^2\Big).
\]

Now Bernstein inequality yields: for all $\varepsilon>0$,
\begin{align*}
&\Pr\!\left( \left\|\frac{1}{N}\sum_{i=1}^N Y_i \right\|_{\mathrm{op}} \ge \varepsilon \right)\\
&\le (d+k)\exp\!\left(
-\frac{N^2\varepsilon^2/2}{\sigma_Y^2 + (L_A+L_B)N\varepsilon/3}
\right).
\end{align*}

\paragraph{Conclusion.}  
Combining Steps 1–4 and union bounding over the failure probabilities, we conclude that with probability at least $1-\eta$,
\begin{align*}
\left\|\widehat{\Sigma}_{xu} - \Sigma_{xu}\right\|_{\mathrm{op}}&\lesssim \sqrt{\frac{\sigma_u^2 M_x^2 k}{NM}+ \frac{M_x^2 M_u^2}{N}}\\
&\quad + (L_A+L_B)\frac{\log((d+k)/\eta)}{N}\\
&\leq \varepsilon
\end{align*}

Similarly, when $M,N$ large enough, we can prove that 
\[\|\widehat{\Sigma}_{uu} - \Sigma_{uu}\|_{\mathrm{op}} \leq \varepsilon\]
\end{proof}

\begin{corollary}
\label{cor:empirical_bounds}
Combining Lemma~\ref{lem:population-cov-decom}  and Lemma~\ref{lem:empirical_matrix_concentration} and:
\begin{align*}
\|\widehat{\Sigma}_{xu} - \Sigma_{xu}^{(0)}\|_{\text{op}} &\leq \varepsilon + C_x p_{\text{flip}}\\
\|\widehat{\Sigma}_{uu} - \Sigma_{uu}^{(0)}\|_{\text{op}} &\leq \varepsilon + C_u p_{\text{flip}}
\end{align*}
\end{corollary}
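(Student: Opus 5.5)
The corollary follows from the triangle inequality. I will insert the population matrices $\Sigma_{xu}$ and $\Sigma_{uu}$ as intermediate terms, then bound each piece with the earlier lemmas. Concretely, I write
\[
\widehat{\Sigma}_{xu} - \Sigma_{xu}^{(0)} = \bigl(\widehat{\Sigma}_{xu} - \Sigma_{xu}\bigr) + \bigl(\Sigma_{xu} - \Sigma_{xu}^{(0)}\bigr) = \bigl(\widehat{\Sigma}_{xu} - \Sigma_{xu}\bigr) + \Delta_{xu},
\]
where $\Delta_{xu}$ is the deviation defined in Lemma~\ref{lem:population-cov-decom}. Applying subadditivity of the operator norm gives
\[
\|\widehat{\Sigma}_{xu} - \Sigma_{xu}^{(0)}\|_{\text{op}} \le \|\widehat{\Sigma}_{xu} - \Sigma_{xu}\|_{\text{op}} + \|\Delta_{xu}\|_{\text{op}}.
\]
The same decomposition, with $\Delta_{uu}$ in place of $\Delta_{xu}$, handles $\widehat{\Sigma}_{uu} - \Sigma_{uu}^{(0)}$.

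The two terms are bounded separately. The population perturbations are deterministic, so they hold with no probability qualifier: Lemma~\ref{lem:population-cov-decom} gives $\|\Delta_{xu}\|_{\text{op}} \le C_x p_{\text{flip}}$ and $\|\Delta_{uu}\|_{\text{op}} \le C_u p_{\text{flip}}$. For the sampling fluctuations, I invoke Lemma~\ref{lem:empirical_matrix_concentration} under its sample-size condition $NM \ge C\frac{\sigma^2}{\varepsilon^2}(d+k+\log(1/\eta))$. On the resulting event of probability at least $1-\eta$, both $\|\widehat{\Sigma}_{xu}-\Sigma_{xu}\|_{\text{op}} \le \varepsilon$ and $\|\widehat{\Sigma}_{uu}-\Sigma_{uu}\|_{\text{op}} \le \varepsilon$ hold simultaneously. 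Adding the two bounds yields both displayed inequalities on that same event. I will state explicitly that the corollary is meant to hold with probability at least $1-\eta$ under the hypotheses of Lemma~\ref{lem:empirical_matrix_concentration}, since the statement leaves this implicit.

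There is no real obstacle here; the only point needing care is bookkeeping. First, I must confirm that Lemma~\ref{lem:empirical_matrix_concentration} controls both empirical matrices on a single event, so that no second union bound degrades the probability from $1-\eta$. Its proof handles $\widehat{\Sigma}_{uu}$ "similarly"; if a separate event were needed, I would apply the lemma with $\eta/2$ for each matrix and absorb the change into the constant $C$. Second, the notation $\widehat{\Sigma}_{xu}\in\mathbb{R}^{d\times k}$ in the lemma must be read with $d$ equal to the prompt dimension $d_x$, so that the dimensions of $\Sigma_{xu}^{(0)}$ match.
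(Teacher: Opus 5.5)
Your proposal is correct and is exactly the argument the paper intends: the corollary is stated without a separate proof. It is meant to follow from the triangle inequality through the population matrices $\Sigma_{xu},\Sigma_{uu}$, with the deterministic bias bounds from Lemma~\ref{lem:population-cov-decom} and the sampling bounds from Lemma~\ref{lem:empirical_matrix_concentration}. Your note that the conclusion holds on the probability-$(1-\eta)$ event of Lemma~\ref{lem:empirical_matrix_concentration} is a correct clarification of what the paper leaves implicit, and that lemma's statement already covers both matrices simultaneously.
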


\begin{lemma}[High Probability Concentration of OLS Decoder]
\label{lem:param_perturb}
Under the conditions of previous lemmas, $NM$ is large enough, for some constant $C_{L_1} > 0$, then with probability at least $1-\eta$:
\begin{equation}
\|\widehat{L} - L^*\|_{\text{op}} \leq C_{L_1}(\varepsilon + p_{\text{flip}})
\end{equation}
where $L^* = \Sigma_{xu} \Sigma_{uu}^{-1}$ and $C_{L_1}$ depends on $\lambda_0, C_x, C_u$.
\end{lemma}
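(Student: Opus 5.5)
The decoder is an OLS fit, so I will treat $\widehat{L}$ as a perturbation of $L^*$. The OLS problem defining $\mathcal{L}_{\text{pd}}$ has the closed form $\widehat{L} = \widehat{\Sigma}_{xu}\widehat{\Sigma}_{uu}^{-1}$ once we work with centered quantities; the intercept $b$ is profiled out via $\hat b = \bar x - \widehat{L}\bar u$. I will write $\widehat{\Sigma}_{xu},\widehat{\Sigma}_{uu}$ as the centered versions, or equivalently absorb the constant coordinate into $u$, which does not change the argument. The comparison point is the population solution $L^* = \Sigma_{xu}\Sigma_{uu}^{-1}$. The bound should then follow from the standard algebraic perturbation identity
\begin{align*}
\widehat{L} - L^* &= (\widehat{\Sigma}_{xu}-\Sigma_{xu})\widehat{\Sigma}_{uu}^{-1} + \Sigma_{xu}\bigl(\widehat{\Sigma}_{uu}^{-1}-\Sigma_{uu}^{-1}\bigr)\\
&= (\widehat{\Sigma}_{xu}-\Sigma_{xu})\widehat{\Sigma}_{uu}^{-1} + \Sigma_{xu}\Sigma_{uu}^{-1}(\Sigma_{uu}-\widehat{\Sigma}_{uu})\widehat{\Sigma}_{uu}^{-1},
\end{align*}
combined with operator-norm bounds on each factor.

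I will carry out the steps in the following order.
\begin{enumerate}
\item \textbf{Concentration.} On the event of Lemma~\ref{lem:empirical_matrix_concentration}, which has probability at least $1-\eta$ once $NM$ is large enough, both $\|\widehat{\Sigma}_{xu}-\Sigma_{xu}\|_{\mathrm{op}}\le\varepsilon$ and $\|\widehat{\Sigma}_{uu}-\Sigma_{uu}\|_{\mathrm{op}}\le\varepsilon$ hold.
\item \textbf{Invertibility of $\widehat{\Sigma}_{uu}$.} By Assumption~\ref{asm:eigenvalue_pos} and Weyl's inequality, $\lambda_{\min}(\widehat{\Sigma}_{uu})\ge \lambda_0-\varepsilon\ge\lambda_0/2$ provided $\varepsilon\le\lambda_0/2$. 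This is the precise sense of ``$NM$ large enough'': it is the sample-size condition of Lemma~\ref{lem:empirical_matrix_concentration} with $\varepsilon$ capped at $\lambda_0/2$. It gives $\|\widehat{\Sigma}_{uu}^{-1}\|_{\mathrm{op}}\le 2/\lambda_0$ and, likewise, $\|\Sigma_{uu}^{-1}\|_{\mathrm{op}}\le 1/\lambda_0$.
\item \textbf{Bound on $\Sigma_{xu}$.} Lemma~\ref{lem:population-cov-decom} gives $\|\Sigma_{xu}\|_{\mathrm{op}}\le\|\Sigma_{xu}^{(0)}\|_{\mathrm{op}}+C_xp_{\text{flip}}$. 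Alternatively, Cauchy--Schwarz with Assumption~\ref{asm:bounded} gives $\|\Sigma_{xu}\|_{\mathrm{op}}\le M_xM_u$ directly.
\item \textbf{Combination.} Substituting into the identity yields
\[
\|\widehat{L}-L^*\|_{\mathrm{op}}\le \frac{2}{\lambda_0}\,\varepsilon + \frac{M_xM_u}{\lambda_0}\cdot\frac{2}{\lambda_0}\,\varepsilon = O(\varepsilon).
\]
This is trivially bounded by $C_{L_1}(\varepsilon+p_{\text{flip}})$.
\end{enumerate}

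The $p_{\text{flip}}$ term appears in the statement presumably to match the downstream comparison with $L^{(0)}$. If one instead runs the argument through Corollary~\ref{cor:empirical_bounds}, the same identity gives the $C_x,C_u$ dependence claimed. In that version, $\Sigma_{uu}^{(0)}$ must also be kept invertible, which holds when $C_up_{\text{flip}}\le\lambda_0/4$.

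The main obstacle is Step 2. Everything hinges on the smallest eigenvalue of $\widehat{\Sigma}_{uu}$ staying bounded away from zero, and this works only because Lemma~\ref{lem:empirical_matrix_concentration} controls the \emph{operator} norm. The intercept also needs care: the cross terms coming from centering are controlled by $\|\bar x-\mathbb{E}x\|_2$ and $\|\bar u-\mathbb{E}u\|_2$, which concentrate at the same sub-Gaussian rate by Assumption~\ref{asm:subGaus}. They are therefore absorbed into $\varepsilon$ after adjusting the constant $C$ in the sample-size condition.
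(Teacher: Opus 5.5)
Your proposal is correct and follows the paper's proof essentially step for step: the same perturbation identity, Weyl's inequality giving $\|\widehat{\Sigma}_{uu}^{-1}\|_{\mathrm{op}}\le 2/\lambda_0$, the resolvent identity for $\widehat{\Sigma}_{uu}^{-1}-\Sigma_{uu}^{-1}$, and submultiplicativity. Your version is slightly cleaner in two places: you invoke Lemma~\ref{lem:empirical_matrix_concentration} directly, so $p_{\text{flip}}$ is pure slack, and you bound $\|\Sigma_{xu}\|_{\mathrm{op}}\le M_xM_u$ by Cauchy--Schwarz, whereas the paper cites Corollary~\ref{cor:empirical_bounds} (which literally controls the distance to $\Sigma^{(0)}$, not $\Sigma$) and asserts $\|\Sigma_{xu}\|_{\mathrm{op}}\le C_x$ without justification. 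Your intercept handling goes beyond the paper, which silently uses the no-intercept formula, but three caveats apply: centering (or augmenting $u$ by a constant) redefines $L^*$ as the centered population slope, it needs a lower bound on $\lambda_{\min}(\mathrm{Cov}(u))$ rather than on $\lambda_{\min}(\mathbb{E}[uu^\top])$, and $\bar x,\bar u$ average over only $N$ independent prompts, so absorbing the centering cross terms requires $N$ large, not just $NM$ (an issue already latent in Lemma~\ref{lem:empirical_matrix_concentration}).
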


\begin{proof}[\textbf{Proof of Lemma~\ref{lem:param_perturb}}]
By definition,
\[
\widehat{L} - L^* \;=\; \widehat{\Sigma}_{xu}\widehat{\Sigma}_{uu}^{-1} - \Sigma_{xu}\Sigma_{uu}^{-1}.
\]
Adding and subtracting $\Sigma_{xu}\widehat{\Sigma}_{uu}^{-1}$ yields the standard perturbation decomposition:
\begin{equation}\label{eq:perturb-decomp}
\widehat{L} - L^* 
= (\widehat{\Sigma}_{xu} - \Sigma_{xu})\widehat{\Sigma}_{uu}^{-1}
+ \Sigma_{xu}\bigl(\widehat{\Sigma}_{uu}^{-1} - \Sigma_{uu}^{-1}\bigr).
\end{equation}

\paragraph{Step 1 (Bounding the first term)}  
By submultiplicativity of the operator norm,
\[
\|(\widehat{\Sigma}_{xu} - \Sigma_{xu})\widehat{\Sigma}_{uu}^{-1}\|_{\mathrm{op}}
\;\leq\; \|\widehat{\Sigma}_{xu} - \Sigma_{xu}\|_{\mathrm{op}} \cdot \|\widehat{\Sigma}_{uu}^{-1}\|_{\mathrm{op}}.
\]
From Corollary~\ref{cor:empirical_bounds}, we have
\[
\|\widehat{\Sigma}_{xu} - \Sigma_{xu}\|_{\mathrm{op}} \;\leq\; \varepsilon + C_x p_{\mathrm{flip}}.
\]
Moreover, since $\lambda_{\min}(\Sigma_{uu}) \geq \lambda_0 > 0$ and $\|\widehat{\Sigma}_{uu} - \Sigma_{uu}\|_{\mathrm{op}} \leq \varepsilon$ with high probability, a standard Weyl inequality argument implies
\[
\lambda_{\min}(\widehat{\Sigma}_{uu}) \;\geq\; \lambda_0 - \varepsilon \;\geq\; \frac{\lambda_0}{2},
\]
for $\varepsilon$ sufficiently small. Consequently,
\[
\|\widehat{\Sigma}_{uu}^{-1}\|_{\mathrm{op}} \;\leq\; \frac{2}{\lambda_0}.
\]

\paragraph{Step 2 (Bounding the second term)}  
For the inverse perturbation term, we use the standard matrix identity
\[
\widehat{\Sigma}_{uu}^{-1} - \Sigma_{uu}^{-1} 
= \widehat{\Sigma}_{uu}^{-1} \bigl(\Sigma_{uu} - \widehat{\Sigma}_{uu}\bigr)\Sigma_{uu}^{-1}.
\]
Taking operator norms and applying submultiplicativity yields
\[
\|\widehat{\Sigma}_{uu}^{-1} - \Sigma_{uu}^{-1}\|_{\mathrm{op}}
\;\leq\; \|\widehat{\Sigma}_{uu}^{-1}\|_{\mathrm{op}} \cdot \|\Sigma_{uu} - \widehat{\Sigma}_{uu}\|_{\mathrm{op}} \cdot \|\Sigma_{uu}^{-1}\|_{\mathrm{op}}.
\]
By assumption $\|\Sigma_{uu}^{-1}\|_{\mathrm{op}} \leq 1/\lambda_0$, and from the previous bound $\|\widehat{\Sigma}_{uu}^{-1}\|_{\mathrm{op}} \leq 2/\lambda_0$. Using Corollary~\ref{cor:empirical_bounds}, we also have
\[
\|\Sigma_{uu} - \widehat{\Sigma}_{uu}\|_{\mathrm{op}} \;\leq\; \varepsilon + C_u p_{\mathrm{flip}}.
\]
Hence,
\begin{align*}
\|\widehat{\Sigma}_{uu}^{-1} - \Sigma_{uu}^{-1}\|_{\mathrm{op}}
\;&\leq\; \frac{2}{\lambda_0} \cdot (\varepsilon + C_u p_{\mathrm{flip}}) \cdot \frac{1}{\lambda_0}
\;\\
&=\; \frac{2}{\lambda_0^2} \, (\varepsilon + C_u p_{\mathrm{flip}}).
\end{align*}

\paragraph{Step 3 (Combining bounds)}  
Substituting the two bounds back into the decomposition of ~\eqref{eq:perturb-decomp}, and using $\|\Sigma_{xu}\|_{\mathrm{op}} \leq C_x$ (from moment conditions), we obtain
\[
\|\widehat{L} - L^*\|_{\mathrm{op}}
\;\leq\; \frac{2}{\lambda_0} \, (\varepsilon + C_x p_{\mathrm{flip}})
+ \frac{2C_x}{\lambda_0^2} \, (\varepsilon + C_u p_{\mathrm{flip}}).
\]
Let $C_{L_1}=\max( \frac{2}{\lambda_0} +\frac{2C_x}{\lambda_0^2},\frac{2}{\lambda_0}C_x+\frac{2C_x}{\lambda_0^2}C_u)$, we have
\[
\|\widehat{L} - L^*\|_{\text{op}} \leq C_{L_1}(\varepsilon + p_{\text{flip}})
\]
\end{proof}

\begin{lemma}[High Probability Concentration of Ideal Decoder]
\label{lem:ideal_comparison}
Under the conditions of previous lemmas, if $NM$ is large enough, for some constant $C_{L_2} > 0$, then with probability at least $1-\eta$:
\begin{equation}
\|L^* - L^{(0)}\|_{\mathrm{op}} \;\leq\; C_{L_2} \, p_{\mathrm{flip}},
\end{equation}
where $L^{(0)} \;=\; \Sigma_{xu}^{(0)} \bigl(\Sigma_{uu}^{(0)}\bigr)^{-1}$, $C_{L_2}>0$ is a constant depending only on $(\lambda_0, C_x, C_u)$.
\end{lemma}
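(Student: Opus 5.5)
This is a deterministic population-level perturbation bound. It involves no sampling, so the ``with probability at least $1-\eta$'' holds trivially; the random-sample part is already handled in Lemma~\ref{lem:param_perturb}. I will therefore mirror the proof of Lemma~\ref{lem:param_perturb}, replacing the empirical/population pair $(\widehat{\Sigma},\Sigma)$ by the population/ideal pair $(\Sigma,\Sigma^{(0)})$. The needed input is Lemma~\ref{lem:population-cov-decom}, which gives $\Sigma_{xu}=\Sigma_{xu}^{(0)}+\Delta_{xu}$ and $\Sigma_{uu}=\Sigma_{uu}^{(0)}+\Delta_{uu}$ with $\|\Delta_{xu}\|_{\mathrm{op}}\le C_x p_{\mathrm{flip}}$ and $\|\Delta_{uu}\|_{\mathrm{op}}\le C_u p_{\mathrm{flip}}$.

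First I add and subtract $\Sigma_{xu}^{(0)}\Sigma_{uu}^{-1}$:
\[
L^*-L^{(0)}=\Delta_{xu}\,\Sigma_{uu}^{-1}+\Sigma_{xu}^{(0)}\bigl(\Sigma_{uu}^{-1}-(\Sigma_{uu}^{(0)})^{-1}\bigr).
\]
For the inverse difference I use the resolvent identity
\[
\Sigma_{uu}^{-1}-(\Sigma_{uu}^{(0)})^{-1}=-\Sigma_{uu}^{-1}\Delta_{uu}(\Sigma_{uu}^{(0)})^{-1}.
\]
Assumption~\ref{asm:eigenvalue_pos} gives $\|\Sigma_{uu}^{-1}\|_{\mathrm{op}}\le 1/\lambda_0$. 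The first term is then at most $C_x p_{\mathrm{flip}}/\lambda_0$.

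The second term needs a bound on $\|\Sigma_{xu}^{(0)}\|_{\mathrm{op}}$. By Cauchy--Schwarz and Assumption~\ref{asm:bounded}, $\|\Sigma_{xu}^{(0)}\|_{\mathrm{op}}\le M_x\|P\|_{\mathrm{op}}M_f$. Alternatively, $\|\Sigma_{xu}^{(0)}\|_{\mathrm{op}}\le\|\Sigma_{xu}\|_{\mathrm{op}}+C_xp_{\mathrm{flip}}$, which fits the convention used in Lemma~\ref{lem:param_perturb}. In either case I absorb this constant into $C_x$-type constants.

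The main obstacle is lower-bounding $\lambda_{\min}(\Sigma_{uu}^{(0)})$, since Assumption~\ref{asm:eigenvalue_pos} only controls $\Sigma_{uu}$. I will use Weyl's inequality:
\[
\lambda_{\min}(\Sigma_{uu}^{(0)})\ge\lambda_0-C_u p_{\mathrm{flip}}\ge\lambda_0/2.
\]
This requires $p_{\mathrm{flip}}\le\lambda_0/(2C_u)$. That condition is guaranteed under Assumption~\ref{asm:topKmargin} via Lemma~\ref{lem:step1-no_flip}, since $p_{\mathrm{flip}}\le 2k e^{-\delta^2/8\sigma^2}$, when the margin-to-noise ratio is large enough. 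I would state this as a standing smallness condition, in the same spirit as the ``$\varepsilon$ sufficiently small'' step of Lemma~\ref{lem:param_perturb}. In the complementary regime $p_{\mathrm{flip}}>\lambda_0/(2C_u)$, one would need a separate bound $\|L^{(0)}\|_{\mathrm{op}}<\infty$, and $\|L^*-L^{(0)}\|_{\mathrm{op}}\le p_{\mathrm{flip}}\cdot 2C_u(\|L^*\|+\|L^{(0)}\|)/\lambda_0$. I expect to simply assume the small-$p_{\mathrm{flip}}$ regime.

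With this bound, $\|(\Sigma_{uu}^{(0)})^{-1}\|_{\mathrm{op}}\le 2/\lambda_0$, so the second term is at most $2C_xC_u p_{\mathrm{flip}}/\lambda_0^2$. Combining the two terms gives
\[
\|L^*-L^{(0)}\|_{\mathrm{op}}\le\Bigl(\tfrac{C_x}{\lambda_0}+\tfrac{2C_xC_u}{\lambda_0^2}\Bigr)p_{\mathrm{flip}}=:C_{L_2}\,p_{\mathrm{flip}},
\]
and $C_{L_2}$ depends only on $(\lambda_0,C_x,C_u)$, as claimed. This bound will then combine with Lemma~\ref{lem:param_perturb} through the triangle inequality to yield Theorem~\ref{thm:1-High-Probability-Suppression-of-Artifacts}.
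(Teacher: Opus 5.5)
Your proof is correct and follows the paper's argument: the same add-and-subtract decomposition, the inverse-perturbation identity $\Sigma_{uu}^{-1}-(\Sigma_{uu}^{(0)})^{-1}=-\Sigma_{uu}^{-1}\Delta_{uu}(\Sigma_{uu}^{(0)})^{-1}$, and the bounds $C_xp_{\mathrm{flip}}$, $C_up_{\mathrm{flip}}$ from Lemma~\ref{lem:population-cov-decom}. It is also more careful than the paper's write-up in two places: your pairing $\Delta_{xu}\Sigma_{uu}^{-1}+\Sigma_{xu}^{(0)}\bigl(\Sigma_{uu}^{-1}-(\Sigma_{uu}^{(0)})^{-1}\bigr)$ actually equals $L^*-L^{(0)}$, whereas the paper's pairs $\Delta_{xu}$ with $(\Sigma_{uu}^{(0)})^{-1}$ and is off by the second-order term $\Delta_{xu}\bigl((\Sigma_{uu}^{(0)})^{-1}-\Sigma_{uu}^{-1}\bigr)$; and where the paper simply asserts $\lambda_{\min}(\Sigma_{uu}^{(0)})\ge\lambda_0$, which Assumption~\ref{asm:eigenvalue_pos} does not supply, you derive $\lambda_{\min}(\Sigma_{uu}^{(0)})\ge\lambda_0/2$ by Weyl under an explicit smallness condition on $p_{\mathrm{flip}}$, at the cost of only a factor $2$ in the second constant.
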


\begin{proof}[\textbf{Proof of Lemma~\ref{lem:ideal_comparison}}]
We start from the decomposition
\begin{align}
&L^* - L^{(0)}
= \Sigma_{xu}\Sigma_{uu}^{-1} - \Sigma_{xu}^{(0)}\bigl(\Sigma_{uu}^{(0)}\bigr)^{-1} \\
&= \bigl(\Sigma_{xu} - \Sigma_{xu}^{(0)}\bigr)\Sigma_{uu}^{(0)^{-1}} + \Sigma_{xu}^{(0)}\Bigl(\Sigma_{uu}^{-1} - \Sigma_{uu}^{(0)^{-1}}\Bigr).
\label{eq:ideal-decomp}
\end{align}

\paragraph{Step 1 (Bounding the first term)}  
By Lemma~\ref{lem:population-cov-decom}, we have
\[
\|\Sigma_{xu} - \Sigma_{xu}^{(0)}\|_{\mathrm{op}} \;\leq\; C_x \, p_{\mathrm{flip}}.
\]
Furthermore, since $\lambda_{\min}(\Sigma_{uu}^{(0)}) \geq \lambda_0$, it follows that
\[
\|(\Sigma_{uu}^{(0)})^{-1}\|_{\mathrm{op}} \;\leq\; \frac{1}{\lambda_0}.
\]
Therefore,
\begin{equation}
\|(\Sigma_{xu} - \Sigma_{xu}^{(0)})(\Sigma_{uu}^{(0)})^{-1}\|_{\mathrm{op}}
\;\leq\; \frac{C_x}{\lambda_0} \, p_{\mathrm{flip}}.
\end{equation}

\paragraph{Step 2 (Bounding the second term)}  
We use the inverse perturbation identity:
\[
\Sigma_{uu}^{-1} - \bigl(\Sigma_{uu}^{(0)}\bigr)^{-1}
= \Sigma_{uu}^{-1}\bigl(\Sigma_{uu}^{(0)} - \Sigma_{uu}\bigr)(\Sigma_{uu}^{(0)})^{-1}.
\]
Hence,
\begin{align*}
&\|\Sigma_{uu}^{-1} - (\Sigma_{uu}^{(0)})^{-1}\|_{\mathrm{op}}\\
&\leq \|\Sigma_{uu}^{-1}\|_{\mathrm{op}} \cdot \|\Sigma_{uu} - \Sigma_{uu}^{(0)}\|_{\mathrm{op}} \cdot \|(\Sigma_{uu}^{(0)})^{-1}\|_{\mathrm{op}}.
\end{align*}
From Lemma~\ref{lem:population-cov-decom}, 
\[
\|\Sigma_{uu} - \Sigma_{uu}^{(0)}\|_{\mathrm{op}} \;\leq\; C_u \, p_{\mathrm{flip}}.
\]
Moreover, $\|\Sigma_{uu}^{-1}\|_{\mathrm{op}} \leq 1/\lambda_0$ and $\|(\Sigma_{uu}^{(0)})^{-1}\|_{\mathrm{op}} \leq 1/\lambda_0$.  
Thus,
\begin{equation}
\|\Sigma_{uu}^{-1} - (\Sigma_{uu}^{(0)})^{-1}\|_{\mathrm{op}}
\;\leq\; \frac{C_u}{\lambda_0^2} \, p_{\mathrm{flip}}.
\end{equation}
Multiplying by $\|\Sigma_{xu}^{(0)}\|_{\mathrm{op}} \leq C_x$ gives
\begin{equation}
\|\Sigma_{xu}^{(0)}(\Sigma_{uu}^{-1} - (\Sigma_{uu}^{(0)})^{-1})\|_{\mathrm{op}}
\;\leq\; \frac{C_x C_u}{\lambda_0^2} \, p_{\mathrm{flip}}.
\end{equation}

\paragraph{Step 3 (Combining Bounds)}  
Combining both terms in~\eqref{eq:ideal-decomp}, we obtain
\[
\|L^* - L^{(0)}\|_{\mathrm{op}} \;\leq\; \Bigl(\frac{C_x}{\lambda_0} + \frac{C_x C_u}{\lambda_0^2}\Bigr)p_{\mathrm{flip}}=C_{L_2}p_{\text{flip}}
\]
Absorbing constants into $C_{L_2}$ yields the claimed result.
\end{proof}

\HighProbabilitySuppressionofArtifacts*
\begin{proof}[\textbf{Proof of Theorem~\ref{thm:1-High-Probability-Suppression-of-Artifacts}}]
According to previous lemmas, we have
\begin{align*}
    \|\widehat{L} - L^{(0)}\|_{\text{op}} &= \|\widehat{L} - L^*+L^*-L^{(0)}\|_{\text{op}}\\
    &\leq \|\widehat{L} - L^*\|_{\text{op}}+\|L^*-L^{(0)}\|_{\text{op}}\\
    &\leq C_{L_1}(\varepsilon+p_{\text{flip}})+C_{L_2}p_{\text{flip}}\\
    &\leq (C_{L_1}+C_{L_2})(\varepsilon + p_{\text{flip}})\\
    &= C_1(\varepsilon + p_{\text{flip}})
\end{align*}
Similarly, $ \|\widehat{b}-b^{(0)}\|_{\text{op}}\leq C_2(\varepsilon + p_{\text{flip}})$
\end{proof}

\SuppressionOfArtifactsInPrediction*

\begin{proof}
Denote the following items:
\begin{align*}
&s := P f(w),\,\Delta_{\rm new}:=Pg(z), \,v_{\rm new}:=s+\Delta_{\rm new},\\
&u_{\rm new}=:\mathrm{TopK}(v_{\rm new}), \ \delta := u_{\rm new}-I_{J_w}s
\end{align*}

We have $u_{\rm new}=I_{J_w}Pf(w)+\delta$.

Define the prediction error
\begin{align*}
\mathcal E &:= \big\|\widehat L u_{\rm new} + \widehat b - (L^{(0)}P_{J_w}P f(w) + b^{(0)})\big\|_2\\
 &= \big\|(\widehat L - L^{(0)})u_{\rm new} + L^{(0)}\delta + (\widehat b - b^{(0)})\big\|_2\\
&\le \|\widehat L - L^{(0)}\|_{\mathrm{op}}\|u_{\rm new}\|_2 + \|L^{(0)}\|_{\mathrm{op}}\|\delta\|_2 \\
&\quad+ \|\widehat b - b^{(0)}\|_2.
\end{align*}

From Theorem~\ref{thm:1-High-Probability-Suppression-of-Artifacts}, there exists a constant $C_1>0$ such that with high probability
\begin{align*}
&\|\widehat L - L^{(0)}\|_{\mathrm{op}} \le C_1(\varepsilon+p_{\mathrm{flip}}),\\
&\|\widehat b - b^{(0)}\|_2 \le C_1(\varepsilon+p_{\mathrm{flip}}).
\end{align*}

Write $u_{\rm new}=I_{J_w}s+\delta$. Then
\begin{align*}
\|u_{\rm new}\|_2 &\le \|I_{J_w}s\|_2 + \|\delta\|_2 \\
&\le \|I_{J_w}\|_{\mathrm{op}}\|P\|_{\mathrm{op}}\|f(w)\|_2 + \|\delta\|_2\\
&=\|P\|_{\mathrm{op}}\|f(w)\|_2 + \|\delta\|_2
\end{align*}

\paragraph{Step 1 (High-probability control of $\|\delta\|_2$)}
\begin{align*}
\delta &= u_{\text{new}}-I_{J_w}s\\
&=I_{J_\text{real}}P(f(w)+g(z))-I_{J_w}Pf(w)\\
&=(I_{J_\text{real}}-I_{J_w})Pf(w)+I_{J_\text{real}}Pg(z)\\
&=(I_{J_\text{real}}-I_{J_w})Pf(w)+I_{J_\text{real}}\Delta_{\text{new}}
\end{align*}

\begin{itemize}
  \item When event \textbf{no-flipping} occurs, $\delta = I_{J_{\text{real}}}\Delta_{\rm new}$ and thus $\|\delta\|_2 \le \|\Delta_{\rm new}\|_2$.
  \item When event \textbf{flipping} occurs, a conservative bound is $\|\delta\|_2 \lesssim \|\Delta_{\rm new}\|_2 + \|s\|_2$.
\end{itemize}
By the sub-Gaussian assumption on $\Delta_{\rm new}$ (Assumption~\ref{asm:subGaus}), there is $C_2>0$ such that for any $\eta\in(0,1)$, with probability at least $1-\eta$,
\begin{equation}\label{eq:Delta-tail-proof}
\|\Delta_{\rm new}\|_2 \le C_2\,\sigma_\Delta\sqrt{k+\log(1/\eta)}.
\end{equation}
Moreover, by Lemma~\ref{lem:step1-no_flip} margin assumption the flip probability satisfies the exponential-type bound
\[p_{\text{flip}} \leq 2k \exp\left(-\frac{\delta^2}{8\sigma^2}\right)
\]

Combining the two displays and taking union bounds, we obtain that with probability at least $1-\eta$,
\[
\|\delta\|_2 \le C_2\,\sigma_\Delta\sqrt{k+\log(1/\eta)} + C_3\,p_{\mathrm{flip}}\,\|s\|_2,
\]
for some constant $C_3>0$ (the second term accounts for the rare flips whose magnitude can scale with $\|s\|_2$).

\paragraph{Step 2 (High-probability control of $\|s\|_2$)}
\[
\|s\|_2 =P f(w)\le \|P\|_{\mathrm{op}}\,\|f(w)\|_2.
\]
According to Assumption~\ref{asm:bounded}, $\mathbb{E}[f(w)]\leq M_f$.

By Chebyshev-inequality, for the chosen confidence $\eta\in(0,1)$,
\[
\Pr\!\Big(\|f(w)\|_2 \ge \frac{M_f}{\sqrt{\eta}}\Big) \le \eta,
\]
hence with probability at least $1-\eta$,
\[
\|s\|_2 \le \|P\|_{\mathrm{op}}\,\frac{M_f}{\sqrt{\eta}}.
\]
Combining this with the previous bound on $\|\delta\|_2$ we get: with probability at least $1-\eta$,
\begin{equation}\label{eq:delta-final-proof}
\|\delta\|_2 \le C_2\,\sigma\sqrt{k+\log(1/\eta)} + C_3\,p_{\mathrm{flip}}\,\|P\|_{\mathrm{op}}\frac{M_f}{\sqrt{\eta}}.
\end{equation}

\paragraph{Conclusion}
Substitute Step 1, Step 2 and \eqref{eq:delta-final-proof} into the decomposition for $\mathcal E$. There exist constants $\widetilde C$ (depending on $C_1,C_2,C_3,\|L^{(0)}\|_{\mathrm{op}}$) such that, with probability at least $1-\eta$,
\[
\begin{split}
\mathcal E &\le C_1(\varepsilon+p_{\mathrm{flip}})\big(\|\|P\|_{\mathrm{op}}\|f(w)\|_2 + \|\delta\|_2\big) \\
&\quad + ||L^{(0)}||_{\mathrm{op}}\|\delta\|_2 + C_1(\varepsilon+p_{\mathrm{flip}}) \\
&\le C_1(\varepsilon+p_{\mathrm{flip}})\big[\|\|P\|_{\mathrm{op}}\frac{M_f}{\sqrt{\eta}} \\
&\quad + C_2\,\sigma\sqrt{k+\log(1/\eta)} + C_3\,p_{\mathrm{flip}}\,\|P\|_{\mathrm{op}}\frac{M_f}{\sqrt{\eta}}\big]  \\
&\quad+ ||L^{(0)}||_{\mathrm{op}}\big[C_2\,\sigma\sqrt{k+\log(1/\eta)}\\ &\quad+C_3\,p_{\mathrm{flip}}\,\|P\|_{\mathrm{op}}\frac{M_f}{\sqrt{\eta}}\big] + C_1(\varepsilon+p_{\mathrm{flip}})\\
&\leq \frac{M_f}{\sqrt{\eta}}\|P\|_{\mathrm{op}}\big[ C_1(\varepsilon+p_{\mathrm{flip}})+C_3p_{\mathrm{flip}}\\
&\quad +C_3||L^{(0)}||_{\mathrm{op}}p_{\mathrm{flip}}\big]\\
&\quad +\sigma\sqrt{k+\log(1/\eta)}\big[ C_1C_2(\varepsilon+p_{\mathrm{flip}})\\
&\quad+||L^{(0)}||_{\mathrm{op}}C2\big]+C_1(\varepsilon+p_{\mathrm{flip}})
\\
&\le \widetilde C\Big((\varepsilon+p_{\mathrm{flip}})\,\|\|P\|_{\mathrm{op}}\frac{M_f}{\sqrt{\eta}}
\;+\; \sigma\sqrt{k+\log(1/\eta)}\Big)
\end{split}
\]
\end{proof}

\section{Derivation for SAS-Induced Causal Effect}\label{sec:ATE-derivation}
\subsection{Gradient Perspective}\label{subsec:gradient}
We can also tell the causal effect of SAS by observing the gradient when parameters are updated.
Denote $\theta$ as the reward model parameters, $r(x,y)$ as the reward model, $x$ as the prompt, $y_c$ as the chosen response, $y_r$ as the rejected responses, $s_c$ as the SAS score of chosen response, $s_r$ as the SAS score of rejected response.

Now we derive the gradients. Denote $d_i=k \cdot (s_{ic} - s_{ir})$
\begin{align*}
\frac{\partial L_{SAS}}{\partial y_{ic}}&=\sigma(y_{ic}-y_{ir}+d)-1,\\
\frac{\partial L_{SAS}}{\partial y_{ir}}&=-\sigma(y_{ic}y_{ir}+d)+1\\
\frac{\partial L_{SAS}}{\partial \theta}&=\sum_i \frac{\partial L_{SAS}}{\partial y_{ic}}\frac{\partial y_{ic}}{\partial\theta}+\frac{\partial L_{SAS}}{\partial y_{ir}}\frac{\partial y_{ir}}{\partial\theta}\\
&=\sum_i [\sigma(y_{ic}-y_{ir}+d)-1][\frac{\partial y_{ic}}{\partial\theta}-\frac{\partial y_{ir}}{\partial\theta}]\\
\frac{\partial L}{\partial \theta}&=\sum_i [\sigma(y_{ic}-y_{ir})-1][\frac{\partial y_{ic}}{\partial\theta}-\frac{\partial y_{ir}}{\partial\theta}]\\
\end{align*}

When the human preference are aligned with SAS score, i.e the chosen response is more related to prompt intention. Then $SAS(x,y_{ic})<SAS(x,y_{ir})$, $|\sigma(y_{ic}-y_{ir})-1|<|\sigma(y_{ic}-y_{ir}+d)-1|$, the reward model trained with SAS score will be updated more aggressively. On contrast, when the human preference are conflicted with SAS score, $SAS(x,y_{ic})>SAS(x,y_{ir})$, $|\sigma(y_{ic}-y_{ir})-1|>|\sigma(y_{ic}-y_{ir}+d)-1|$, the reward model trained with SAS score will be updated more merely.

This observation fits our goal perfectly. If the human preference are aligned with SAS, indicating that there is not much unintentional spurious favor in human labels, then we can update more in this correct direction. Instead, if the human preference are conflicted with SAS, it is possible that there are some prompt-unrelated artifacts in human label, thus we should slow our steps in this direction.

\subsection{ATE Perspective}\label{sub:ATE}
Recall the notations in reward model training:

\begin{align*}
\hat{r}_n(x,y)&=\arg\max_r\sum_i\log \sigma(r_{ic} - r_{ir}),\\
\hat{r}_{nSAS}(x,y)&=\arg\max_r\sum_i \log\sigma \big((r_{ic} - r_{ir}) \\
&\quad+ k \cdot (s_{ic} - s_{ir})\big)
\end{align*}

\begin{proposition}\label{prop:r-rSAS}
Denote $SAS(x,y)$ as $s(x,y)$. By simple equivalent substitution, we can derive:
\[
\hat{r}_n(x,y)-\hat{r}_{nSAS}(x,y)=ks(x,y)
\]
\end{proposition}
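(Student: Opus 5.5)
The plan is a change of variables in the Bradley--Terry objective. I will treat the maximization over $r$ as ranging over a function class $\mathcal{R}$ of reward functions $r(x,y)$. I assume $\mathcal{R}$ is closed under the shift $r\mapsto r+k\,s$, where $s(x,y)=\text{SAS}(x,y)$ comes from the frozen decoder $(\widehat L,\widehat b)$. This holds trivially in the nonparametric setting implicit in the $\arg\max_r$ notation. Define the bijection $T:\mathcal{R}\to\mathcal{R}$, $T(r)=r+k\,s$, with inverse $T^{-1}(r')=r'-k\,s$.

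The key step is to compare the SAS objective evaluated at $r$ with the vanilla objective evaluated at $T(r)$. Writing $r'=T(r)$, each summand of the SAS objective satisfies
\begin{equation*}
\log\sigma\big((r_{ic}-r_{ir})+k(s_{ic}-s_{ir})\big)=\log\sigma\big(r'_{ic}-r'_{ir}\big),
\end{equation*}
because $r'_{ic}-r'_{ir}=(r_{ic}+ks_{ic})-(r_{ir}+ks_{ir})$. Hence $-\mathcal{L}_{\text{SAS}}(r)=-\mathcal{L}_{\text{vanila}}(T(r))$ for every $r\in\mathcal{R}$. Since $T$ is a bijection of $\mathcal{R}$, maximizing the left-hand side over $r$ is the same as maximizing the right-hand side over $r'=T(r)$. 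So $\arg\max_r[-\mathcal{L}_{\text{SAS}}]=T^{-1}\big(\arg\max_{r'}[-\mathcal{L}_{\text{vanila}}]\big)$, which gives $\hat r_{nSAS}=\hat r_n-k\,s$. Rearranging yields $\hat r_n(x,y)-\hat r_{nSAS}(x,y)=k\,s(x,y)$, as claimed.

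The main obstacle is well-definedness, because the Bradley--Terry maximizer is not unique. The objective depends on $r$ only through the differences $r(x,y_c)-r(x,y_r)$ within each prompt. Any maximizer can therefore be shifted by an arbitrary prompt-dependent constant $c(x)$, and the argmax may also fail to be attained, for example on separable data. I would handle this by interpreting both $\arg\max$'s as equivalence classes modulo functions of $x$ alone. Equivalently, I would fix a normalization such as $\mathbb{E}[r(x,y)\mid x]=0$ and apply the same normalization to $s$, or state the identity up to $c(x)$.

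I would then note that the ATE computation in Section~\ref{3.4} uses only the within-prompt difference $\hat r(x,y_c)-\hat r(x,y_r)$, so the constant $c(x)$ cancels. The consequence $ATE=k\,\mathbb{E}[-s(x,y_c)+s(x,y_r)]$ is therefore unaffected. If a parametric class such as a neural reward head is intended, I would add the closure assumption explicitly, since without it the identity holds only approximately.
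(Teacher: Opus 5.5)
Your argument is correct and is essentially the paper's proof. The paper substitutes $f=r+ks$ and identifies the SAS maximizer plus $ks$ with the vanilla maximizer, which is exactly your bijection $T(r)=r+ks$; your per-summand identity also correctly uses $ks(x_i,y_{ir})$ where the paper's display has the typo $ks(x_i,y_{ic})$. Your extra caveats (closure of the reward class under the shift, non-uniqueness up to prompt-dependent constants, possible non-attainment) are points the paper skips, and reading the result as the set-level correspondence your bijection proves is the right way to make it rigorous.
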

\begin{proof}
Let $r(x,y)+ks(x,y)=f(x,y)$, then
\begin{align*}
&\hat{r}_{nSAS}(x,y)+ks(x,y) \\
=&\arg\max_{r+ks}\sum_i\big[\log\sigma[(r(x_i,y_{ic})+ks(x_i,y_{ic}))\\
&\quad-(r(x_i,y_{ir})+ks(x_i,y_{ic}))]\big]\\
=&\arg\max_f\sum_i\big[\log\sigma[f(x_i,y_{ic})-f(x_i,y_{ir})]\big]\\
=&\arg\max_r\sum_i\big[\log\sigma[r(x_i,y_{ic})-r(x_i,y_{ir})]\big] \\
=&\hat{r}_n(x,y)\\
\end{align*}
Thus, $\hat{r}_n(x,y)-\hat{r}_{nSAS}(x,y)=ks(x,y)$.
\end{proof}
\newpage
\section{Complete Experimental Results}\label{Complete Experimental Results}

\subsection{Prompt Decoder Experiments}

In this section, we provide additional details on the training, evaluation, and SAS output of our prompt decoders.

\paragraph{Data Augmentation for Prompt Decoder Training}

To train our prompt decoder with improved semantic generalization, we augment each prompt with multiple diverse responses drawn from both human-written and model-generated sources. All evaluation datasets are provided in the supplementary materials and can be used for further analyses.

The following is our rewriting prompt and a specific result: 
\begin{tcolorbox}[colback=gray!5,colframe=gray!40!black,title=Rewriting Prompt, breakable]
\textit{Rewrite the assistant's answer so that all factual content, stance, and conclusions remain the same. Do not introduce new factual claims. You may add extra sentences or rephrase for style, politeness, or verbosity if requested. When adding extra text, ensure it is stylistic (e.g., compliments, polite preambles, transitions) and does not alter the factual meaning.}
\end{tcolorbox}

\begin{tcolorbox}[colframe=black!50, title=Rewriting Example, fonttitle=\bfseries, breakable]

\textbf{Prompt:}\\
\textit{Are tacos a Mexican food dish?}\\[0.8em]
\begin{tcolorbox}[colback=green!10, colframe=green!40!black, boxrule=0.4pt, breakable]
\textbf{Original Chosen Response (SAS$=0.7713$):}\\
Tacos are a dish that originated in Mexico, although they’re now widely popular in other cultures. They’re also called burritos in some places, and there are quite a few regional variants. Tacos are traditionally served with shredded lettuce, tomato, onions, and a mild sauce, but they may also be wrapped in a variety of ways and served with a variety of additional ingredients, such as shredded cheese, guacamole, or refried beans.
\end{tcolorbox}

\begin{tcolorbox}[colback=red!10, colframe=red!40!black, boxrule=0.4pt, breakable]
\textbf{Rewritten Chosen Response (SAS$=0.7767$):}\\
Absolutely! Tacos are indeed a delightful dish that originated in Mexico, and their popularity has truly transcended borders, captivating the hearts and palates of people around the world. It’s fascinating how they are sometimes referred to as burritos in certain regions, showcasing the rich diversity of culinary traditions. Traditionally, tacos are beautifully adorned with shredded lettuce, tomatoes, onions, and a mild sauce, but their versatility is truly remarkable! They can be wrapped in various ways and accompanied by an array of scrumptious ingredients like shredded cheese, guacamole, or refried beans. It’s no wonder that tacos have become a beloved staple in so many cultures!
\end{tcolorbox}

\end{tcolorbox}

\paragraph{Per-Dimension Evaluation Results}
The separate accuracy of the prompt decoder on helpfulness, math, and safety subsets is shown in Figure~\ref{fig:prompt-decoder-accuracy-seperate}, highlighting domain-specific patterns and strengths. Apart from MSE loss, we also train the prompt decoder with cosine-similarity loss. The prompt decoder's separate accuracy in three domains is shown in Figure~\ref{fig:prompt-decoder-accuracy-seperate-cos}, and the average accuracy is shown in Figure~\ref{fig:avg-two-pd-overall-results-cosine}. Since the prompt decoder trained with MSE loss in Figure~\ref{fig:prompt-decoder-accuracy-overall-rewrite} outperformed the one trained with cosine similarity in Figure~\ref{fig:avg-two-pd-overall-results-cosine}, we adopt the MSE-trained decoder to compute Semantic Alignment Scores (SAS) for downstream reward model training.

\begin{figure*}[t]
\centering

\begin{subfigure}{0.8\linewidth}
    \centering
    \includegraphics[width=\linewidth]{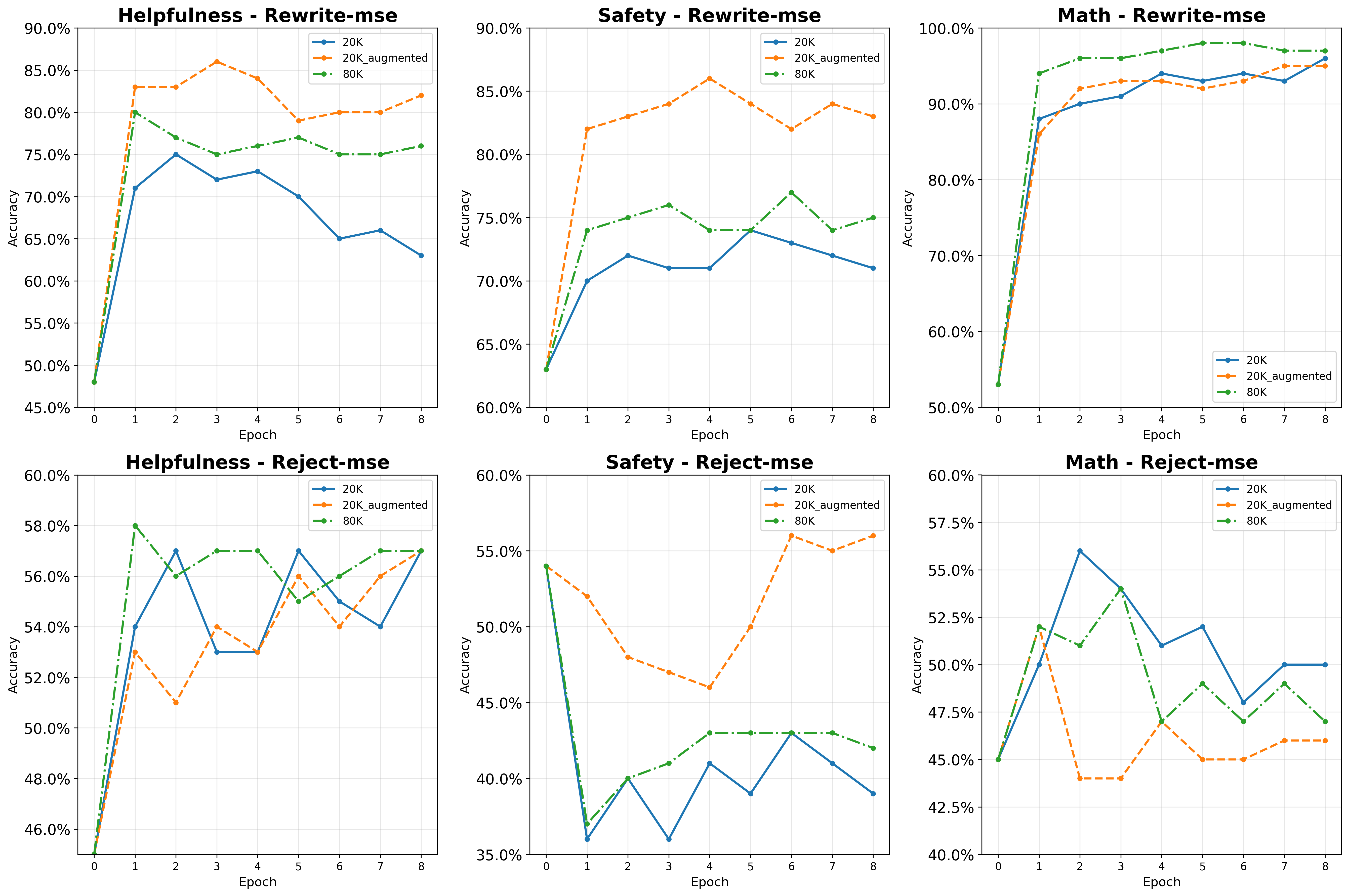}
    \caption{Prompt decoder accuracy with reconstruction (MSE) loss.}
    \label{fig:prompt-decoder-accuracy-seperate}
\end{subfigure}

\vspace{0.6em}

\begin{subfigure}{0.8\linewidth}
    \centering
    \includegraphics[width=\linewidth]{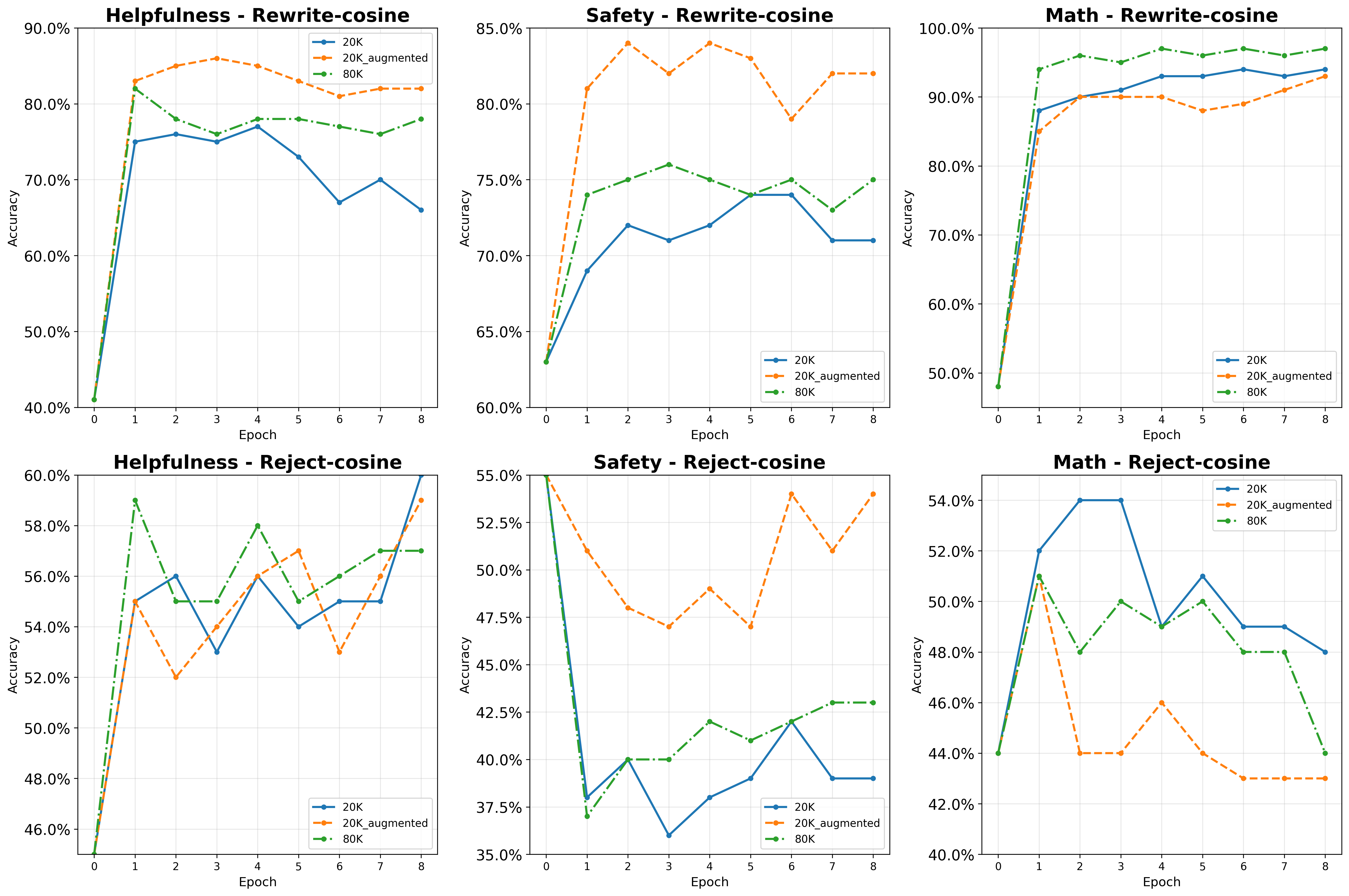}
    \caption{Prompt decoder accuracy with cosine similarity loss.}
    \label{fig:prompt-decoder-accuracy-seperate-cos}
\end{subfigure}

\caption{
Accuracy curves of the prompt decoder between rewrite and reject groups across helpful, safety, and mathematical reasoning domains.
}
\label{fig:prompt-decoder-accuracy-combined}
\end{figure*}

\begin{figure}[h]
    \centering
    \begin{subfigure}{0.45\textwidth} 
        \centering
        \includegraphics[width=7cm,height=4.5cm]{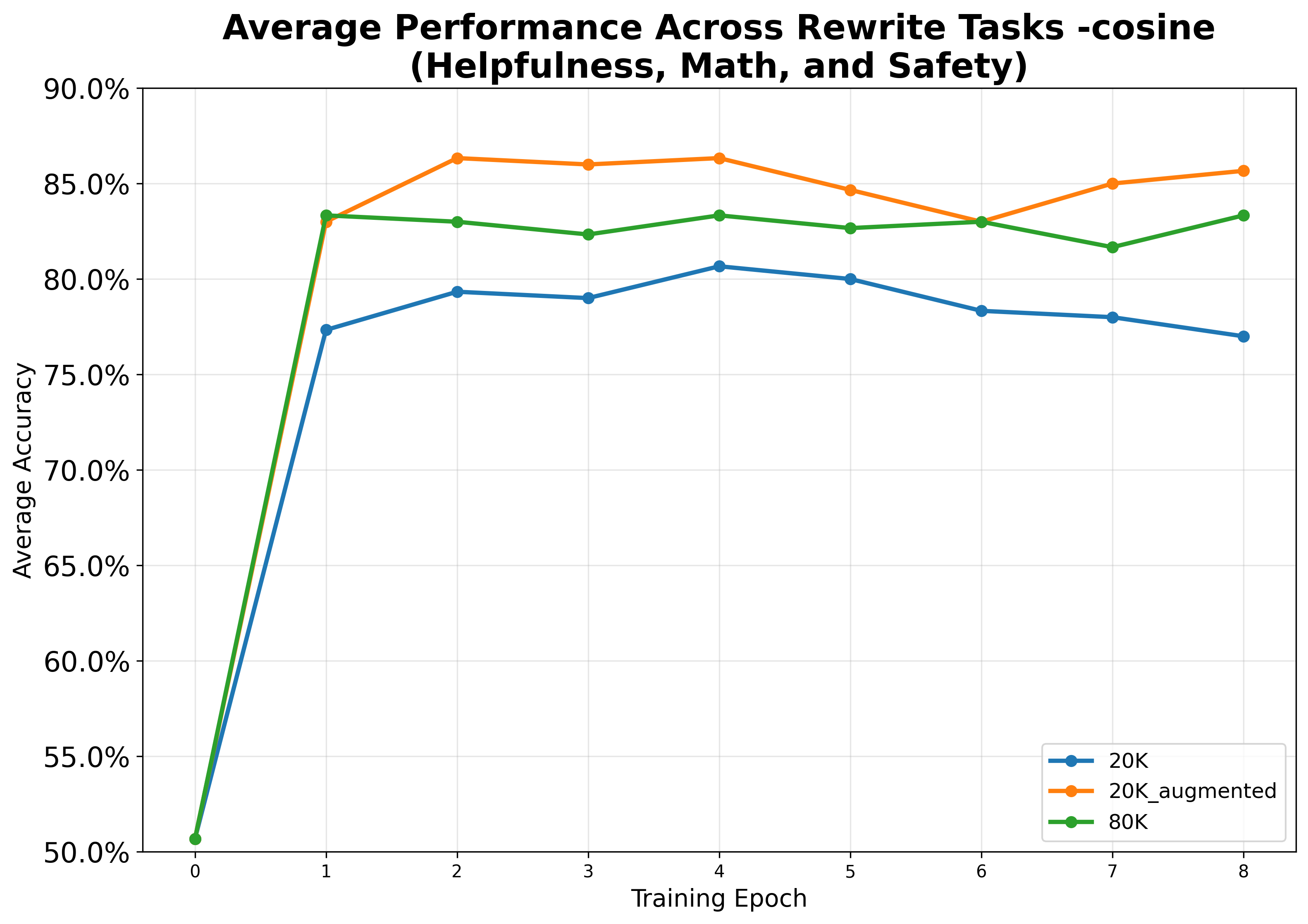}
        \caption{Average accuracy of the prompt decoder on the chosen-vs-rewrite task across helpful, math, and safety domains. 
    Augmented training (20K\_augmented) yields the best performance, surpassing both unaugmented 20K and 80K data.}
        \label{fig:prompt-decoder-accuracy-overall-rewrite-cosine}
    \end{subfigure}
    \hfill
    \begin{subfigure}{0.45\textwidth}
        \centering
        \includegraphics[width=7cm,height=4.5cm]{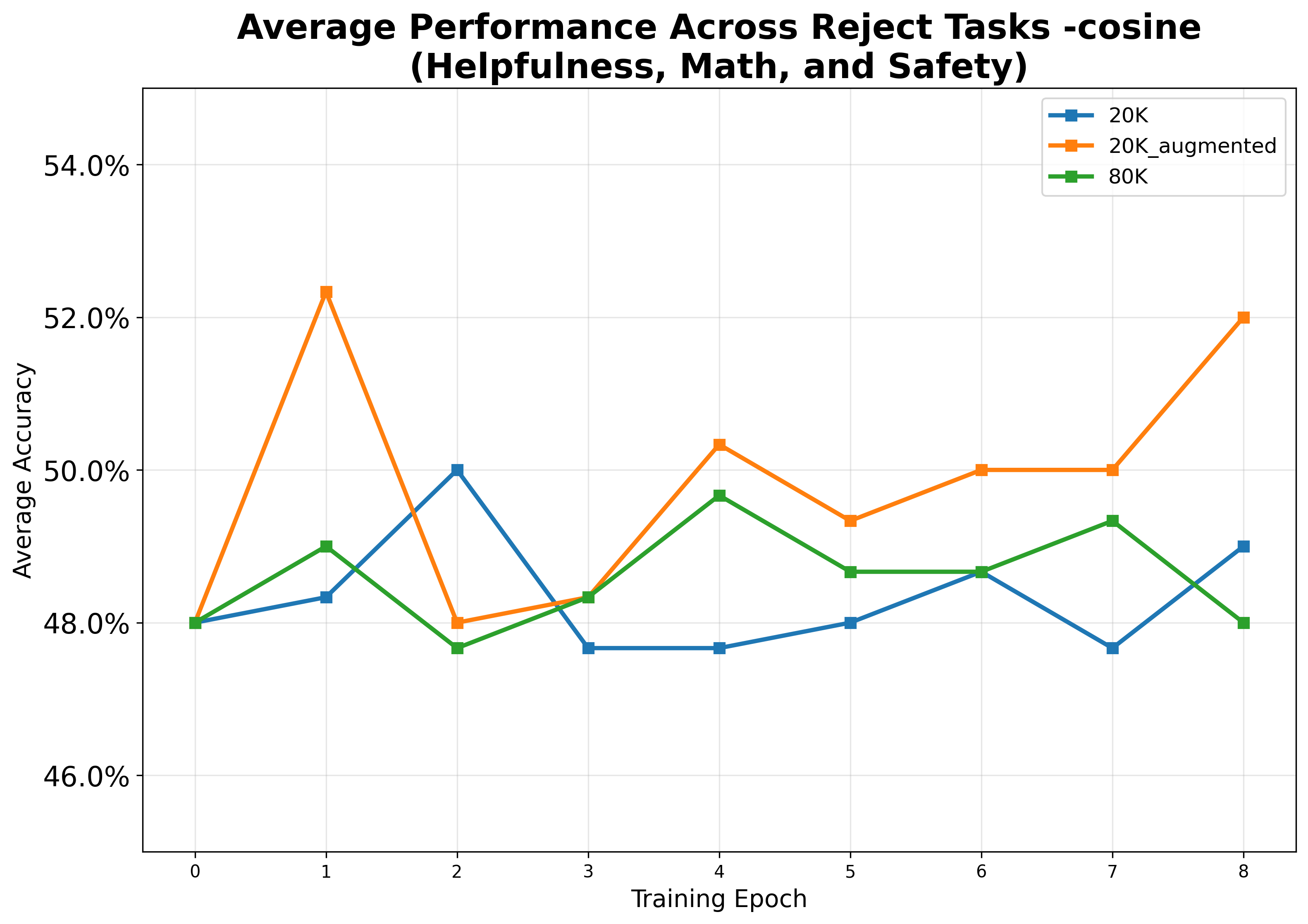}
        \caption{Average accuracy of the prompt decoder on the chosen-vs-reject task. Performance remains near random guess (50\%) across all training regimes, indicating that SAS captures a signal orthogonal to human preference labels.}
        \label{fig:prompt-decoder-accuracy-overall-reject-cosine}
    
    \end{subfigure}
    \caption{Average Accuracy Curve of Prompt Decoder}
    \label{fig:avg-two-pd-overall-results-cosine}
\end{figure}

\paragraph{SAS Score Distribution on Reward Model Training Data}

The distributions of Semantic Alignment Scores (SAS) computed on the 70K reward model training set are shown in Figure~\ref{fig:sas-distribution-3part}, including those of the chosen responses, the rejected responses, and their pairwise differences. This further indicates that the prompt decoder captures a signal that is complementary to human-labeled preferences, rather than simply replicating them, and is thus more robust to unintentionally introduced human noise.
\begin{figure}[h]
\begin{center}
\includegraphics[width=1.0\linewidth]{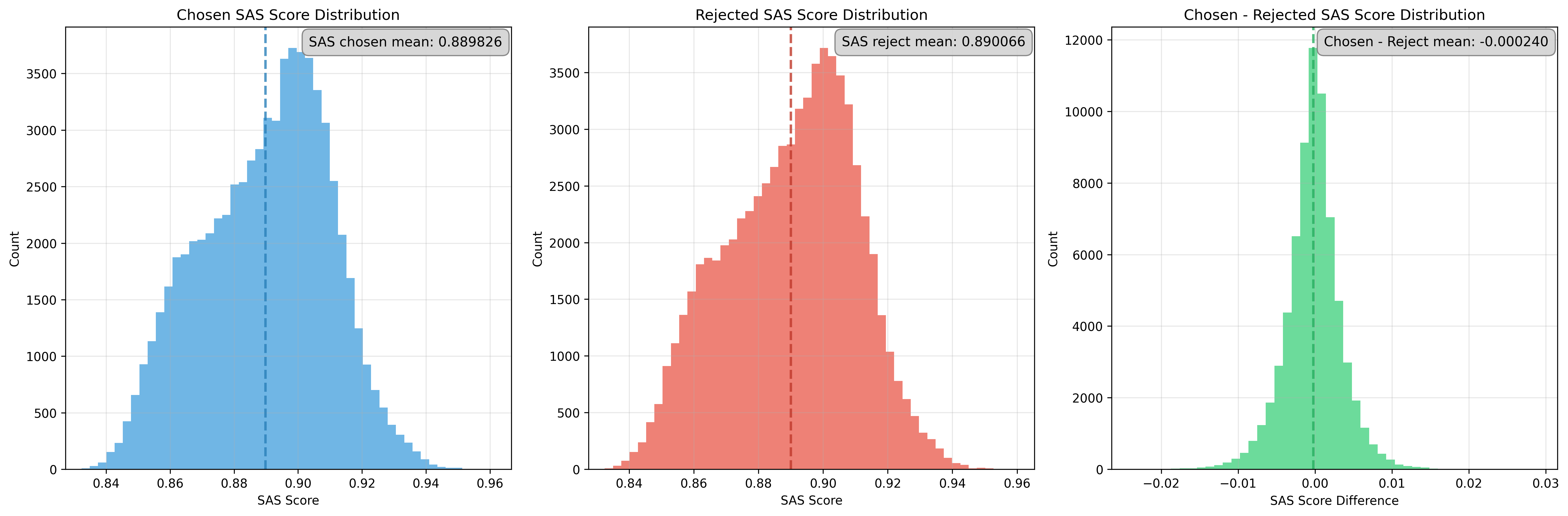}
\end{center}
\caption{
Distribution of Semantic Alignment Scores (SAS) among chosen responses, rejected responses and their difference on the 70K training pairs. 
}
\label{fig:sas-distribution-3part}
\end{figure}

\subsection{Ablation Study on Prompt Decoder}
In this section, we present additional ablation studies to validate our design choices in prompt decoder training and to assess the necessity of specific modules.

\paragraph{Ablation Study on Data Augmentation} Notice that the main paper already includes an ablation on training data construction in Table~\ref{tab:decoder-accuracy}, where we compare the 20K, 80K, and 20K\_augmented settings. That result shows that simply increasing data size does not provide the same benefit as augmentation with stylistically diverse rewrites, supporting the necessity of our data augmentation strategy. Here, we further study two other aspects of the prompt decoder: the use of SAE representations and the choice of representation layer.

\paragraph{Ablation Study on SAE}
To examine the effect of SAE, we compare the performance of prompt decoders on the chosen–vs.–reject and chosen–vs.–rewrite tasks when using SAE representations versus directly using response embedding vectors. As shown in Table~\ref{tab:decoder-accuracy-SAE-or-not}, incorporating SAE consistently improves the performance of the prompt decoder across both tasks.
\begin{table*}[t]
\begin{center}
\begin{tabular}{l|cccc|cccc}
\toprule
\textbf{Prompt Decoder Input} 
& \multicolumn{4}{c|}{\textbf{Chosen vs Rewrite} (↑)} 
& \multicolumn{4}{c}{\textbf{Chosen vs Reject} (→50\%)} \\
& Helpful & Math & Safety & Overall & Helpful & Math & Safety & Overall \\
\midrule
Response Embedding & 51.0 & 92.0 & 81.0 & 74.7 & 58.0 & 57.0 & 65.0 & 60.0 \\
SAE Representation & \textbf{86.0} & \textbf{93.0} & \textbf{84.0} & \textbf{87.7} & \textbf{53.0} & \textbf{47.0} & \textbf{46.0} & \textbf{48.7} \\
\bottomrule
\end{tabular}
\vspace{0.5em}
\caption{
Accuracy (\%) of prompt decoders on the \textbf{Chosen vs Rewrite} and \textbf{Chosen vs Reject} tasks,
evaluated at the best epoch for each model across helpful, math, and safety domains. In the rewritten dataset, each instance is randomly assigned a rewrite objective: 50\% are rewritten to be longer, and 50\% are rewritten to be more sycophantic.
}
\label{tab:decoder-accuracy-SAE-or-not}
\end{center}
\end{table*}

\paragraph{Layer Selection for SAE Representation}
We further examine the effect of layer selection for the prompt decoder by comparing SAE representations from Layers 10, 14, and 18 of LLaMA-3-8B. These layers are all centered around the middle of the language model, where semantic information is generally expected to be most accessible. As shown in Figure~\ref{fig:layerwise}, all three choices yield consistently strong performance on the chosen-vs.-rewrite task, while remaining close to chance level on the chosen-vs.-reject task. This suggests that SAS is not overly sensitive to the exact layer choice, as long as the representation is taken from the middle part of the model. Among them, we use Layer 14 embeddings in all downstream experiments.
\begin{figure*}[t]
    \centering
    \includegraphics[width=0.32\textwidth]{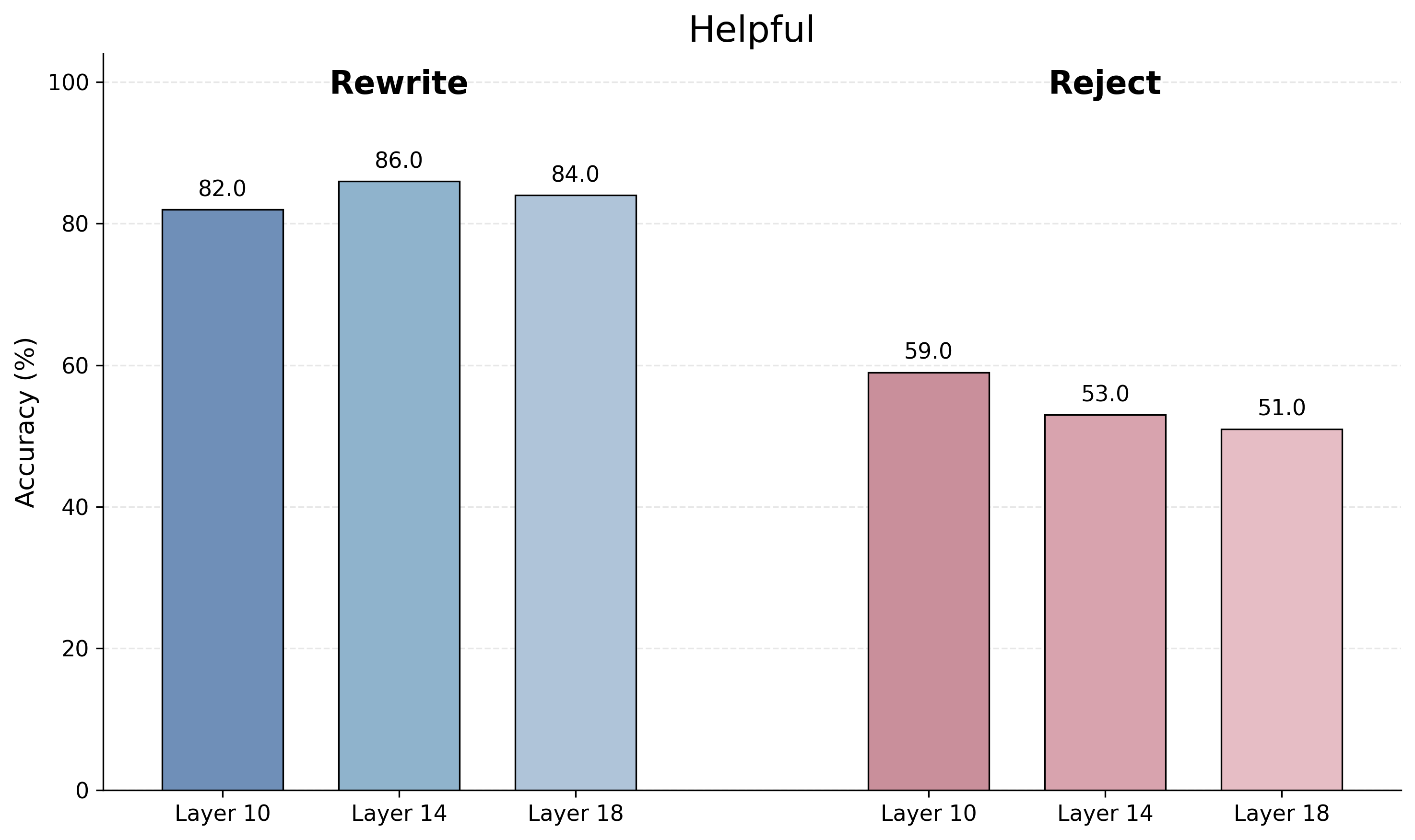}
    \hfill
    \includegraphics[width=0.32\textwidth]{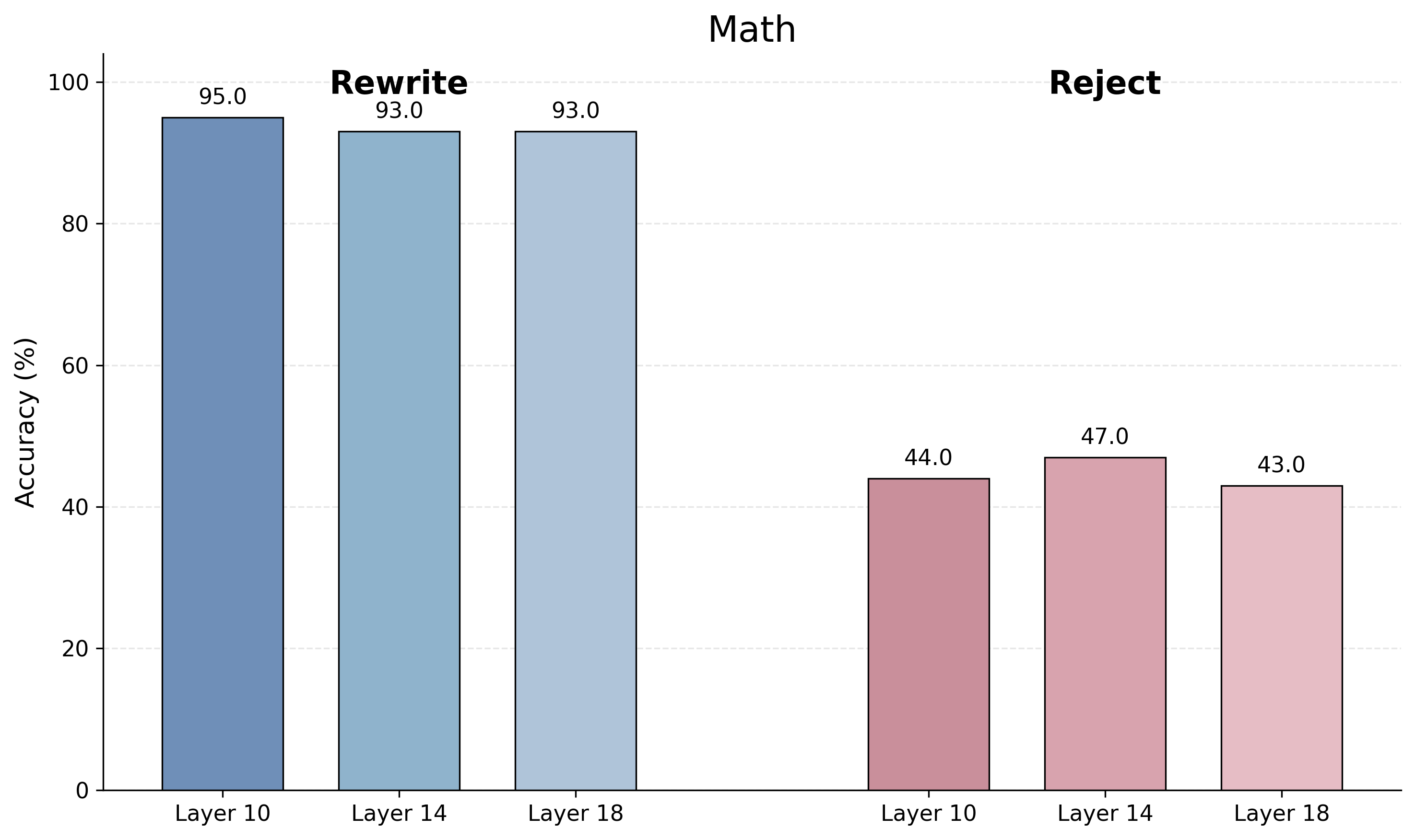}
    \hfill
    \includegraphics[width=0.32\textwidth]{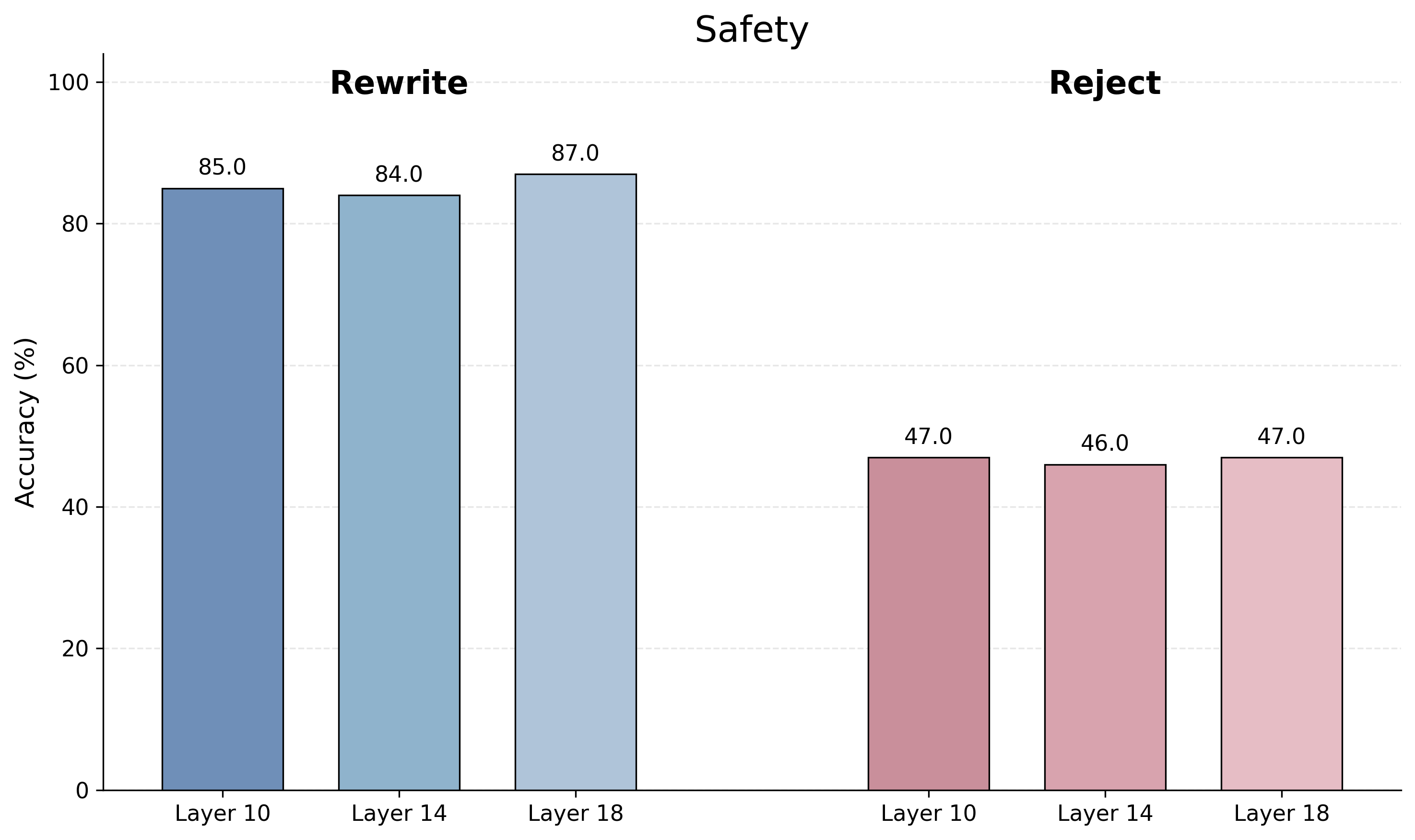}
    \caption{Layer-wise comparison of prompt decoder performance across three SAE layers. For each domain, the left group shows the chosen-vs-rewrite accuracy and the right group shows the chosen-vs-reject accuracy.}
    \label{fig:layerwise}
\end{figure*}

\subsection{Reward Model Training}

This section covers implementation details and extended results for our SAS-regularized reward model training and its baselines.

\paragraph{Hyperparameter Tuning}

Firstly, since $\mathcal{L}_{\text{SAS}} = -\sum_i \log \sigma\big((y_{i,c} - y_{i,r}) + k \cdot (s_{i,c} - s_{i,r})\big)$, the penalty coefficient $k$ must be carefully selected for effective reward modeling. We report RewardBench performance for Vanilla RM and CARP models with different values of $k$ under both 2B and 9B settings in Table~\ref{tab:rewardbench-all-dif-k}. Empirically, we find that the optimal value of $k$ is $3.2\times10^{4}$ for the 2B model and $6.4\times10^{4}$ for the 9B model.

\begin{table*}[h]
\begin{center}
\begin{minipage}{\linewidth}
\centering
\textbf{(a) Gemma-2-2B-it }\\[0.3em]
\begin{tabular}{l|cccccc}
\toprule
Model & Chat & Chat-Hard & Safety & Reasoning & Avg. & Weighted Avg. \\
\midrule
Vanilla RM & 97.77  & 54.82 & \textbf{83.24} & 66.18 & 75.50 & 72.46 \\
CARP (\(k = 4.0\mathrm{e}{3}\)) & \textbf{98.04} & 54.82 & 81.62 & 65.41 & 74.97 & 71.73 \\
CARP (\(k = 1.6\mathrm{e}{4}\)) & 97.21 & 58.11 & 79.73 & 68.83 & 75.97 & 73.30 \\
CARP (\(k = 3.2\mathrm{e}{4}\)) & 96.93 & 58.99 & 79.05 & 71.56 & \textbf{76.63} & 74.54 \\
CARP (\(k = 6.4\mathrm{e}{4}\)) & 93.30 & \textbf{62.72} & 77.43 & \textbf{72.47} & 76.48 & \textbf{74.70} \\
\bottomrule
\end{tabular}
\end{minipage}

\vspace{1em}

\begin{minipage}{\linewidth}
\centering
\textbf{(b) Gemma-2-9B-it }\\[0.3em]
\begin{tabular}{l|cccccc}
\toprule
Model & Chat & Chat-Hard & Safety & Reasoning & Avg. & Weighted Avg. \\
\midrule
Vanilla RM & 96.37 & 63.37 & \textbf{89.73} & 82.88 & 83.09 & 83.22 \\
CARP (\(k = 4.0\mathrm{e}{3}\)) & \textbf{96.65} & 61.40 & 89.59 & 83.16 & 82.70 & 83.04 \\
CARP (\(k = 1.6\mathrm{e}{4}\)) & 96.37 & 62.94 & 89.32 & 88.26 & 84.22 & 85.63 \\
CARP (\(k = 3.2\mathrm{e}{4}\)) & 96.09 & 66.23 & 89.50 & 88.40 & 85.04 & 86.20 \\
CARP (\(k = 6.4\mathrm{e}{4}\)) & 94.69 & \textbf{68.86} & 88.24 & \textbf{89.87} & \textbf{85.42} & \textbf{86.83} \\
\bottomrule
\end{tabular}
\end{minipage}

\vspace{0.8em}

\caption{
\textbf{RewardBench accuracy (\%) of reward models across four evaluation categories.} 
CARP (Ours) denotes the SAS-regularized reward model with best-performing \(k\) value.
Each subtable corresponds to a different model scale. 
The weighted average reflects the overall proportion of correctly ranked preference pairs across all subsets. 
Note: RRM's weighted average is not reported in the original paper.
}
\label{tab:rewardbench-all-dif-k}
\end{center}
\end{table*}

Secondly, we need to select an appropriate safety threshold $\tau$. Overly small values suppress the informativeness of SAS scores, whereas overly large values make the constraint overly permissive, increasing false negatives and degrading safety performance. We examine the effect of $\tau$ under $k=3.2\times10^{4}$ and $k=1.6\times10^{4}$. As shown in Table~\ref{tab:tau-testing}, incorporating thresholding with $\tau = 0.005$ consistently improves performance over the no-threshold baseline ($\tau = 0$) on the \textit{Safety} dimension. 

\begin{table*}[t]
\renewcommand{\arraystretch}{1.2}
\begin{center}
\begin{tabular}{l|cccccc}
\toprule
\textbf{Threshold($\tau$)} & Chat & Chat-Hard & Safety & Reasoning & Avg. & Weighted Avg. \\
\midrule
0.0 & 96.09 & 62.06 & 77.97 & 70.09 & 76.55 & 73.94 \\
0.005 & 96.93 & 58.99 & \textbf{79.05} & 71.56 & 76.63 & 74.54 \\
0.01& 96.09 & 59.43 & 77.57 & 69.46 & 75.64 & 73.13 \\
1.0 & 93.58 & 55.26 & 65.68 & 64.43 & 69.74 & 66.83 \\
\bottomrule
\end{tabular}
\end{center}
\vspace{0.5em}
\caption{
RewardBench accuracy (\%) comparison of best CARP 2B-model with different SAS thresholding. We observe that the model reaches highest safety accuracy when $\tau = 0.005$.
}
\label{tab:tau-testing}
\end{table*}

\paragraph{Spurious Correlation Evaluation}

We describe the rewriting strategy to construct the evaluation data sets for spurious correlation testing in Table~\ref{tab:rewriting-instructions} followed by a specific example of the rewriting results~\ref{Kardashian}. The detailed evaluation of spurious correlations of the 9B models is shown in Table~\ref{tab:rewardbench2-9b-3rewrites-full}.

\begin{table*}[h]
\renewcommand{\arraystretch}{1.25}
\normalsize
\begin{center}
\begin{tabular}{>{\raggedright\arraybackslash}p{0.2\linewidth}
                >{\raggedright\arraybackslash}p{0.2\linewidth}
                >{\raggedright\arraybackslash}p{0.2\linewidth}
                >{\raggedright\arraybackslash}p{0.2\linewidth}}
\toprule
\textbf{Rewriting Type} & \textbf{Rewrite 1} & \textbf{Rewrite 2} & \textbf{Rewrite 3} \\
\midrule
\textbf{Rewriting Instruction} &  
Rewrite the response to be longer while preserving meaning and style. & 
Make the previous response more concise without losing core information. & 
Expand the response with stylistic elaboration, allowing mild topical drift. \\
\bottomrule
\end{tabular}
\end{center}
\caption{
Rewriting instructions used to construct response variants for robustness testing. 
Each instruction targets different attributes such as response length or topicality.
}
\label{tab:rewriting-instructions}
\end{table*}

\begin{tcolorbox}[colframe=black!50, title=Rewriting Example, fonttitle=\bfseries, breakable]\label{Kardashian}

\textbf{Prompt:}
\textit{Why are the Kardashians so famous?}\\[1.5em]
\end{tcolorbox}
\begin{tcolorbox}[colback=gray!3, colframe=black!30, boxrule=0.4pt, breakable]
\textbf{Rewrite 1:} The Kardashian family has undeniably emerged as a prominent and influential cultural phenomenon within contemporary American society. They are most notably represented by the reality television show Keeping Up with the Kardashians, which has garnered widespread critical acclaim and achieved record-breaking ratings throughout its various seasons and spin-offs. This show has not only captivated audiences but has also significantly shaped popular culture and social dynamics. Beyond their television presence, the Kardashian family has successfully ventured into various other fields, with multiple members pursuing thriving careers in fashion, entertainment, social media, and other diverse arenas. Their entrepreneurial spirit and ability to capitalize on their fame have led to the creation of numerous brands and products that resonate with fans and consumers alike. Overall, the Kardashian family serves as a striking representation of the increasing prevalence of celebrity and media culture in contemporary American society. Their influence underscores the importance of images and narratives in modern cultural discourse, illustrating how celebrity status can shape public perception and societal trends. 
\end{tcolorbox}

\begin{tcolorbox}[colback=gray!3, colframe=black!30, boxrule=0.4pt, breakable]
\textbf{Rewrite 2:} The Kardashian family has become a significant cultural phenomenon in contemporary American society, primarily through their reality show Keeping Up with the Kardashians, which has received critical acclaim and achieved record-breaking ratings. This show has not only captivated audiences but also shaped popular culture and social dynamics. Beyond television, the family has successfully ventured into various fields, with members building thriving careers in fashion, entertainment, and social media. Their entrepreneurial spirit has led to numerous brands and products that resonate with fans and consumers. Overall, the Kardashian family exemplifies the rise of celebrity and media culture in modern America, highlighting how celebrity status can influence public perception and societal trends.
\end{tcolorbox}

\begin{tcolorbox}[colback=gray!3, colframe=black!30, boxrule=0.4pt, breakable]
\textbf{Rewrite 3:} The Kardashian family has undeniably emerged as a prominent and influential cultural phenomenon within contemporary American society, captivating audiences with their unique blend of glamour, drama, and entrepreneurial spirit. Most notably represented by the reality television show Keeping Up with the Kardashians, which has garnered widespread critical acclaim and achieved record-breaking ratings across its various seasons, the family's impact extends far beyond the confines of the small screen. Each member of the family has carved out a niche for themselves, pursuing successful careers in diverse fields such as fashion, entertainment, and social media, thereby illustrating the multifaceted nature of their influence. This phenomenon raises intriguing questions about the evolving landscape of celebrity culture and the ways in which images and narratives shape public perception. Moreover, one might consider how the rise of social media platforms has transformed the way we engage with celebrities, blurring the lines between public and private life, and fostering a culture of constant connectivity. It’s fascinating to think about how this shift has not only affected the Kardashians but also countless other public figures who navigate the complexities of fame in the digital age. Overall, the Kardashian family's prominence serves as a reflection of the increasing prevalence of celebrity and media culture in contemporary American society, highlighting the significance of visual storytelling and the narratives we construct around public personas. What does this say about our collective values and the way we consume media?
\end{tcolorbox}

\begin{table}[t]
\begin{center}
\begin{tabular}{l|c|c}
\toprule
\textbf{Model (2B)} 
& \multicolumn{1}{c}{\textbf{R1 vs R2}} 
& \multicolumn{1}{c}{\textbf{R1 vs R3}(↑) }\\
\midrule
Vanilla RM & 29.8 & 39.8 \\
CARP(Ours)  & \textbf{48.2} & \textbf{56.4}\\
\bottomrule
\end{tabular}
\vspace{0.5em}
\caption{
Accuracy (\%) of reward models on the \textbf{Rewrite1 vs Rewrite2} and \textbf{Rewrite1 vs Rewrite3} tasks,
evaluated at the best epoch for each model in the \textbf{helpful} domain. 
}
\label{tab:rewardbench2-9b-3rewrites-full}
\end{center}
\end{table}
\section{AI Usage Statement}

In preparing this manuscript, large language models (LLMs) were used solely as auxiliary tools for improving the clarity and readability of the text. Specifically, LLMs were employed to correct grammatical errors, refine phrasing, and polish the language style to ensure that the writing is more formal and consistent with academic standards.

Importantly, LLMs were not used for research ideation, retrieval or discovery of related work, data analysis, or generation of scientific content. All conceptual contributions, methodological designs, experimental implementations, and substantive writing were conducted entirely by the authors. The authors take full responsibility for the final content of the paper.

\end{document}